\crefname{section}{Sec.}{Secs.}
\Crefname{section}{Section}{Sections}
\Crefname{table}{Table}{Tables}
\crefname{table}{Tab.}{Tabs.}
\newtheorem{proposition}{Proposition}
\newcommand{\vvec}{\mathbf{v}}
\begin{document}

\title{Positional-Encoding Image Prior}

\author{
Nimrod Shabtay\thanks{Equal contribution}, 
Eli Schwartz$^{\small*}$,
Raja Giryes
\\
\tt Tel-Aviv University
}
\date{}
\maketitle

\begin{abstract}
In Deep Image Prior (DIP), a Convolutional Neural Network (CNN) is fitted to map a latent space to a degraded (e.g. noisy) image but in the process learns to reconstruct the clean image.
This phenomenon is attributed to CNN's internal image prior. We revisit the DIP framework, examining it from the perspective of a neural implicit representation.
Motivated by this perspective, we replace the random latent with Fourier-Features (Positional Encoding). We empirically demonstrate that the convolution layers in DIP can be replaced with simple pixel-level MLPs thanks to the Fourier features properties. We also prove that they are equivalent in the case of linear networks. We name our scheme ``Positional Encoding Image Prior" (PIP) and exhibit that it performs very similar to DIP on various image-reconstruction tasks with much fewer parameters. Moreover, we show that PIP can be easily extended to videos, where image-prior and INR methods struggle and suffer from instability.
Code and additional examples for all tasks, including videos, are available on the project page \href{https://nimrodshabtay.github.io/PIP/}{\texttt{nimrodshabtay.github.io/PIP}}.
\end{abstract}

\section{Introduction}
\label{sec:intro}
\begin{figure}[t]    
    \centering
    \includegraphics[width=0.9\linewidth]{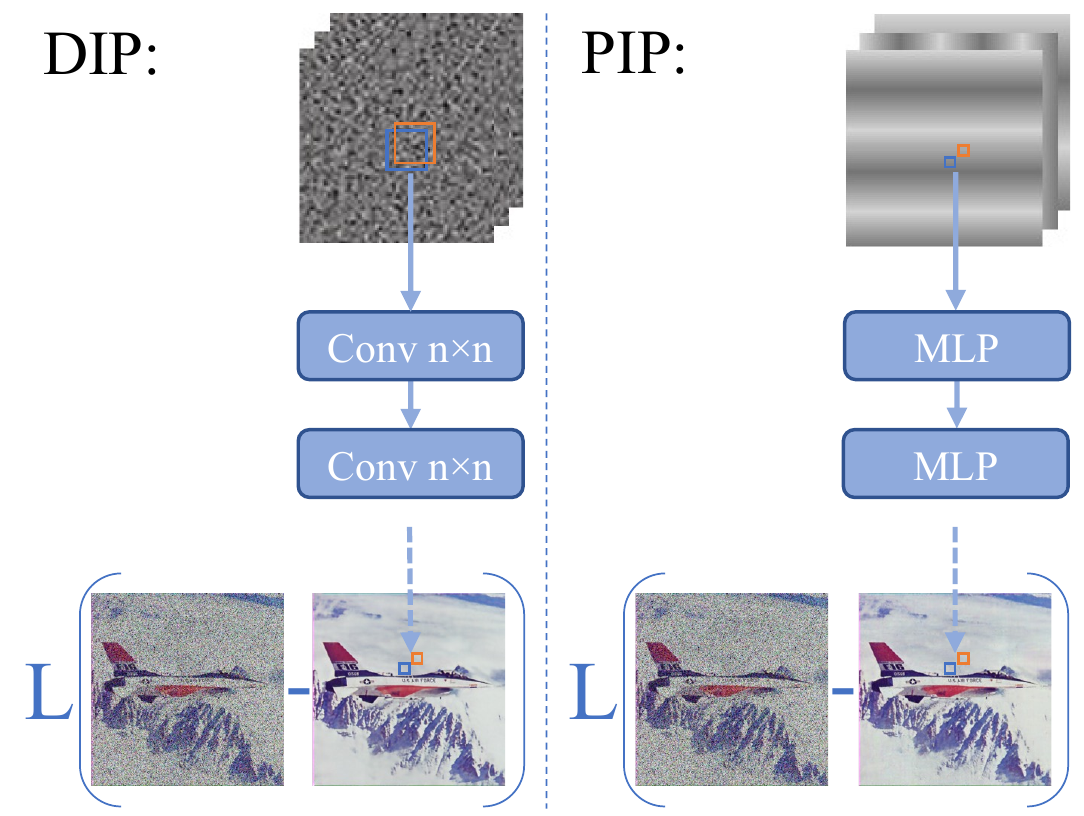}    
    \caption{We offer a novel view of DIP as an implicit model that maps noise to RGB values (left). Although it maps noise to a degraded image, DIP produces a clean image. We suggest that this image prior, or regularization, stems from the fact that due to the convolutional structure of the DIP architecture neighboring pixels in the output image (blue and orange in the picture) are a function of almost the same noise box in the input but a bit shifted. With this implicit model perspective, we suggest that one may achieve a similar `image prior' effect by replacing the input noise with Fourier-Features. We also prove equivalence in the linear network case.
    As a result, we may use a simple pixel-level MLP that has much fewer parameters than the DIP CNN and still produces denoised images of the same quality.
    Remarkably, for video, this leads to significant improvement.
    }
    \label{fig:teaser}
\end{figure}

Deep Image Prior (DIP) \cite{ulyanov2018deep} has shown that if a CNN is trained to map random noise or a learned latent code to a degraded image, it will either converge to the restored image (e.g. for super-resolution) or produce the restored image in the middle of the optimization process (e.g. for denoising). This has been attributed to an internal image prior that CNNs have.
Even though DIP is used in a zero-shot setup with no supervision and with only a degraded image available, it achieves impressive results for many image restoration tasks, including denoising, super-resolution, inpainting, and more. DIP was also extended to other tasks such as segmentation and dehazing using DoubleDIP \cite{DoubleDIP}, video processing \cite{Zhang_2019_ICCV} and 3D mesh reconstruction \cite{Hanocka2020p2m}.

Despite the remarkable success of DIP, it is still unclear why fitting random noise to a deteriorated image can restore the image. 
One explanation of how DIP works, but not why it works, is that the CNN learns to fit first the low frequencies and only later the higher frequencies, hence early stopping has a low-pass filter effect \cite{ShiIJCV22}.

In this work, we propose that DIP should be considered as a neural implicit model that is trained to represent the target image. These models gained high popularity in the task of 3D scene representation, due to their use in Neural Radiance Fields (NeRF) \cite{mildenhall2020nerf} and its many follow-up works.
They were also used for 2D image super-resolution \cite{chen2021learning}, but not in the zero-shot form, where they learned to map representations from a pre-trained CNN to RGB values.

Neural implicit functions are neural models trained to represent a signal, by training a multilayer perceptron (MLP) to map the coordinates of the target signal (an image in our case) to the value in these coordinates (RGB color in the case of images). This results in a neural network that implicitly represents the image, i.e., it can generate it by simply calculating the output of all its coordinates and even interpolating the image by using a finer scale of the coordinates. 

Training an implicit model with the raw coordinates as input results in an oversmoothed output due to the spectral bias of 
neural networks towards low frequencies \cite{Rahaman2019Spectral}. To overcome that, it has been shown that by mapping the input coordinates to Fourier-Features (FF), the implicit model can learn higher frequencies. This leads to a remarkable improvement in the ability of neural implicit representations.

Taking the neural implicit representation perspective, we may think of the inputs of DIP as the equivalent of the coordinate representation in implicit models. With this view in mind, DIP maps random codes to RGB values. 
However, despite the initial impression, this is not totally a random code. Providing independent random codes for each coordinate and then training an implicit neural model using MLP for a given image simply does not work. 
Yet, due to the convolutional structure of DIP, there is a ``structure'' in its randomness. The encoding for a specific location in the image is a shifted version of the random codes of its neighboring coordinates. Moreover, for a linear network, we prove that MLP with FF is equivalent to CNN with random inputs.
Thus, DIP can be considered as an implicit function that maps shifted random codes of the coordinates to the RGB values of the target image. With this view in mind, a natural question is why not simply use other types of positional encodings for the input coordinates? Or more specifically, Fourier features, which have been proven to be successful for neural implicit representation. 

We refer to this setup of DIP with positional encoding inputs as ``Positional Encoding Image Prior" (PIP).
In most cases, PIP-CNN outperforms DIP-CNN in image-restoration tasks such as denoising and super-resolution.
Moreover, we show that using Fourier features the convolutional layers in the DIP architecture can be replaced by MLP operating independently on each coordinate (implemented by $1\times 1$ conv.). The MLP-based architecture is inspired by the MLP used for implicit models, but it is not identical as we still have skip-connection and down/up-sampling like in the original DIP. We denote this network structure as PIP-MLP, and show that it achieves similar performance compared to conventional DIP-CNN but with a lower parameter count and FLOPs. Note that using DIP-MLP fatally fails.

A remarkable advantage of PIP is that it can easily be adapted to other modalities, such as video (3D), by modifying its encoded dimensions. In contrast, extending DIP to video by employing random input encoding with 3D convolutions (3D-DIP) struggles to give reliable results. PIP, on the other hand, achieves high-quality reconstruction and significantly outperforms image-prior and INR methods for video tasks using a simple extension of 2D PIP to 3D and using Fourier features that encode both spatial and temporal dimensions.

To summarize, the contributions of our paper are threefold: (i) a novel interpretation of DIP; (ii) an efficient network for self-training; and (iii) a significant improvement in video denoising and super-resolution from a single video.

\section{Related Work}
\label{sec:related}

\begin{figure}[t]
    \centering
    \resizebox{\columnwidth}{!}{
    \begin{tabular}{cccc}
    GT & Noisy & DIP & PIP (ours) \\
      \includegraphics[width=0.25\textwidth]{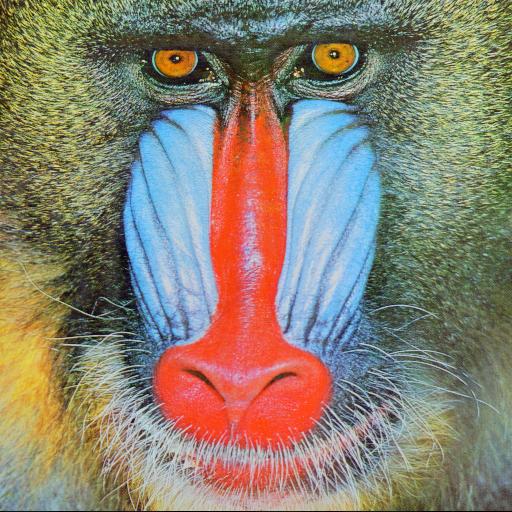}   &  
      \includegraphics[width=0.25\textwidth]{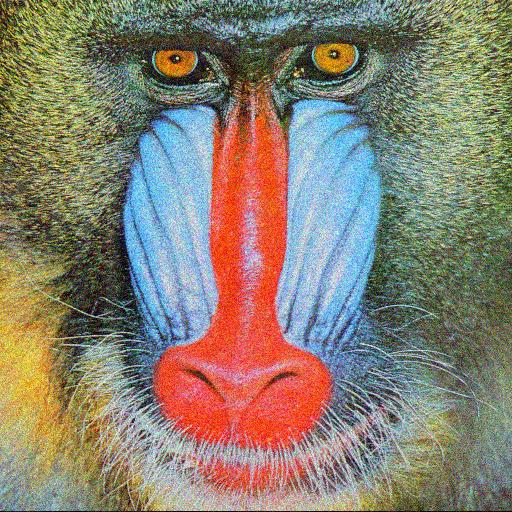}   &  
      \includegraphics[width=0.25\textwidth]{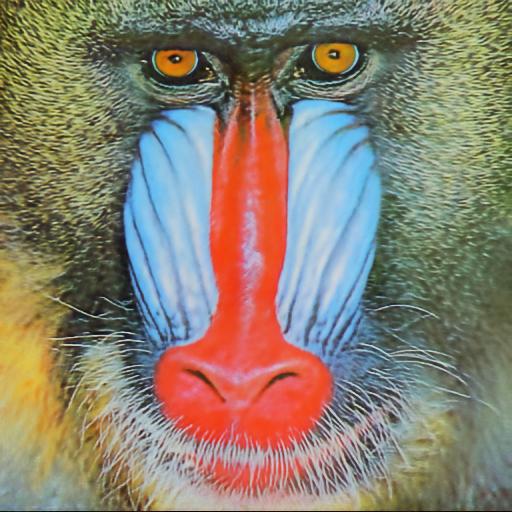} & 
      \includegraphics[width=0.25\textwidth]{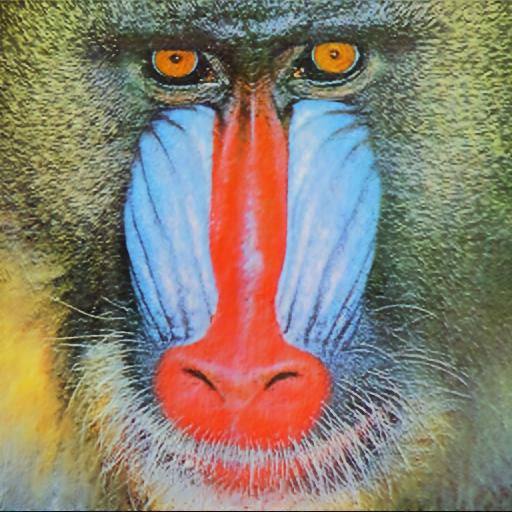}\\
      \includegraphics[width=0.25\textwidth , trim={120px 400px 120px  20px},clip]{figures/Image/Denoising/Baboon/Baboon_gt.jpg}   &  
      \includegraphics[width=0.25\textwidth, trim={120px 400px 120px  20px},clip]{figures/Image/Denoising/Baboon/Baboon_noisy.jpg}   &  
      \includegraphics[width=0.25\textwidth, trim={120px 400px 120px  20px},clip]{figures/Image/Denoising/Baboon/Baboon_dip.jpg} & 
      \includegraphics[width=0.25\textwidth, trim={120px 400px 120px  20px},clip]{figures/Image/Denoising/Baboon/Baboon_fixed_ff.jpg}\\
      \includegraphics[width=0.25\textwidth]{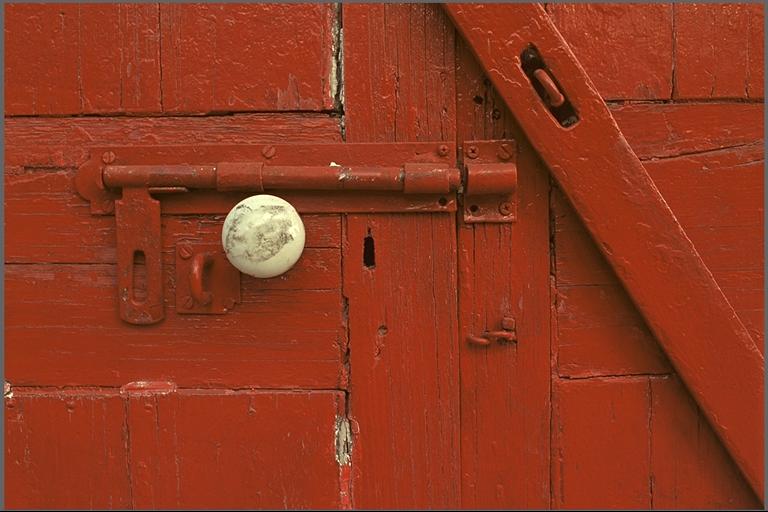}   &  
      \includegraphics[width=0.25\textwidth]{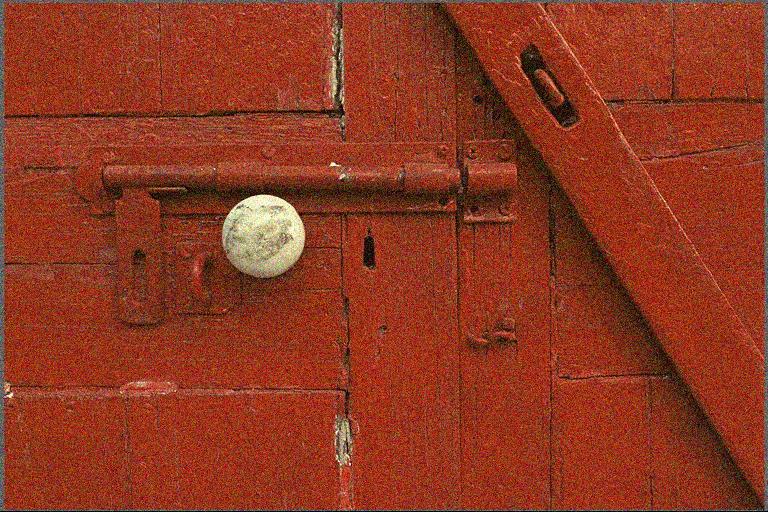}   &  
      \includegraphics[width=0.25\textwidth]{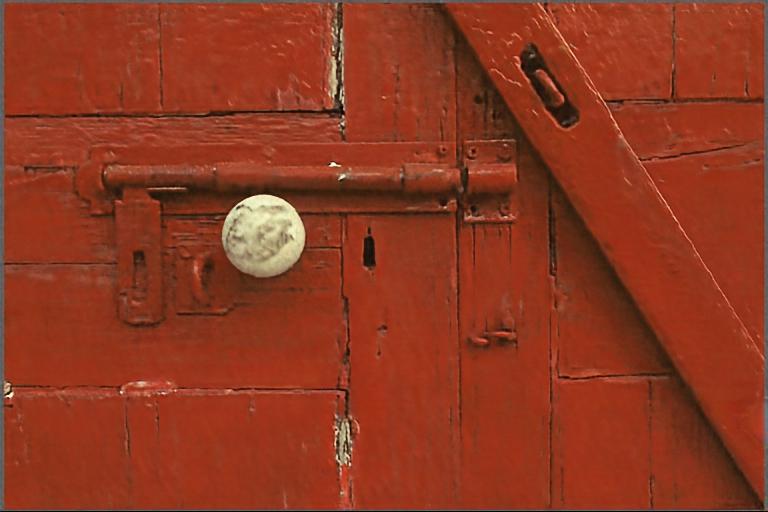} & 
      \includegraphics[width=0.25\textwidth]{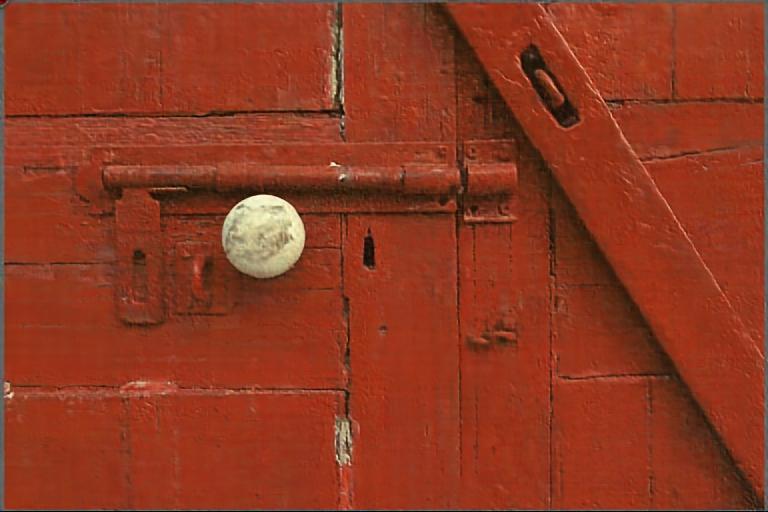}\\
      \includegraphics[width=0.25\textwidth, trim={200px 200px 200px 200px},clip]{figures/Image/Denoising/Kodim02/kodim_02_gt.jpg}   &  
      \includegraphics[width=0.25\textwidth, trim={200px 200px 200px 200px},clip]{figures/Image/Denoising/Kodim02/kodim_02_noisy.jpg}   &  
      \includegraphics[width=0.25\textwidth, trim={200px 200px 200px 200px},clip]{figures/Image/Denoising/Kodim02/kodim_02_dip.jpg} & 
      \includegraphics[width=0.25\textwidth, trim={200px 200px 200px 200px},clip]{figures/Image/Denoising/Kodim02/kodim_02_fixed_ff.jpg}\\
    \end{tabular}
    }
    \vspace{-8pt}
    \caption{\textbf{Image denoising examples} for Gaussian noise ($\sigma=25$). From left to right: clean image (GT), noisy image, DIP (CNN) and PIP (MLP) results. The results for DIP and PIP are very similar, suggesting they follow a similar image prior. }
    \label{fig:denoise}
    \vspace{-5pt}
\end{figure}

\textbf{Positional encoding} was first presented in the transformer architecture \cite{NIPS2017_3f5ee243}, where the importance of positional encoding in learning without an explicit structure was demonstrated.
The visual transformer (ViT) \cite{dosovitskiy2020image} extended the transformer architecture along with the concept of positional encoding to the 2D visual domain, where an image is broken into patches and each patch contains the pixel values along with their corresponding spatial location in the image.
Fourier-features were used as the positional encoding of transformers for achieving accurate segmentation masks in medical imaging applications \cite{peiris2022robust}. 

Another area of research utilizing positional encoding is implicit neural functions, which are neural networks optimized to map input coordinates to target values.
It was demonstrated in \cite{tancik2020fourfeat} that representing the input coordinates as Fourier-features with a tuneable bandwidth enables a simple MLP to generate complex target domains such as images and 3D shapes while preserving their high-level details.
In Neural Radiance Fields (NeRF) \cite{mildenhall2022nerf}, implicit functions with Fourier features are used for synthesizing novel views of a 3D scene from sparse 2D images.
SIREN \cite{sitzmann2019siren} showed that by changing the activation function of a simple MLP to a periodic one, the network can represent the spatial and temporal derivatives so one can use a regular grid of input coordinates to successfully recover a wide range of target domains (images, videos, 3D surfaces, etc.).
Local implicit image functions (LIIF) \cite{chen2021learning} represent an image as a continuous function using an implicit model. This allows it to perform super-resolution at an arbitrary scale.
Xu et. al \cite{xu2020positionalencoding} show how positional encoding can be interpreted as a spatial bias in GANs.
SAPE \cite{hertz2021sape} and BACON \cite{lindell2021bacon} demonstrate how a combination of a coordinate network along with a limitation of its frequency spectrum can achieve a multi-scale representation of the target domain.
Due to the success of implicit models, coded by Fourier-features, other areas of research incorporate Fourier-features. For example, Li et al. 
\cite{li2021functional} apply learned Fourier-features to control the inputs to gain better sample efficiency in many reinforcement learning problems.
\cite{kim2022zero} proposed to use implicit neural representation with a weight regularization for each layer for blind image restoration.
Bustos-Brinez et al. 
\cite{bustos-brinez2022addmkdeanomaly} use random Fourier-features to perform accurate data-set density estimation for anomaly detection.
This work treats DIP as an implicit model showing that as such it can be used with FF and an MLP.

\begin{table}[t]
    \caption{\textbf{Denoising and Super-Resolution image-prior evaluation (PSNR $\uparrow$).}
    Compared to DIP-CNN, DIP-MLP fails.
    On the other hand, PIP is robust to the model used and does not require a CNN.
    Using fixed frequencies is slightly better for denoising, while learned frequencies (optimizing input frequencies together with model params) are slightly better for SR.}
    \label{tab:denoise_SR}
    \centering
    \resizebox{\linewidth}{!}{
    \begin{tabular}{lll|ccc|cc}
    \toprule
    & & & \multicolumn{3}{c|}{Denoising} & \multicolumn{2}{c}{SR} \\
     & Arch. & Freq. & $\sigma=25$ & $\sigma=10$ & Poisson & $\times 4$ & $\times 8$ \\
     \midrule
    DIP & CNN & - & 28.56 & 29.506 & 29.467 & 27.73 & \textbf{24.58} \\
    DIP & MLP & - & 20.37 & 19.204 & 19.332 & 21.66 & 16.89\\
    \midrule
    PIP & CNN & Fixed & \textbf{28.80} & \textbf{30.011} & \textbf{29.957} & 27.867 & 24.03\\
    PIP & CNN & Learned & 28.28 & 29.795 & 29.813 & \textbf{28.09} & 24.26\\
    PIP & MLP & Fixed & 28.26 & 29.784 & 29.734 & 27.53 & 23.96 \\
    PIP & MLP & Learned & 28.24 & 29.591 & 29.464 & 27.87 & 24.35 \\
    \bottomrule
    \end{tabular}
    }    
\end{table}

\begin{table}[t]
\caption{\textbf{Denoising and Super-Resolution implicit models evaluation (PSNR $\uparrow$).}
    Compared to other implicit models, PIP performs better or similarly compared to all new proposed methods.}
    \label{tab:denoise_sr_inr}
    \centering
    \resizebox{\linewidth}{!}{
    \begin{tabular}{l|ccc|cc}
    \toprule
    & \multicolumn{3}{c|}{Denoising} &  SR \\
    Method & $\sigma=25$ & $\sigma=10$ & Poisson & x4 & x8\\
     \midrule
    SIREN \cite{sitzmann2019siren}                    & 27.94 & \textbf{29.78} & 29.6 & 27.66 & 24.18 \\
    Gaussian Activation \cite{ramasinghe2022beyond}   & 25.11 & 25.20 & 25.21 & 21.6 & 17.43 \\
    Titan  \cite{lejeune2022titan}                    & 25.35 & 25.83 & 25.54 & 24.8 & 21.33 \\
    PIP                                               &\textbf{28.26}& \textbf{29.78} & \textbf{29.73} & \textbf{27.87} & \textbf{24.35}\\
    \bottomrule
    \end{tabular}
    }   
\end{table}

\textbf{Deep Image Prior} (DIP) \cite{ulyanov2018deep} showed that a CNN network (typically a U-net shaped architecture) can recover a clean image from a degraded image in the optimization process of mapping random noise to a degraded image. The power of DIP is shown for several key image restoration tasks such as denoising, super-resolution, and inpainting.
DIP was extended to various applications. Double-DIP \cite{DoubleDIP} introduced a system composed of several DIP networks, where each learns one component of the image such that their sum is the original image. It has been used for image dehazing and foreground/background segmentation.
SelfDeblur \cite{ren2019selfDeblur} performed blind image deblurring by simultaneously optimizing a DIP model and the corresponding blur kernel.
In the medical imaging domain, several works tried to utilize the power of image priors to reconstruct PET images \cite{Yokota_2019_ICCV, Hashimoto_2022, 8581448, 9576711}.
Other works tried to improve the performance of image priors using modifications and additions to the original setup. Mataev et al. 
\cite{Mataev2019DeepREDDI} and Fermanian et al. 
\cite{fermanian:hal-03310533} combined DIP with the plug-and-play framework to improve DIP's performance in several inverse problems. Jin et al. \cite{Jin2022Dual} extended DIP to blended distortion priors.
Zukerman et al. 
\cite{zukerman2021bp} showed how the back-projection fidelity term can improve DIP performance in restoration tasks like deblurring.
In other areas, 
Kurniawan et al. 
\cite{kurniawan2022noise} proposed a method for demosaicing based on DIP. Chen et al. 
\cite{chen2020dip} suggested DIP-based neural architecture search.

Many other works \cite{ShiIJCV22,DBLP:journals/corr/abs-1912-08905,wang2023early} analyzed DIP and attempted to overcome its limitations, e.g., its spectral bias and the need for early-stopping. 
These works, however, did so by maintaining the input as a random latent code, focusing on the limitations of the architecture itself. We challenge the use of random input for DIP and aim at improving its lack of robustness when changing the used neural architecture or the target domain, e.g., video.

Chen et al. 
\cite{Zhang_2019_ICCV} utilized DIP for large hole inpainting to remove objects from videos. Yet, naively scaling DIP to videos by replacing 2D with 3D convolutions (3D-DIP) fails to maintain temporal consistency. Thus, they introduced additional regularization and inputs to improve temporal consistency issues.
Yoo et al. 
\cite{yoo2021time} introduced time-dependent DIP in MRI images, which encodes the temporal variations of images with a DIP network that generates MRI images.
Lu et al. 
\cite{lei2020blind} proposed using DIP based method for blind video restoration by applying DIP network on pairs for degraded and reconstruct image sequences and force consistency from the DIP optimization.
\cite{lu2021} utilize DIP for video editing and manipulation.
The above works show clearly that DIP cannot be naively extended from images to video and it requires additional regularization and loss functions.
On the other hand, PIP can easily be extended to videos.
It learns temporal connections and preserves consistency between frames in a controllable and intuitive manner.

A recent work \cite{lejeune2022titan} tried to draw a connection between image priors and implicit neural representations. While they mentioned the connection, it is not complete with a partial comparison.  PIP comprehensively draws a connection to implicit models both analytically and through a comprehensive study.

\begin{figure}[t]
    \centering
    \resizebox{\columnwidth}{!}{
    \begin{tabular}{cccc}
    HR & LR & DIP & PIP (ours) \\
      \includegraphics[width=0.25\textwidth]{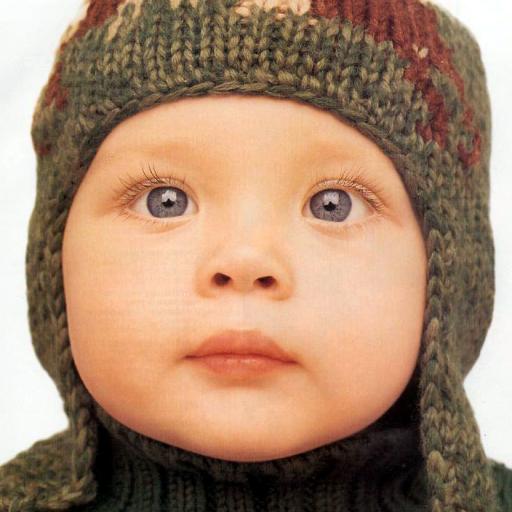}   &  
      \includegraphics[width=0.25\textwidth]{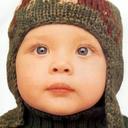}   &  
      \includegraphics[width=0.25\textwidth]{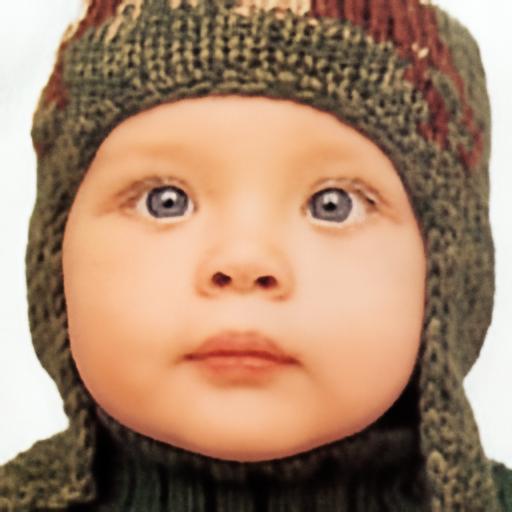} & 
      \includegraphics[width=0.25\textwidth]{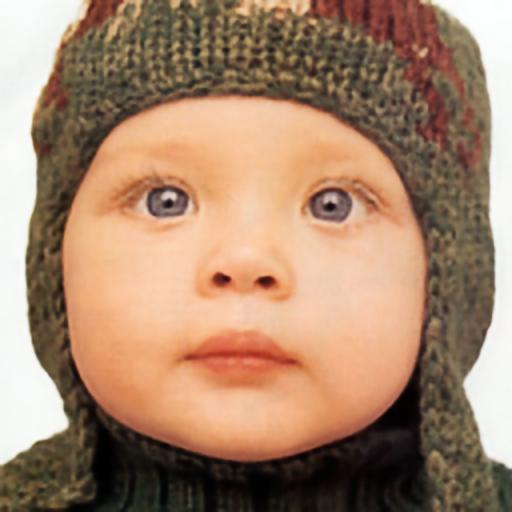}
      \\
      \includegraphics[width=0.25\textwidth, trim={300px 300px 0px 150px},clip]{figures/Image/SR/X4/Set5/Baby/Set5_baby_hr.jpg}   &  
      \includegraphics[width=0.25\textwidth, trim={75px 75px 0px 37.5px},clip]{figures/Image/SR/X4/Set5/Baby/Set5_baby_lr.jpg}   &  
      \includegraphics[width=0.25\textwidth, trim={300px 300px 0px 150px},clip]{figures/Image/SR/X4/Set5/Baby/Set5_baby_dip.jpg} & 
      \includegraphics[width=0.25\textwidth, trim={300px 300px 0px 150px},clip]{figures/Image/SR/X4/Set5/Baby/Set5_baby_infer_ff.jpg}
      \\
      \includegraphics[width=0.25\textwidth]{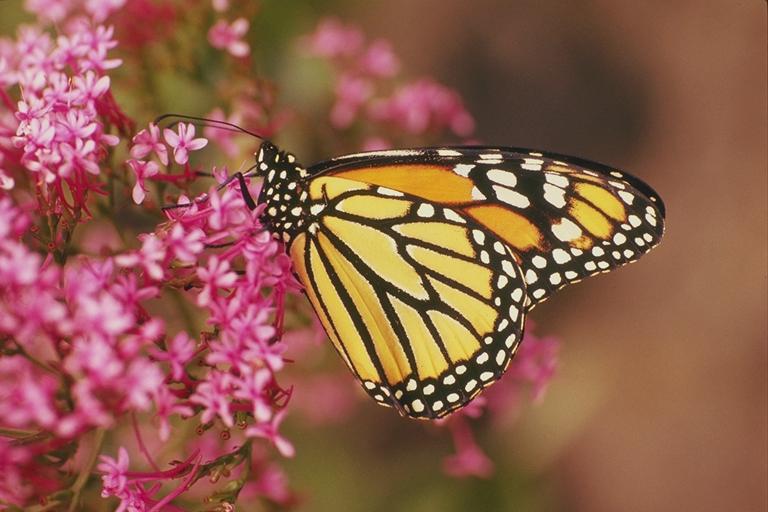}   &  
      \includegraphics[width=0.25\textwidth]{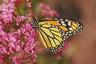}   &  
      \includegraphics[width=0.25\textwidth]{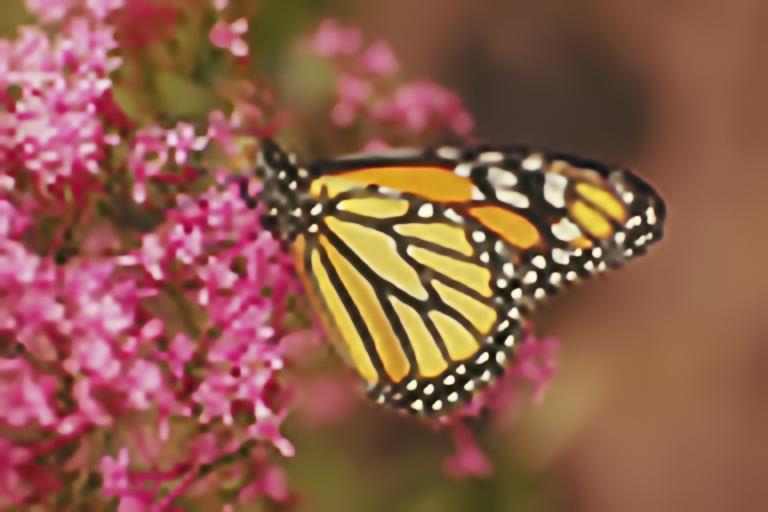} & 
      \includegraphics[width=0.25\textwidth]{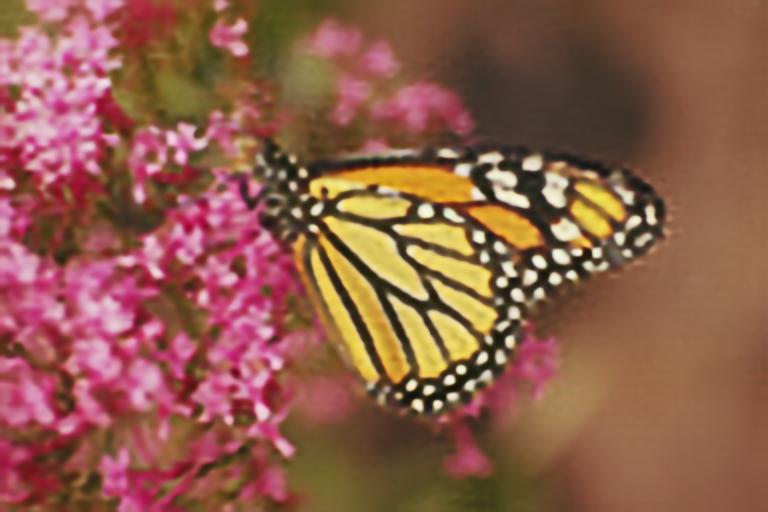}
      \\
      \includegraphics[width=0.25\textwidth, trim={450 220 200px 250px},clip]{figures/Image/SR/X8/Butterfly_gt.jpg}   &  
      \includegraphics[width=0.25\textwidth, trim={56.25px 27.5px 25px 31.25px},clip]{figures/Image/SR/X8/Butterfly_lr.jpg}   &  
      \includegraphics[width=0.25\textwidth, trim={450 220 200px 250px},clip]{figures/Image/SR/X8/Butterfly_noise.jpg} & 
      \includegraphics[width=0.25\textwidth, trim={450 220 200px 250px},clip]{figures/Image/SR/X8/Butterfly_learned_ff.jpg}
      \\
    \end{tabular}
    }
    \vspace{-8pt}
    \caption{\textbf{SR examples} - First example is for $\times 4$ SR; Second is for $\times 8$ SR. The results for DIP (CNN) and PIP (MLP) are very similar, suggesting they have a similar image prior.}
    \label{fig:sr_x4_x8}
    \vspace{-5pt}
\end{figure}

\begin{table}[t]
    \caption{\textbf{The effect of the U-net structure} is evaluated on both the denoising and super-resolution tasks (PNSR $\uparrow$). While DIP is ineffective without convolutions and the U-Net structure, PIP retains its performance for denoising. We observe a drop in performance for PIP on SR without the U-Net (-1.3dB), but not as significant as for DIP.}
    \label{tab:unet_effect}
    \centering
    \resizebox{\linewidth}{!}{
    \begin{tabular}{cc}
    \begin{tabular}{lcc}
    \toprule
        Denoising &  DIP & PIP\\
        \midrule
         U-Net MLP  & 20.37 & 28.26 \\
         Simple MLP & 15.26 & 28.42 \\
         \bottomrule
    \end{tabular}
    &
      \begin{tabular}{lcc}
    \toprule
        SR $\times 4$ &  DIP & PIP \\
        \midrule
         U-Net MLP  & 21.66 & 27.87 \\
         Simple MLP & 14.03 & 26.57 \\
         \bottomrule
    \end{tabular}
    \end{tabular}
    }
\end{table}

\begin{table}[t]
    \caption{\textbf{Activation effect on reconstruction results}. We evaluate (PSNR $\uparrow$) recent proposed activations for implicit models such as Sine and Gaussian and compare them to the originally proposed Leaky-ReLU over several image reconstruction tasks. Leaky-ReLU outperformed all activations.}
    \label{tab:activations}
    \centering
    \resizebox{\linewidth}{!}{
    \begin{tabular}{lll|ccc|cc}
    \toprule
    & & & \multicolumn{3}{c|}{Denoising} & \multicolumn{2}{c}{SR} \\
    & & & $\sigma=25$ & $\sigma=10$ & Poisson & $\times 4$ & $\times 8$ \\
     \midrule
    Gaussian & & & 28.06 & 30.68 & 28.35 & 25.95 & 19.762 \\
    Sine & & & 27.21 & 27.85 & 28.12 & 25.85 & 20.80 \\
    Leaky-ReLU & & & 28.26 & 29.784 & 29.734 & 27.87 & 24.35 \\
    \bottomrule
    \end{tabular}
    }    
\end{table}

\begin{figure}[t]
    \centering
    \vspace{-10pt}
    \includegraphics[width=\columnwidth]{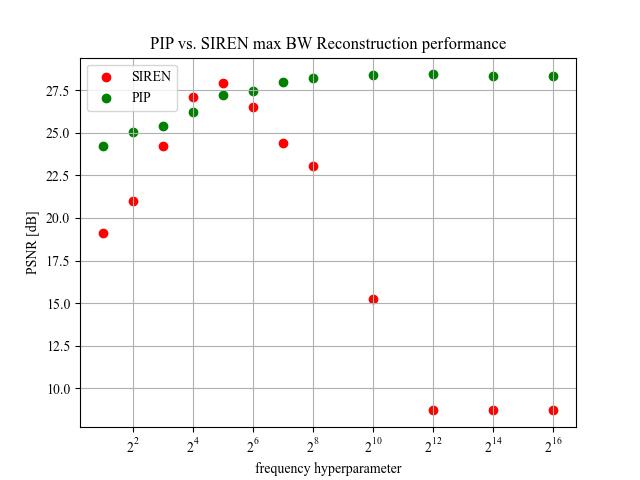}
    \vspace{-5pt}
    \caption{\textbf{SIREN vs. PIP frequency hyper-parameter effect.} We apply different Band-Width limits to SIREN (different $\omega_0$ values) and PIP (different $F_{max}$ values), and evaluate the reconstruction results over the "standard dataset". PIP produce over-smooth images when the max frequency is low but converge to a steady plateau as the frequency range gets higher as oppose to SIREN where we can see an optimal point around $\omega_0=2^5=32$ but the performance drops as the frequency range move away from the optimal point.}
    \vspace{-5pt}
    \label{fig:bw_pip_siren}
\end{figure}

\section{Positional-Encoding Image \\Prior (PIP)}
\label{sec:method}

DIP \cite{ulyanov2018deep} demonstrates that if an untrained CNN model is fitted, in a zero-shot manner, to a corrupted image, it will eventually recover the clean image without adding any explicit regularization.
Formally, we have $x=f_\theta(z)$, where $f_\theta$ is a neural network that maps random input codes $z \in \mathbb{R}^{C \times W \times H} \sim U(0,\frac{1}{10})$ to image space $x \in \mathbb{R} ^{3 \times W \times H}$. Given a corrupted image $x_0$, we want to recover its clean version. Such an image reconstruction task can be formulated as an energy minimization problem
\begin{equation}\label{eq:direct}
    x^* = \operatornamewithlimits{argmin}_x E(x;x_0) + R(x),
\end{equation}
where $E(x;x_0)$ is the data term and $R(x)$ is a regularization term.
The main observation in DIP is that, due to the CNN's image prior, the regularization term can be dropped and the data term can be optimized by adapting the model parameters with gradient descent
\begin{equation}\label{eq:reparametrization}
   \theta^* = \operatornamewithlimits{argmin}_\theta E(f_\theta(z);x_0),\qquad x^* =f_{\theta^*}(z)\,.
\end{equation}

As illustrated in Fig.~\ref{fig:teaser}, one may consider DIP as an implicit model and its random input as positional encoding of shifted random patches. 
Motivated by the above relations, in PIP, we suggest replacing the random input codes $z$ with Fourier-features as the positional encoding of the image coordinates.
These features are advantageous of being smooth, continuous, and with controllable bandwidth. 
For a coordinate vector $\vvec$ and a frequency set $\{f_0,\ldots,f_{m-1}\}$, the Fourier-features are
\begin{equation}\label{eq:ff}
\gamma(\vvec)=\left[\ldots, \cos(f_i \vvec),\sin(f_i \vvec), \ldots \right].
\end{equation}
We use log-linear spaced frequencies $f_i=\sigma^{i/m}$, with the base $\sigma$ set by a predefined maximum frequency, such that $f_{m-1}=f_{max}$. (see \ref{fig:pip_encoding} for a visual diagram of the input encoding.
Inspired by \cite{li2021learnable}, we also tested a learnable-frequencies variant, where the frequencies are initialized as described above and then optimized together with the model parameters using gradient descent.
This variant allows learning the most appropriate encoding to be used for a given image and is beneficial for some tasks.

We further suggest replacing the convolutional layers with an element-wise optimization with a coordinate-MLP, which is equivalent to performing $1\times 1$ convolution on the whole input.
The intuition is that the PE image prior is sufficient to make the convolution redundant.
Moreover, in the following proposition, we show that \textit{for a linear model}, DIP can be formulated as learning of an implicit function with Fourier features as its input. Full details are in Appendix~\ref{sec:prop1_proof}.

\begin{proposition}
\label{prop:PIP_FF_relation}
For $E$ being the $\ell_2$ loss and $f(z) = h*z$, where $h$ is a convolution kernel, \eqref{eq:reparametrization} is equivalent to an element-wise optimization with Fourier features. 
\end{proposition}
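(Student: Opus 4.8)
The plan is to pass to the Fourier domain and use the convolution theorem together with Parseval's identity to show that \eqref{eq:reparametrization} and the coordinate-MLP both minimize the same least-squares objective and span the same output space. First I would write the data term explicitly as $E(h*z;x_0)=\|h*z-x_0\|_2^2$ and apply the (circular) discrete Fourier transform. By the convolution theorem $\widehat{h*z}=\hat h\odot\hat z$, an element-wise product of the transformed kernel with the \emph{fixed} random input, and by Parseval's identity the loss is unchanged up to a constant, so the objective becomes $\sum_f \lvert \hat h(f)\,\hat z(f)-\hat x_0(f)\rvert^2$. The key structural observation is that this sum decouples across frequencies: minimizing over the kernel $h$ amounts to solving an independent scalar problem for each $\hat h(f)$.

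Second, I would make the link to Fourier features explicit through the inverse transform, $x[n]=(h*z)[n]=\sum_f \hat h(f)\,\hat z(f)\,e^{\,2\pi i f n/N}$, which after pairing conjugate-symmetric terms into real form is a linear combination of $\cos(f_i n)$ and $\sin(f_i n)$ — exactly the entries of the Fourier-feature map $\gamma$ in \eqref{eq:ff}. Hence the linear convolutional model outputs a band-limited image whose coefficient on frequency $f_i$ equals $\hat h(f_i)\hat z(f_i)$. A linear coordinate-MLP acting element-wise on the features, on the other hand, outputs $x[n]=W\,\gamma(n)=\sum_i\bigl(a_i\cos(f_i n)+b_i\sin(f_i n)\bigr)$, with the same weight vector $W=[\ldots,a_i,b_i,\ldots]$ applied at every coordinate $n$; this is the ``element-wise optimization with Fourier features'' of the statement.

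Third, since $z$ is a fixed, generic random input, $\hat z(f_i)\neq 0$ at the relevant frequencies almost surely, so the map $\hat h(f_i)\mapsto \hat h(f_i)\hat z(f_i)$ is a bijection between the convolutional parameters and the feature coefficients. The two models therefore parametrize the same space of band-limited outputs, and because the $\ell_2$ objective decouples identically per frequency under both parametrizations, their minimizations attain the same optimizer $x^*$, which is the claimed equivalence.

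I expect the main difficulty to be bookkeeping rather than anything conceptual: matching the DFT frequency grid to the prescribed log-linear frequencies $f_i$, handling real-valued signals via conjugate-symmetric pairing so the complex exponentials collapse cleanly into the real $\cos/\sin$ entries of $\gamma$, and stating the genericity assumption $\hat z(f_i)\neq 0$. The subtle point is the convolution convention: with circular convolution the transform is unitary and the per-frequency problems line up exactly, whereas linear convolution with padding introduces boundary terms that would weaken the claim to equivalence up to the boundary, so I would fix the convention at the outset.
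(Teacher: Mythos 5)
Your proposal follows essentially the same route as the paper's proof: both diagonalize the linear convolutional model via the DFT, pair conjugate-symmetric terms of the real signals into cosines, and identify the per-frequency products $\hat h(f)\hat z(f)$ with the independently optimized coefficients of a Fourier-feature expansion. If anything you are slightly more careful than the paper, which omits the Parseval step and the genericity condition $\hat z(f)\neq 0$ needed for the reparametrization to be a bijection, but the underlying argument is the same.
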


DIP uses an encoder-decoder hourglass architecture. 
It consists of convolutional layers with strides for down-sampling in the encoder; convolutional layers with simple bi-linear or nearest-neighbors up-sampling in the decoder; and possibly skip-connection between corresponding scales in the encoder and decoder.
For PIP we adopt the same multi-scale hourglass model, but replace each convolution with $k \times k$ kernel and $s \times s$ strides with a pixel-level MLP followed by nearest-neighbor down-sampling (simply implemented as a convolution with $1 \times 1$ kernel and $s \times s$ strides).
Fig.~\ref{fig:teaser} illustrates the modifications.
The suggested architecture is more efficient in terms of compute and memory as it uses $1 \times 1$ convolution instead of $3 \times 3$.
Also, we observe that while DIP is somewhat sensitive to the architecture used (changing $3 \times 3$ kernels to $1 \times 1$ makes it utterly fail), this is not the case for PIP.
It is much more robust to changes and we use the same model for all tasks.

Finally, unlike DIP, PIP can be easily adapted to other domains.
For example, when DIP is adapted to video (using 3D convolutions), additional losses have to be used to ensure temporal consistency \cite{yoo2021time}.
In PIP, the model is simply adjusted by defining Fourier features for all dimensions.

\noindent {\bf Implementation details.} Unless otherwise specified, we follow the original architecture choice of DIP for denoising and SR: 5 levels U-net with skip connections. Each level has 2 convolutional blocks with 128 channels, and the skip connections are with convolutional blocks with 4 channels (see \ref{fig:pip_arch} for a detailed diagram).
We also follow the original DIP hyper-parameters, Adam optimizer with a learning rate of 0.01 and the same number of iterations for early-stopping.
For all demonstrated applications, we used the original configurations and only replaced the 3x3 convolutional filters with 1x1 and plugged in Fourier features as input instead of noise.
\begin{figure}[]
    \centering
    \resizebox{\columnwidth}{!}{
    \includegraphics[]{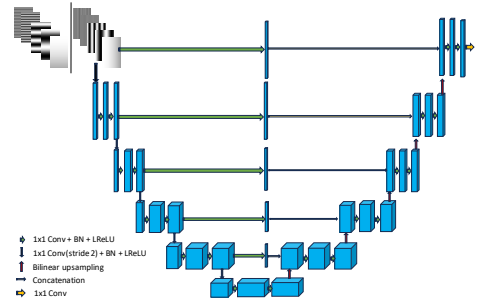}  
    }
    \vspace{-0.6in}
    \caption{\textbf{PIP architecture.} Our architecture is based on DIP common architecture of a Unet with skip-connections.}
    \label{fig:pip_arch}    
\end{figure}

\section{Evaluations}
In this section, we first perform a quantitative evaluation of denoising and super-resolution (SR).
We also study the effect of different design choices and understand what makes PIP work from a spectral-bias perspective.
We then qualitatively demonstrate the applicability of PIP to more tasks -- inpainting, blind dehazing, and CLIP inversion.
For more visual results, please refer to the project \href{https://nimrodshabtay.github.io/PIP/}{website}.
We follow the denoising and SR experiments from DIP \cite{ulyanov2018deep} using the same set of images.
For denoising: 9 colored images\footnote{\href{https://webpages.tuni.fi/foi/GCF-BM3D/index.html\#ref\_results}{https://webpages.tuni.fi/foi/GCF-BM3D/index.html\#ref\_results}} with additive Gaussian noise ($\sigma=25$ and $\sigma=10$) and Poisson noise.
For SR: the union of Set14~\cite{set14} and Set5~\cite{set5}, resulting in 19 images, with $\times4$ and $\times8$ downscaling factors.

To test PIP robustness, for each task we checked 4 PIP variants: with CNN ($3 \times 3$ kernels) vs. MLP ($1 \times 1$ kernels) architecture and fixed vs. learned frequencies.
Table \ref{tab:denoise_SR} summarizes the results.
Observe that fixed frequencies perform slightly better for denoising while learned frequencies perform slightly better for SR.
PIP exhibits robustness to the architecture used as CNN and MLP have comparable performance, while DIP relies on the CNN structure and fails with MLP.

Figures \ref{fig:denoise} and \ref{fig:sr_x4_x8} show examples of the denoising and SR results, respectively, for both PIP and DIP.
We observe a similar quality level overall, with DIP tending a little bit more to over-smoothing artifacts, and PIP to localized artifacts. 
We also observed that the performance on different images by DIP and PIP is highly correlated (Coefficient of determination $R^2=0.99$ ), suggesting CNN and PE have a similar `prior' effect. (see Figure \ref{fig:dip_pip_corr} in the appendix for more details).

To further demonstrate PIP's superiority over recent proposed implicit models for images, we performed a quantitative evaluation as described above. Table \ref{tab:denoise_sr_inr} summarized our results and show that PIP is better or comparbale to other methods in all reconstruction tasks.

\noindent {\bf U-Net vs MLP.} DIP employs a U-Net architecture. It has an encoder-decoder structure, where each encoder block uses strides to downsample the image and each decoder block upsamples the image by nearest or bilinear interpolation.
Parts of the original resolution signal are passed through skip connections. These down- and up-sampling operations have the effect of a low-pass filter on the parts of the signal passed through them. This can be treated as part of the image prior of the model. To test this effect we compare $1\times 1$ U-Net vs. a vanilla simple MLP applied per pixel in Table \ref{tab:unet_effect}. It shows that for denoising, a simple MLP, commonly used in implicit models, is enough and performs just as well as the U-Net that operates in a multi-scale scheme. 
For SR, however, U-Net is stronger than the simple MLP. This is probably due to the fact that for this task non-local information is important for retrieving lost high frequencies. 
Overall, PIP, unlike DIP which requires convolutions and the U-Net structure, exhibits strong robustness to the model used.
Throughout this paper, for the sake of simplicity, we use the $1\times 1$ U-Net architecture for all experiments.

\noindent {\bf Activations} DIP was originally designed to work with leaky-ReLU activations. Current improvements in the field of implicit neural representations suggest replacing the vanilla relu with activations that are always differentiable such as Sine activations \cite{sitzmann2019siren} or Gaussian activations \cite{ramasinghe2022beyond}.
In table \ref{tab:activations}, we incorporate those recent proposed activations into PIP U-net architecture to measure the effect on reconstruction performance and show that in PIP for reconstruction tasks the original activation (LeakyReLU) works best compared to newer proposed activation functions.

\noindent {\bf Effect of the Fourier Features (FF) frequency range.}
The range of input frequencies allows controlling the generated image. When $f_{max}$ in FF is too low, we generate a blurry image. When $f_{max}$ is high we can fit high-frequency noise.
However, thanks to the spectral-bias effect, early-stopping prevents fitting the noise even at high $f_{max}$. 

Figure \ref{fig:max_freq_effect_on_results} demonstrates the effect of $f_{max}$. 
For low $f_{max}=2$ we get blurry images. We find that
$2^8\leq f_{max} \leq 2^{10}$ provides a good balance and generates good images.
But even if we use the very high $f_{max}=2^{16}$, thanks to the early stopping we still do not fit the noise despite the model capability.
We also show the PSNR performance as a function of $f_{max}$. The performance improves till $f_{max}=2^8$. Then at higher frequencies the performance varies a little bit but with no clear trend of performance degradation.

We additionally compared the frequency hyper-parameter effect vs. SIREN which uses a meshgrid and sine activation. SIREN uses a frequency hyper-parameter $\omega_0$ to control the bandwidth of the output. In Figure \ref{fig:bw_pip_siren} we show that SIREN is highly sensitive to the choice of $\omega_0$ while PIP is less sensitive.

A special case of limited frequency range is using a ``meshgrid" input, i.e. two input channels containing the horizontal and vertical coordinates. In FF, the encoding with the lowest frequency is $\sin(x)$, which simply applies a monotonic mapping to the coordinates and hence produces a very similar representation. 
Note that DIP used meshgrid as input for the inpainting task and in \cite{ShiIJCV22} it was tested with a pure MLP. We repeat the experiment here for that task of denoising with the purpose of showing the advantage and the key role of the FF. 
Figure \ref{fig:meshgrid} compares the training of a denoising MLP model with three different options for the input: FF, meshgrid, and noise.
Clearly, using noise as input with MLP layers does not work (also noted in Table \ref{tab:denoise_SR}).
Observe that with meshgrid the model does not overfit and hence does not need early stopping. Yet, it is unable to reach FF performance even after long training. This is because it encodes only the lowest frequency, and is thus incapable of reconstructing the higher frequencies in the image. Figure~\ref{fig:meshgrid_sup_mat} provide a visual example of DIP with meshgrid showing that it fails to recover the texture details and outputs over-smoothed results. 

\begin{figure}[t]   
\centering
    \includegraphics[width=0.8\linewidth]{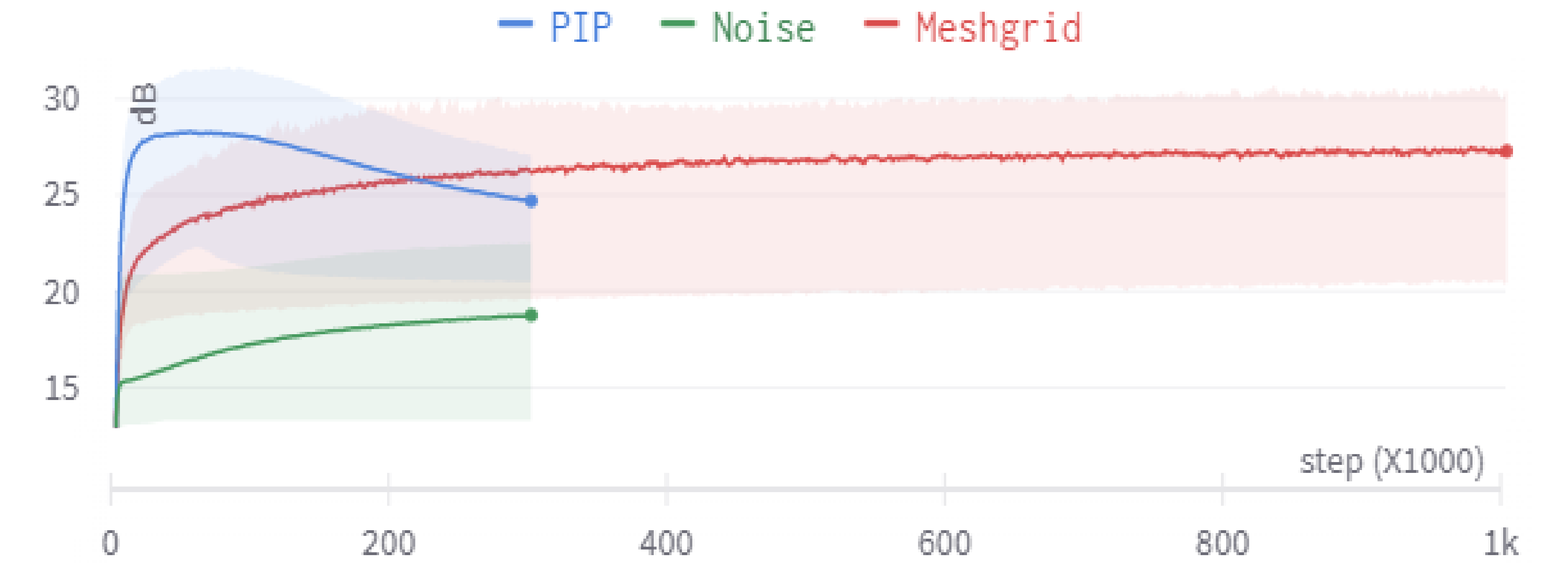}    
    \caption{\textbf{MLP-Denoising over the "standard dataset" with FF (PIP), Meshgrid and Noise encoding}. Noise encoding fails to generate a valid mapping between the input code and the target image. The meshgrid provides a much better representation but the lack of high frequencies generates an over-smoothed image. PIP lets the model produce a suitable range of frequencies such that it denoises the image well.}
    \label{fig:meshgrid}
\end{figure}

\begin{figure*}[t]
    \centering
    \vspace{-9pt}
    \hspace{-0.15in}
    \begin{subfigure}[t]{0.74\linewidth}
    \resizebox{\linewidth}{!}{
    \begin{tabular}{ccccc}
    GT & $f_{max}=2^4$ & $f_{max}=2^6$ & $f_{max}=2^8$  & $f_{max}=2^{16}$\\
      \includegraphics[width=0.25\linewidth]{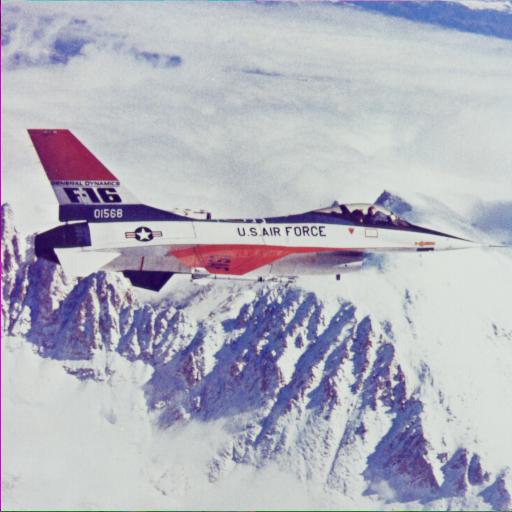}   &  
      \includegraphics[width=0.25\linewidth]{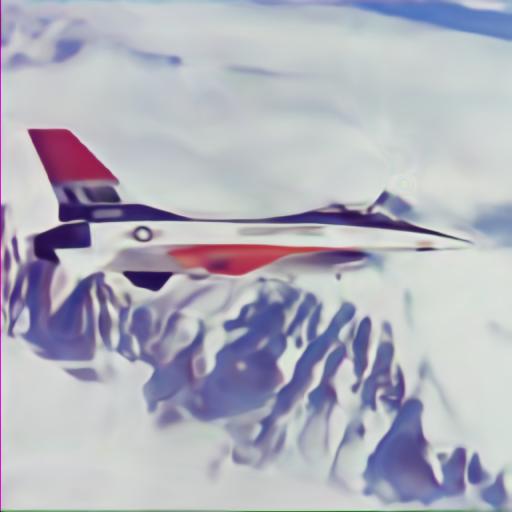}   &  
      \includegraphics[width=0.25\linewidth]{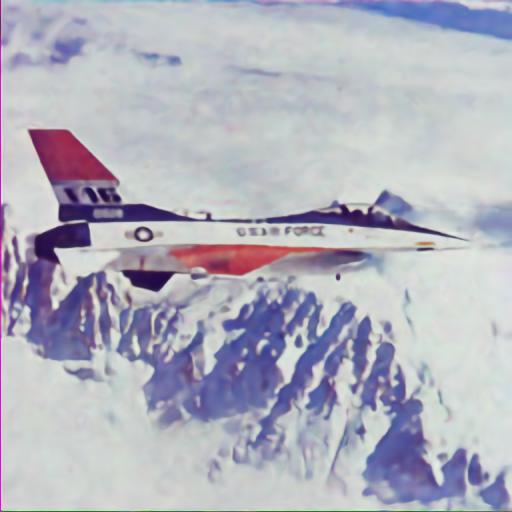}   &  
      \includegraphics[width=0.25\linewidth]{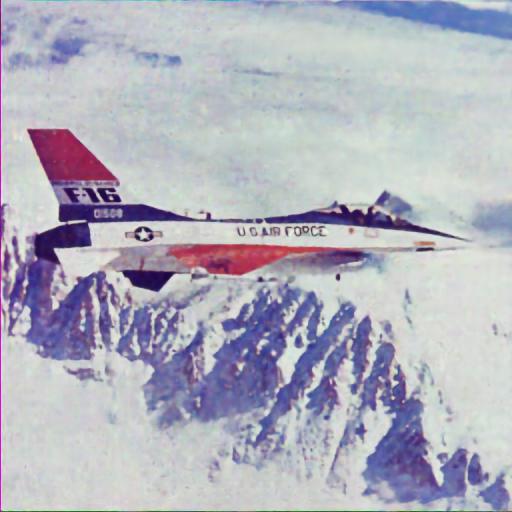}   &  
      \includegraphics[width=0.25\linewidth]{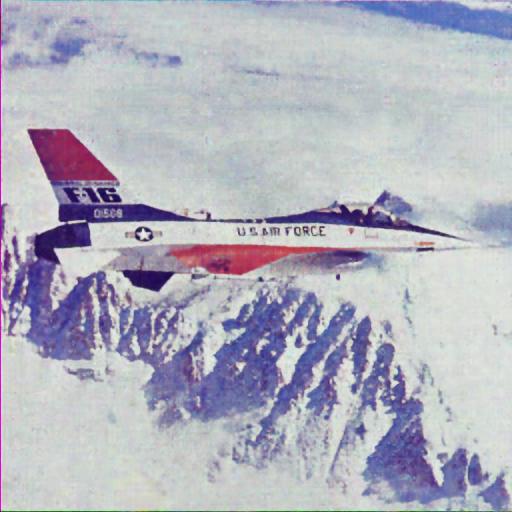}  
      \\
      \includegraphics[width=0.25\linewidth, trim={150px 250px 150px  200px},clip]{figures/frequencies/f16_gt.jpg} & 
      \includegraphics[width=0.25\linewidth, trim={150px 250px 150px  200px},clip]{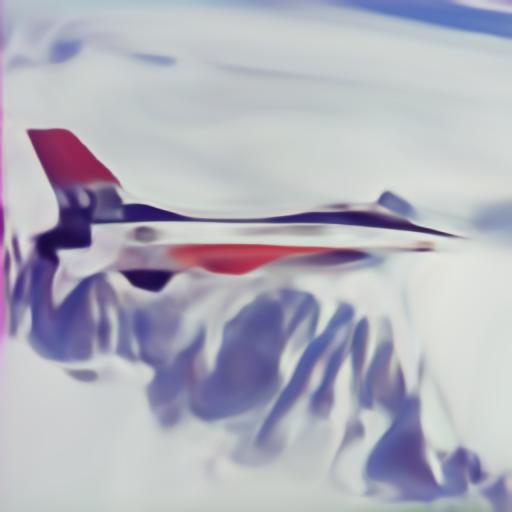} & 
      \includegraphics[width=0.25\linewidth, trim={150px 250px 150px  200px},clip]{figures/frequencies/F16_max_f_16.jpg} & 
      \includegraphics[width=0.25\linewidth, trim={150px 250px 150px  200px},clip]{figures/frequencies/F16_max_f_256.jpg} & 
      \includegraphics[width=0.25\linewidth, trim={150px 250px 150px  200px},clip]{figures/frequencies/F16_max_f_65k.jpg}\\
    \end{tabular}
    }
    \end{subfigure}%
    \hspace{-0.2in}
    \begin{subfigure}[t]{0.24\linewidth}
    \begin{tabular}{c}
    \includegraphics[width=1\linewidth]{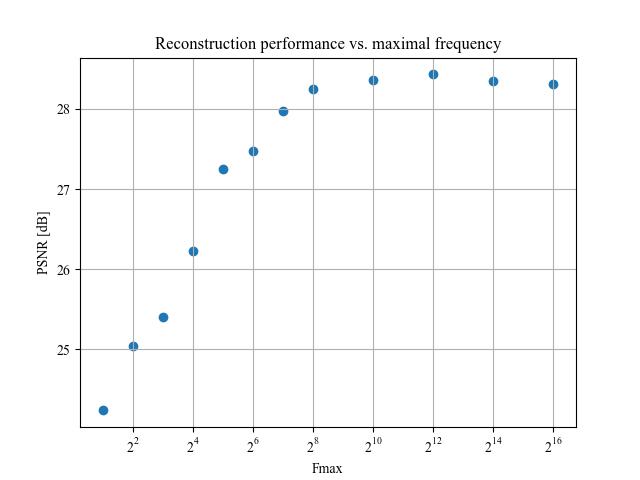}
    \end{tabular}
    \end{subfigure}
    \vspace{-5pt}
    \caption{\textbf{FF bandwidth.} We compare image denoising results over the "standard dataset`` for various $f_{max}$. As we increase $f_{max}$ the reconstruction is sharper until it reaches a plateau around $f_{max}=2^{8}$. Using a very large frequency range, $f_{max}=2^{16}$, does not drastically affect results due to the role of early-stopping in preventing noise overfitting. On the right: PSNR as a function of $f_{max}$.}
    \label{fig:max_freq_effect_on_results}
\end{figure*}

\noindent {\bf Early stopping.} A well known issues for DIP (and for PIP as well) is the problem of determining the number of iterations before hand. In our experiments, we follow the original DIP method and use a fixed number of iterations for early stopping (ES). However, there are methods for automatically choosing the stopping point. We tested EMV and WMV from \cite{wang2023early} and found they work better for PIP compared to DIP. See quantitative results in Tab. \ref{tab:early_stop}.

\begin{table}[t]
\footnotesize
\begin{minipage}[t]{0.4\linewidth}
\caption{\textbf{Early stopping} PSNR difference in dB ($\downarrow$}) from the optimal stopping point.
\label{tab:early_stop}
\centering
    \begin{tabular}{lcc}
    \toprule
         & DIP & PIP \\
         \midrule
     EMV & 1.22 & \textbf{0.82} \\
     WMV & 0.78 & \textbf{0.75}  \\
     \bottomrule
    \end{tabular}
\end{minipage}
\hspace{10pt}
\begin{minipage}[t]{0.5\linewidth}
\caption{\textbf{Computation efficiency. ($\downarrow$})} PIP archives similar performance to DIP with fewer params and FLOPs.
\label{tab:mem_flops}     
    \centering
    \renewcommand{\tabcolsep}{2pt}
    \begin{tabular}{@{}l|ccc@{}}
    \toprule
    Arch. & Params[M] & FLOPs[G] & Layers\\
    \midrule
    DIP & 2.22 & 65.84 & 26\\
    PIP & \textbf{0.33} & \textbf{12.15} & 26\\
    \bottomrule
    \end{tabular}

\end{minipage}
\end{table}

\noindent {\bf Computation efficiency.} We compared the original DIP's CNN vs. our MLP in terms of parameter count, FLOPs and layer count. Despite mostly having a similar performance, thanks to the removal of all spatial filters, the number of parameters and FLOPs decreased by a factor of $\sim 7$ and $\sim 5$ with the same amount of layers; see Tab. \ref{tab:mem_flops}.

\subsection{Spectral Bias}
Following \cite{ShiIJCV22}, which analyzed DIP and showed that the model learns first the low frequencies and then the higher frequencies, we perform a similar evaluation for PIP.
To test this, we fit a PIP model to a synthetic image comprised of 3 sinusoidal signals with frequencies $2\pi \cdot 4, 2\pi \cdot 8, 2\pi \cdot 16$, while the input frequency range is limited to $f_{max}=2\pi \cdot 8$.
Figure \ref{fig:spectral_bias} presents the synthetic image as well as the images generated during the optimization process.
We also calculate the Fourier transform of the generated images and observe the rate at which the GT delta functions corresponding to the original sinusoidal signals are recovered.
Clearly, the generated image fits the sinusoidals according to their frequency order, from small to large.

We further quantify in Figure \ref{fig:absdif_freqs} the convergence during the optimization by measuring the absolute difference between the GT and generated image Fourier transform amplitude for the 3 frequencies.
Clearly, lower frequencies converge faster.

\begin{figure*}[t]
    \centering
    \resizebox{1\linewidth}{!}{
    \begin{tabular}{cccc|c}
    \includegraphics[width=0.125\linewidth]{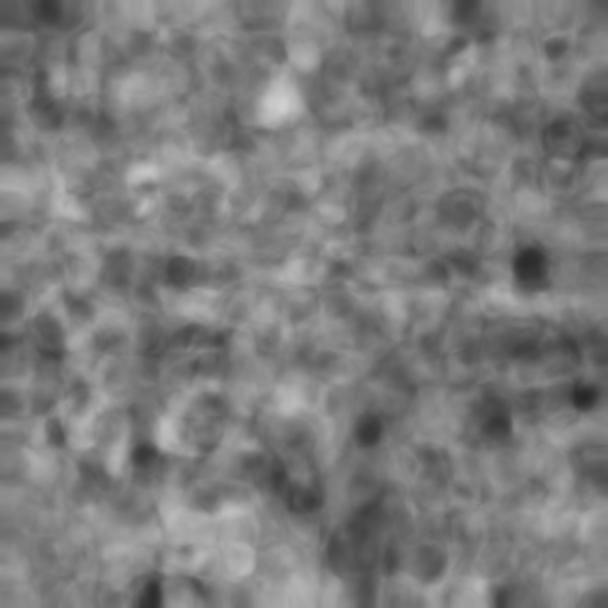} &
    \includegraphics[width=0.125\linewidth]{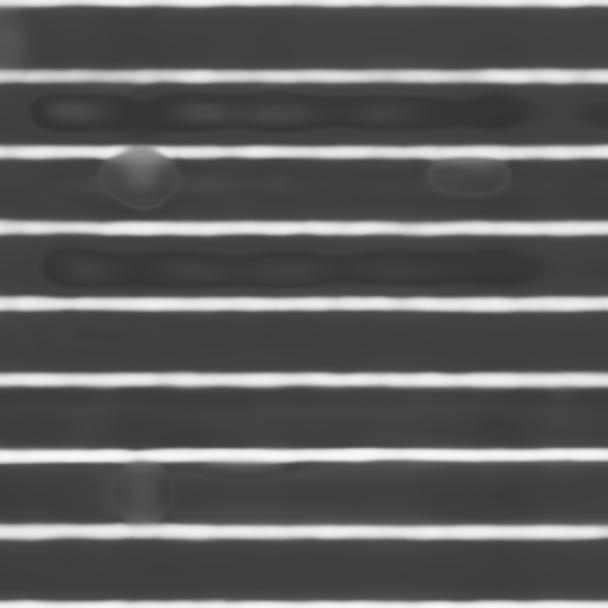} &
    \includegraphics[width=0.125\linewidth]{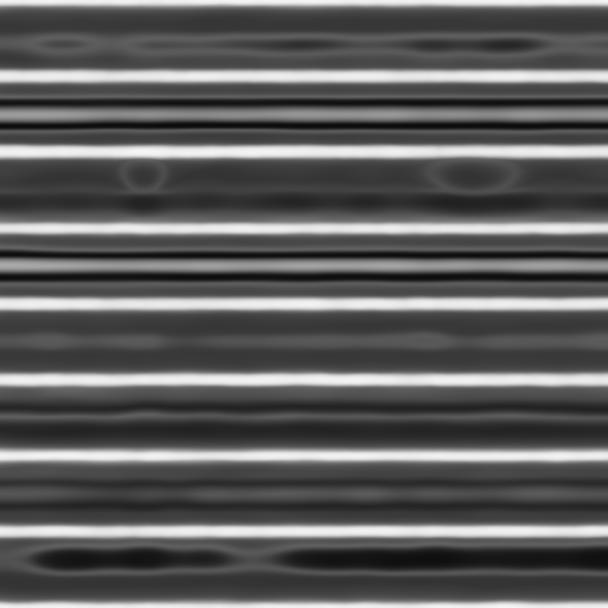} &
    \includegraphics[width=0.125\linewidth]{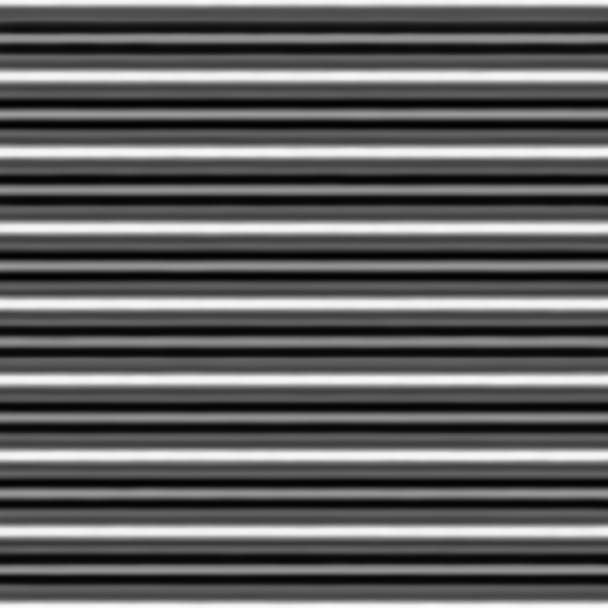} &
    \includegraphics[width=0.125\linewidth]{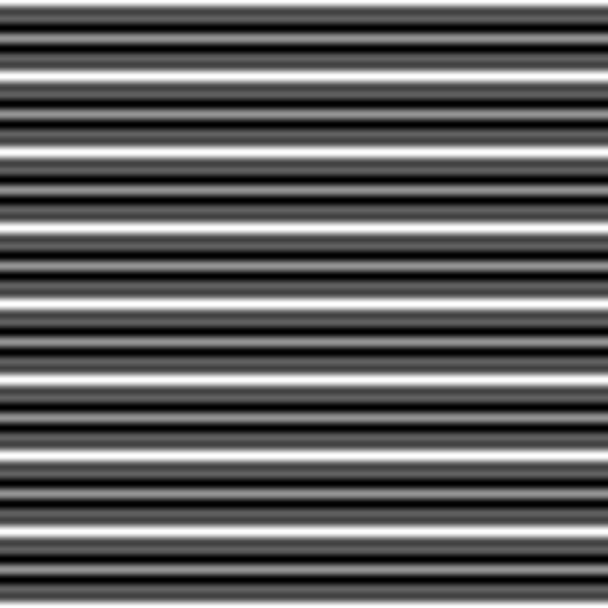} 
    \\
    \includegraphics[width=0.125\linewidth]{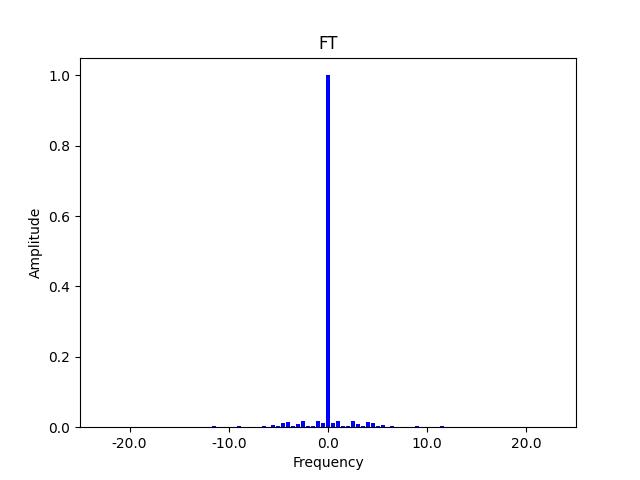} &
    \includegraphics[width=0.125\linewidth]{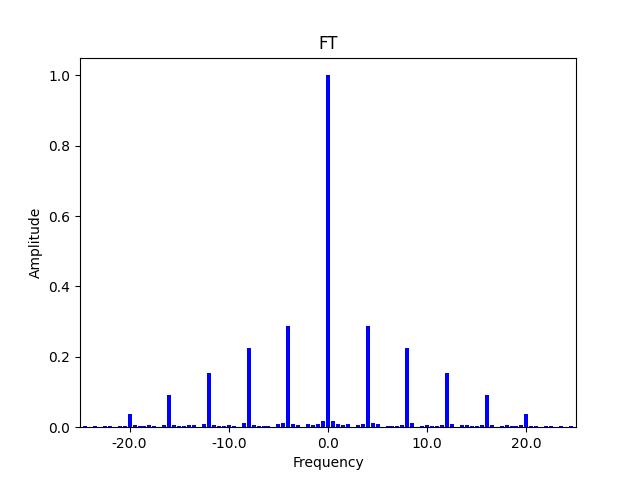} &
    \includegraphics[width=0.125\linewidth]{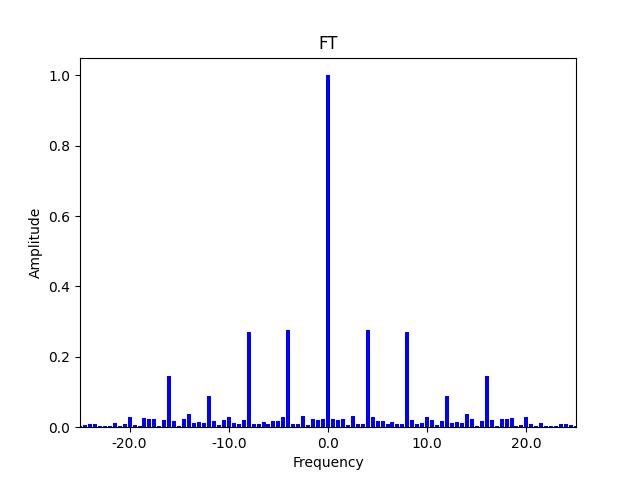} &
    \includegraphics[width=0.125\linewidth]{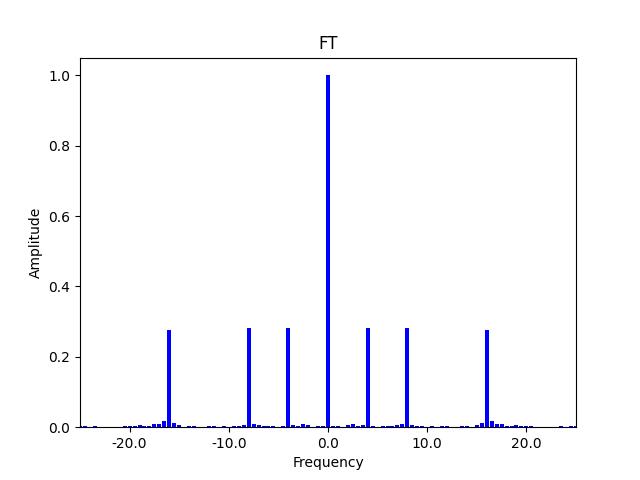} &
    \includegraphics[width=0.125\linewidth]{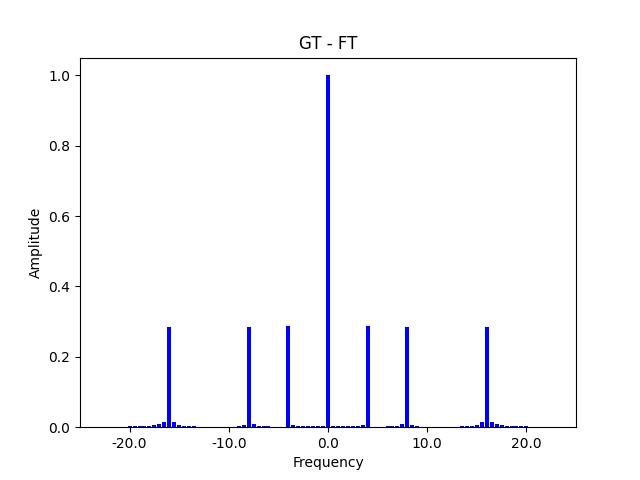} \\
    \scriptsize Iteration 0 & \scriptsize Iteration 40 & \scriptsize Iteration 80 & \scriptsize Iteration 200  & \scriptsize GT 
    \end{tabular}
    }
    \vspace{-5pt}
    \caption{\textbf{Spectral Bias.} We fit a PIP model to a synthetic image with 3 vertical frequencies (right column). We show the generated images as the training progresses (first row) and the Fourier transforms over a vertical line in the image (second row). We find that the model is biased towards first fitting the lower frequencies (lowest frequency at $\sim40$ iterations; middle at $\sim80$; highest at $\sim200$). This explains why when the model is early-stopped the image's lower frequencies are fitted while the high-frequency noise is removed. The highest frequency in the synthetic image is larger than $f_{max}$ in the model inputs, yet, the model is able to fit it too.}
    \vspace{-5pt}
    \label{fig:spectral_bias}
\end{figure*}

\begin{figure}[t]
    \centering    
    \includegraphics[width=0.7\linewidth]{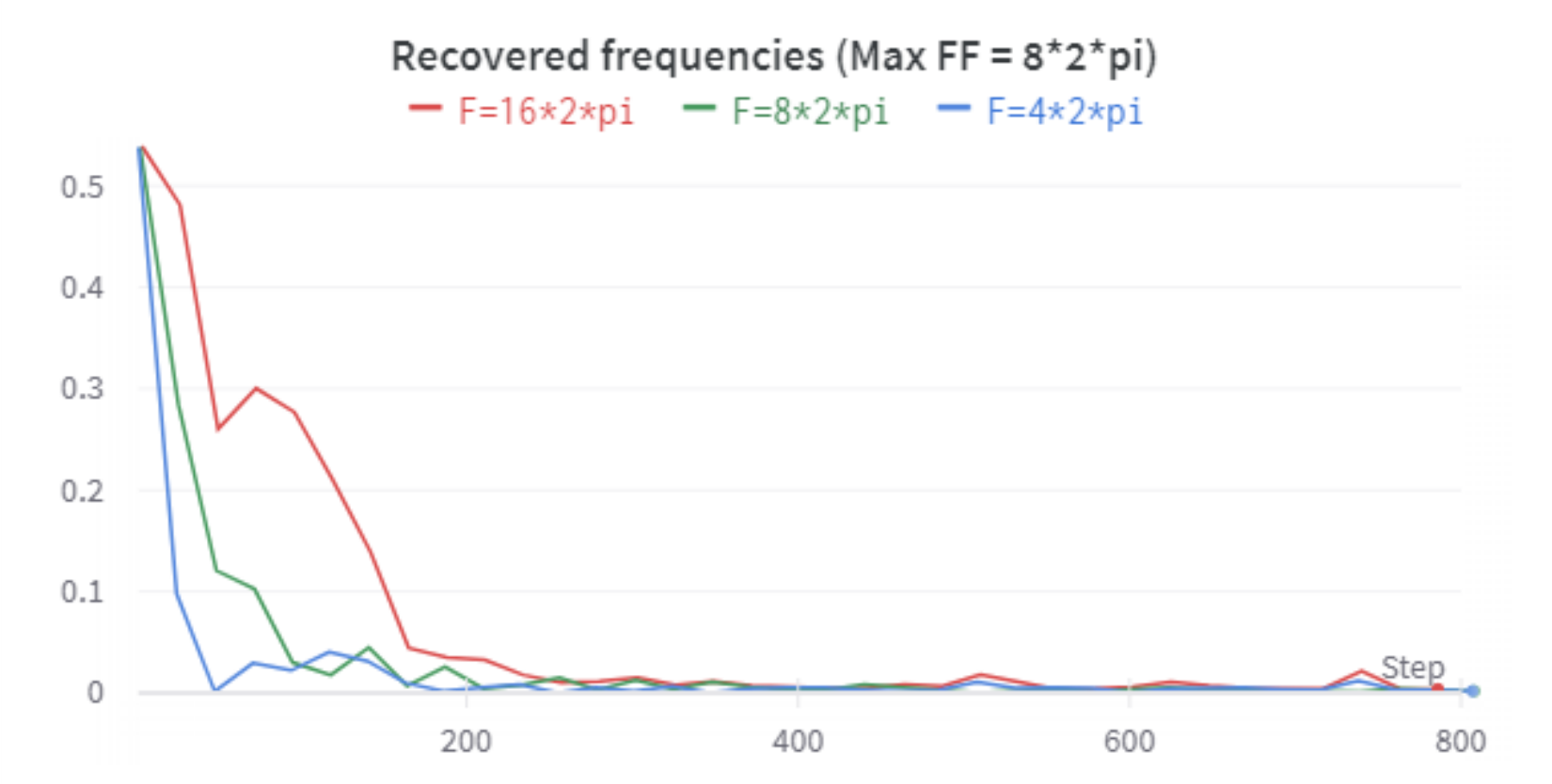}
    \vspace{-0.1in}
    \caption{\textbf{Spectral Bias.} We measure the absolute difference between the predicted frequencies' amplitude vs. the GT amplitudes. Lower frequencies are fitted first during the optimization.}
    \label{fig:absdif_freqs}
\end{figure}

\subsection{Additional Applications}
We further demonstrate that PIP can serve as a drop-in replacement in other applications DIP is used for.

\noindent {\bf Inpainting.} We follow the examples demonstrated for DIP. In both cases, we used the learned-FF configuration and increased the number of iterations to $8k$ instead of $6k$. Examples of our inpainting results are presented in Figs.~\ref{fig:inpainting} and \ref{fig:inpainting_sup}. 
The architecture used in DIP for hole inpainting is different than the architecture used in denoising and SR in its depth (6 levels instead of 5), in its building blocks (no skip connections and bigger spatial kernels), in its embedding depth at each layer, and in the input depth (changed from 32 to 1).
To make minimal adaptations to our PIP network we simply added another layer to match the same number of levels as DIP, we keep the rest of our setup exactly as described in the paper. 
Since PIP is less sensitive to the architecture choice, it achieves similar or better results compared to DIP with no need for re-designing the architecture for this task.

\begin{figure*}[t]
\vspace{-5pt}
    \centering
    \resizebox{0.85\linewidth}{!}{
    \begin{tabular}{ccccc}
    Original & Masked Image & DIP & PIP-CNN & PIP-MLP  \\
    \includegraphics[width=0.25\textwidth]{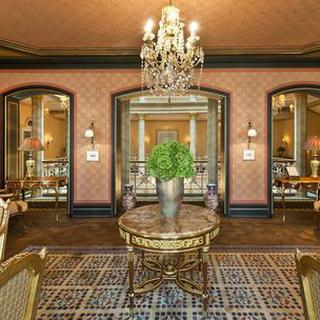} &
    \includegraphics[width=0.25\textwidth]{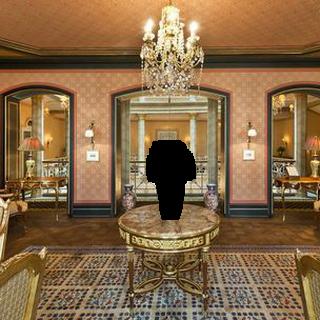} &
    \includegraphics[width=0.25\textwidth]{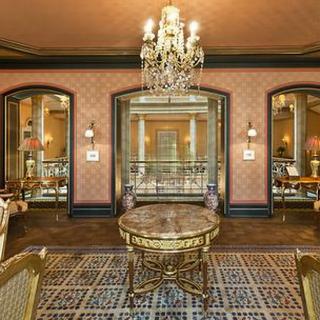} &
        \includegraphics[width=0.25\textwidth]{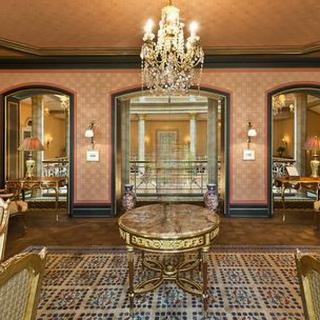} &
    \includegraphics[width=0.25\textwidth]{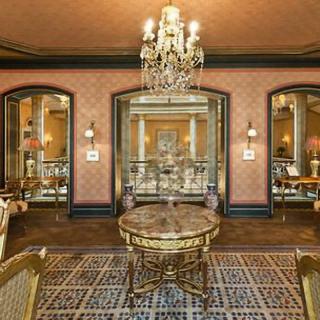}\\
    \includegraphics[width=0.25\textwidth]{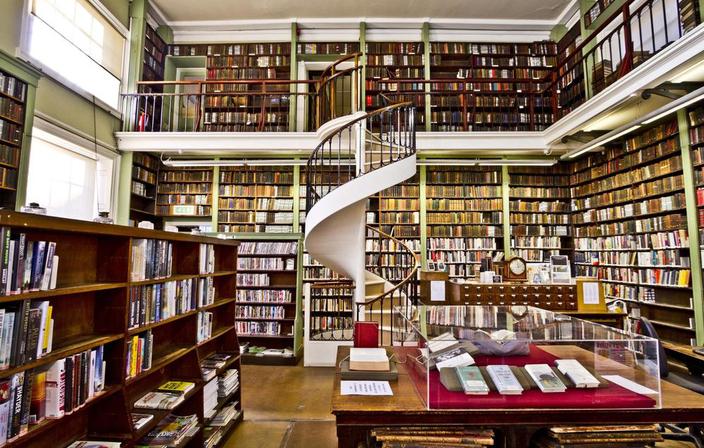} &
    \includegraphics[width=0.25\textwidth]{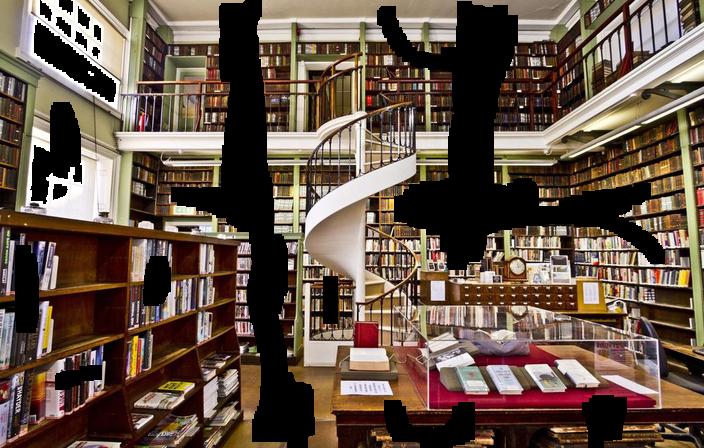} &
    \includegraphics[width=0.25\textwidth]{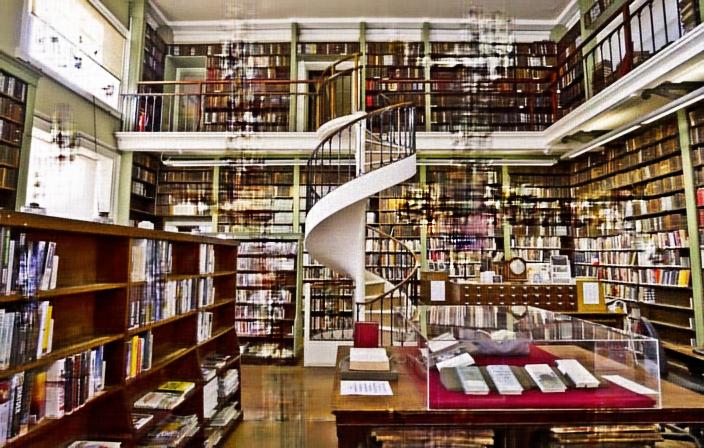} &
        \includegraphics[width=0.25\textwidth]{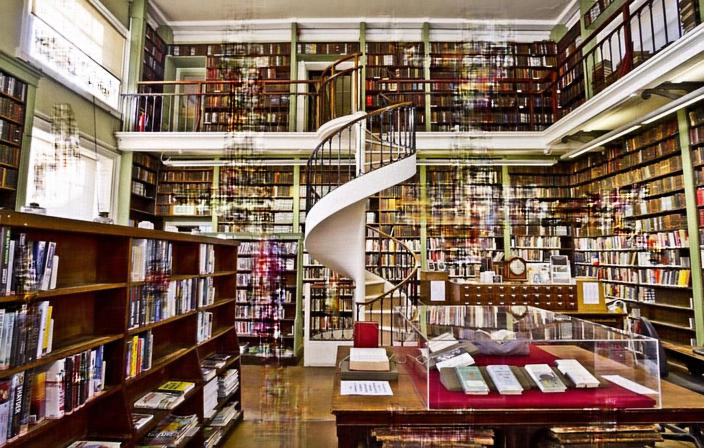} &
    \includegraphics[width=0.25\textwidth]{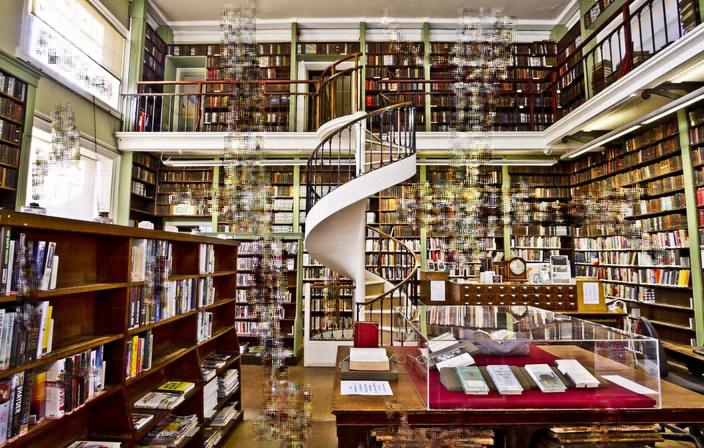}\\
    \end{tabular}
    }
    \vspace{-5pt}
    \caption{\textbf{Image Inpainting.} A comparison for large hole inpainting. PIP-CNN outperforms visually in both images, where PIP-MLP shows comparable results to DIP without any spatial kernels.}
    \label{fig:inpainting}
    \vspace{-5pt}
\end{figure*}

\noindent {\bf Double-PIP - Dehazing.}
Image dehazing is the problem of extracting a haze-free image from a hazy image. Formally, the image acquisition is modeled as
$I(x)=J(x)t(x)+A(1-t)(x)$ where I is a hazy image, J is a haze-free image, A is the airlight model and $t(x)=e^{-\beta d(x)}$. In Double-DIP \cite{DoubleDIP}, three DIP models are trained together to generate $J(x)$ and $t(x)$ ($A$ can be derived from other algorithms or learned) with respect to the above model.
Again we evaluate PIP as a drop-in replacement.
We switched all noise inputs to FF and the original CNNs to our MLPs.
Figs. \ref{fig:double_pip} and \ref{fig:dehazing2} show that PIP may replace DIP also here.

\begin{figure}[t]
\vspace{-5pt}
    \centering
    \resizebox{1\linewidth}{!}{
    \begin{tabular}{ccc}
    Input & DIP & PIP (Ours)  \\
    \includegraphics[width=0.25\textwidth]{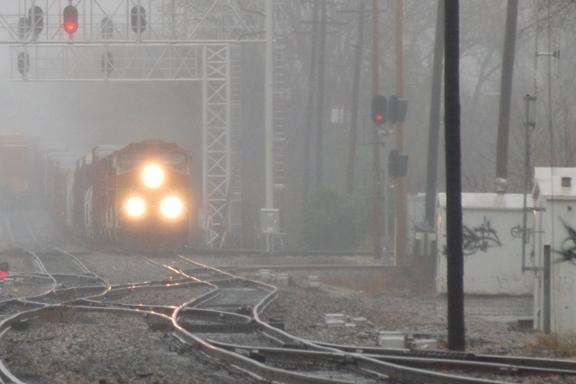} &
    \includegraphics[width=0.25\textwidth]{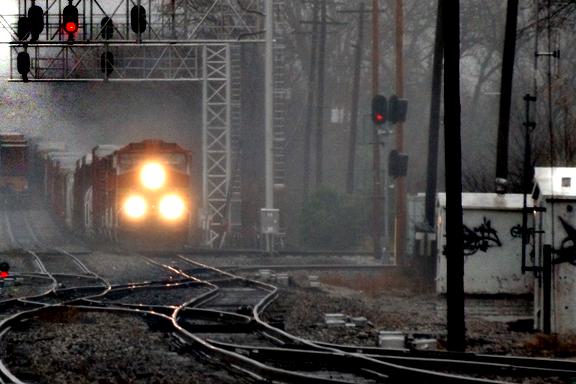} &
    \includegraphics[width=0.25\textwidth]{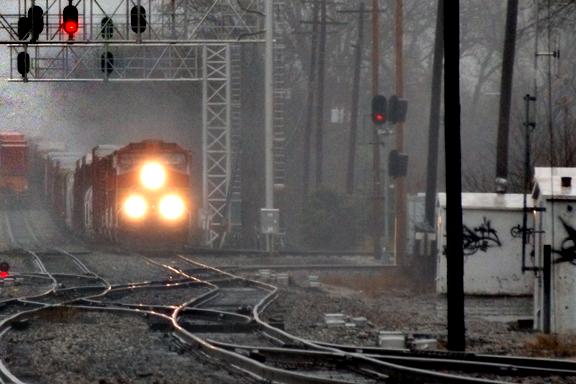}\\
    \includegraphics[width=0.25\textwidth]{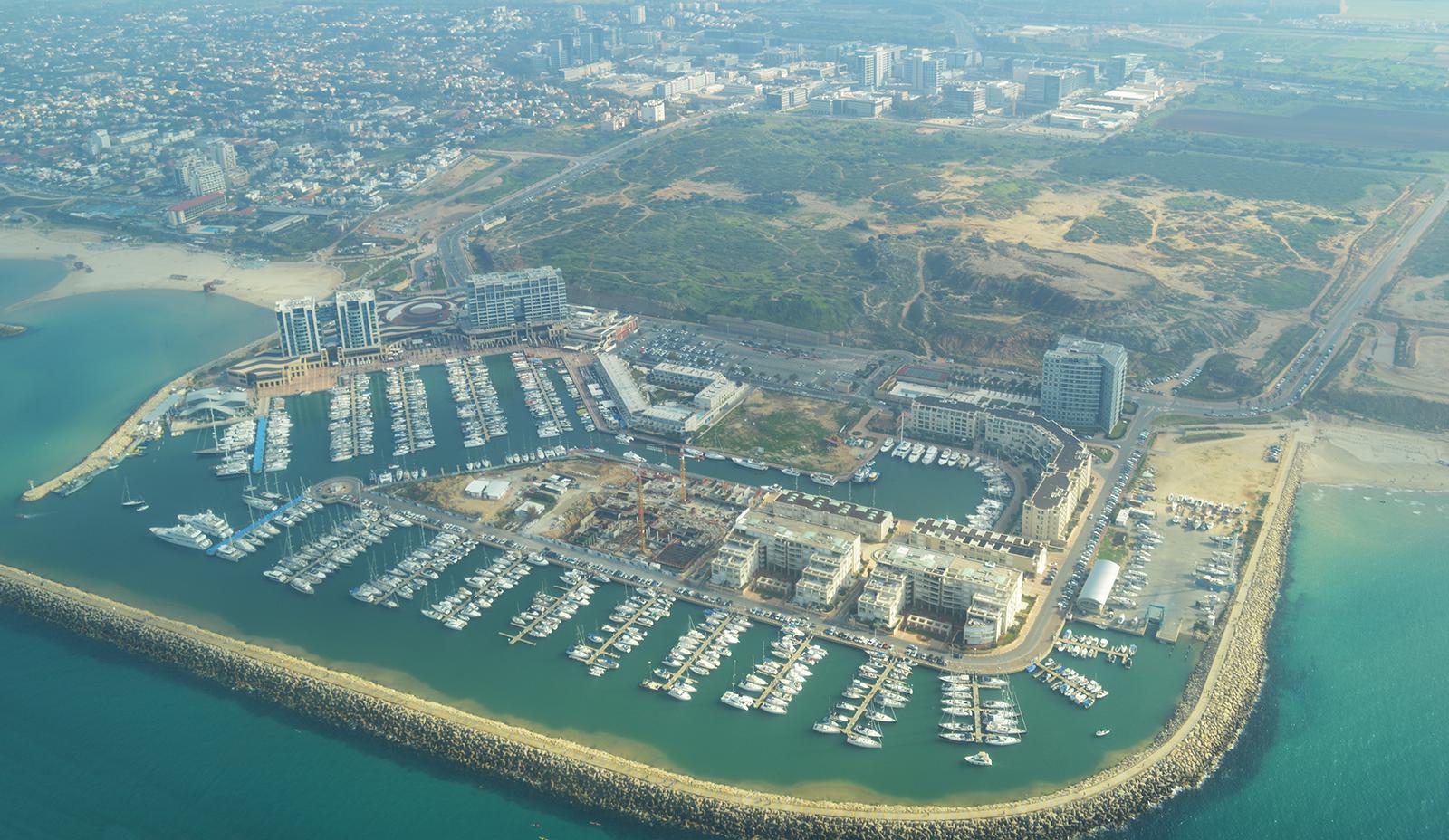} &
    \includegraphics[width=0.25\textwidth]{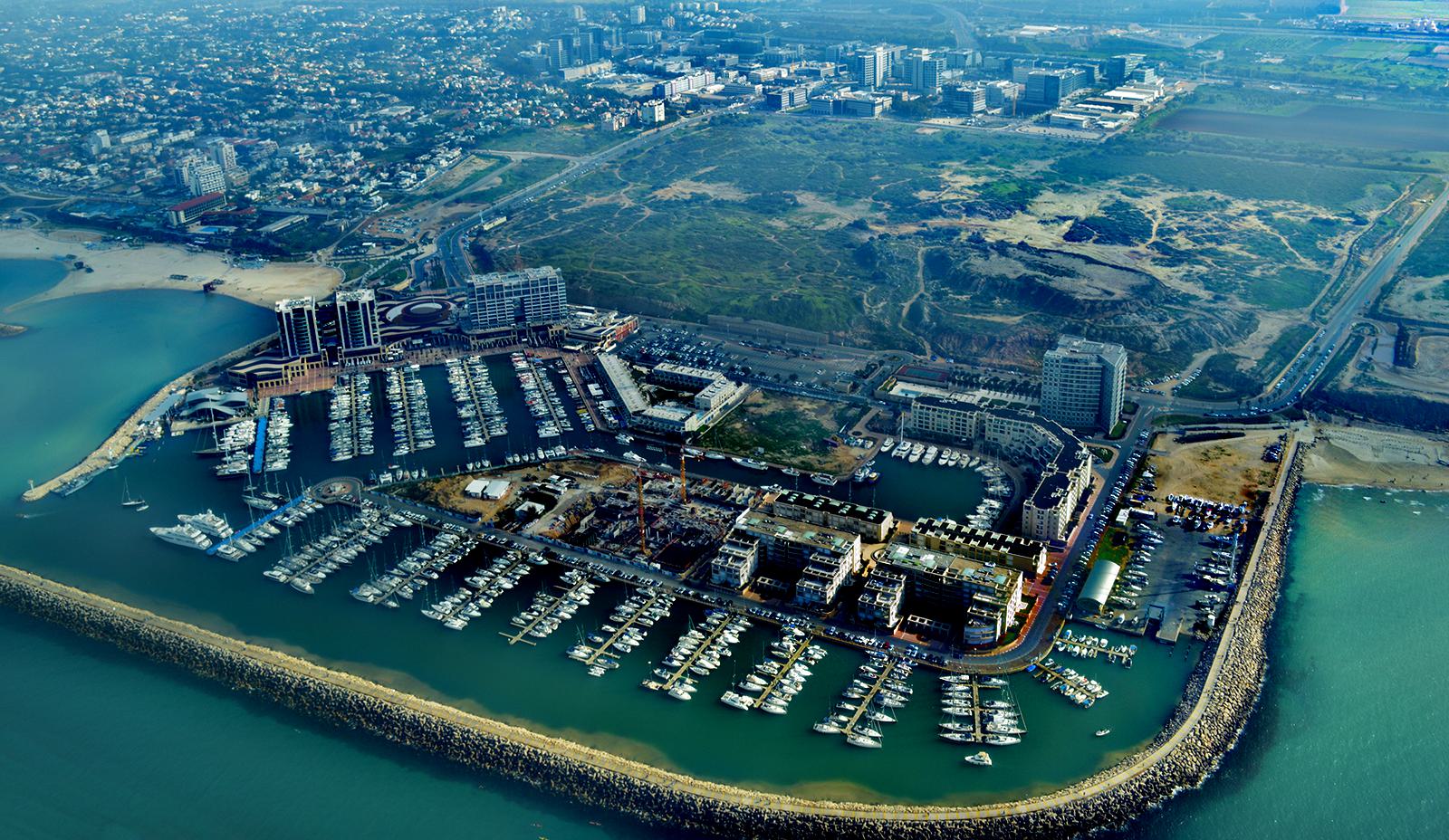} &
    \includegraphics[width=0.25\textwidth]{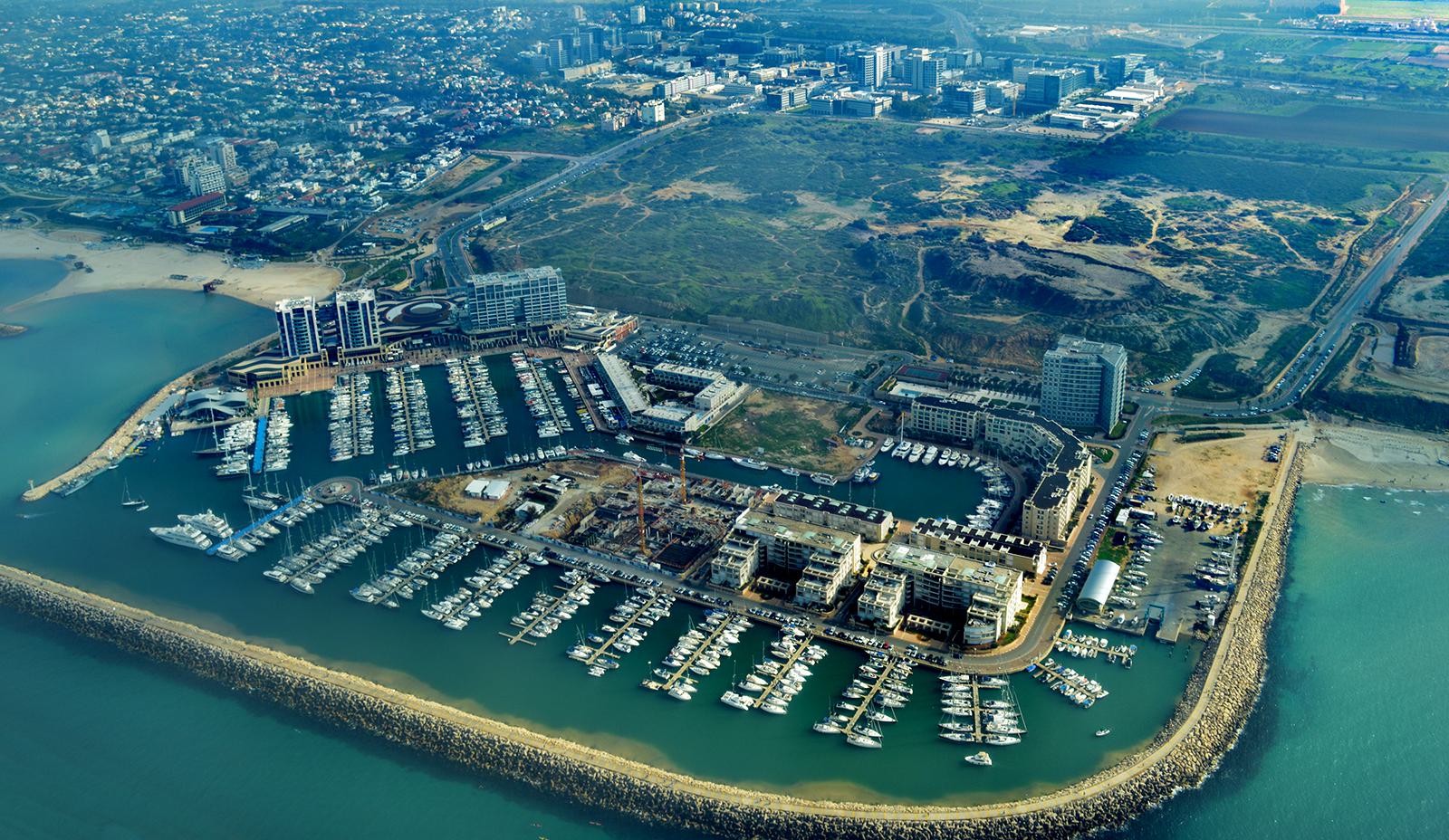}\\
    \includegraphics[width=0.25\textwidth]{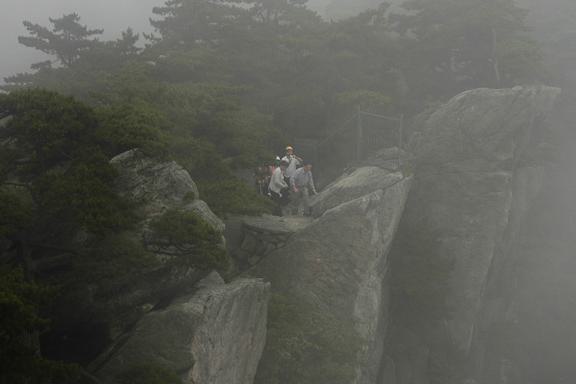} &
    \includegraphics[width=0.25\textwidth]{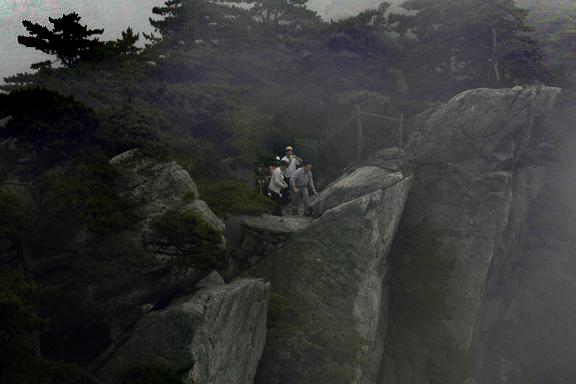} &
    \includegraphics[width=0.25\textwidth]{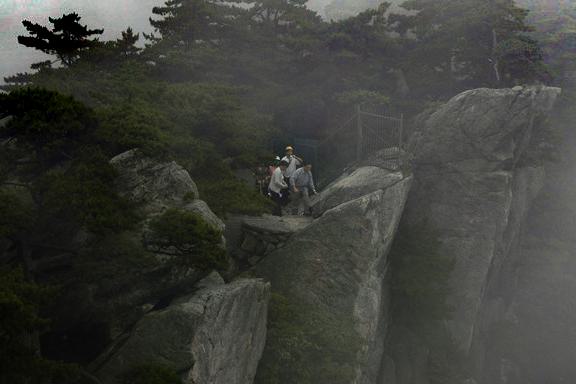} \\
    \includegraphics[width=0.25\textwidth]{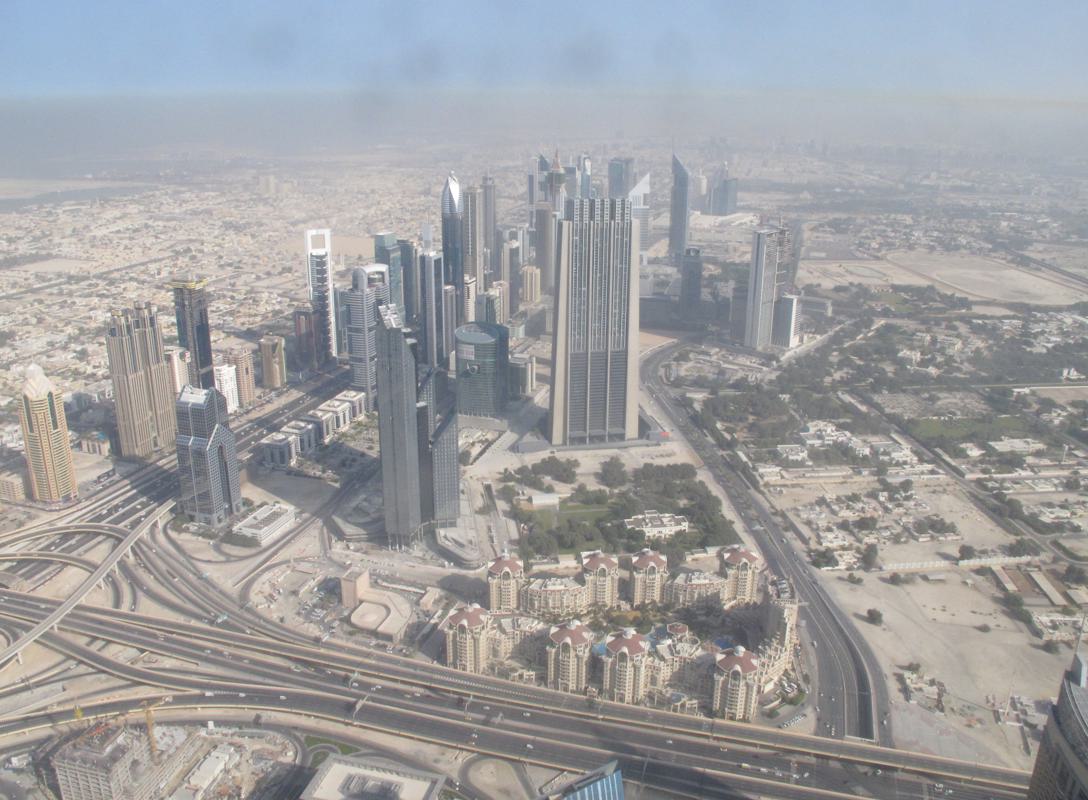} &
    \includegraphics[width=0.25\textwidth]{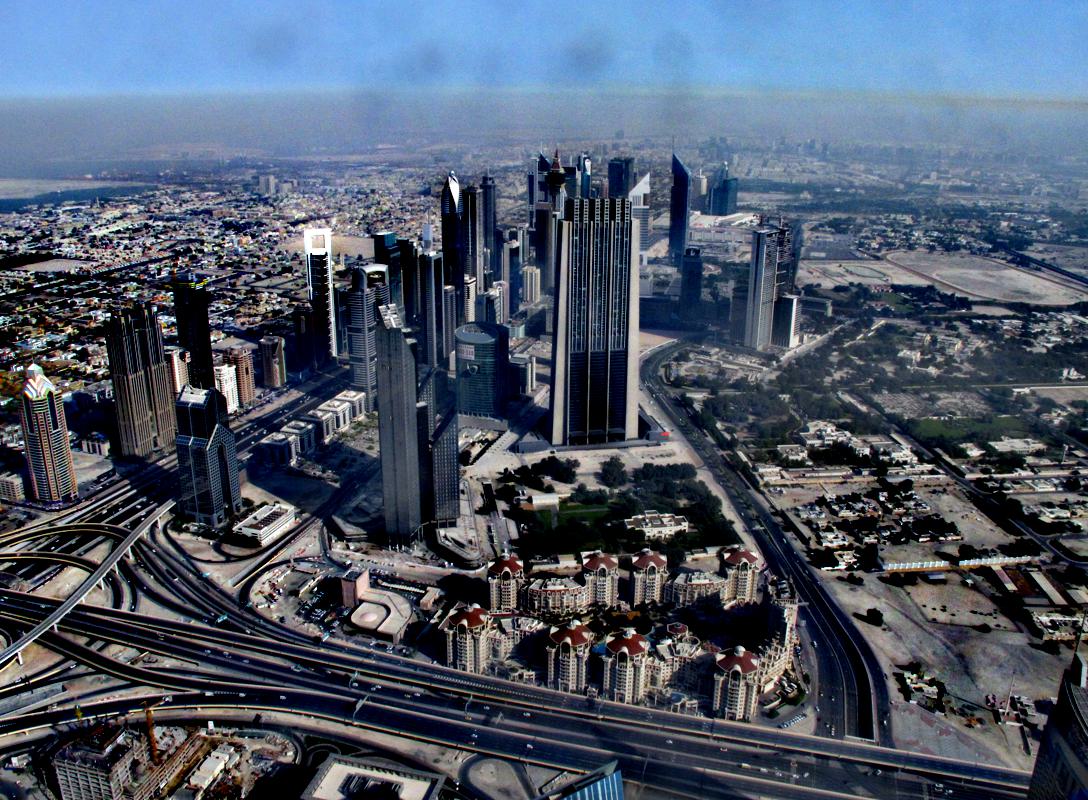} &
    \includegraphics[width=0.25\textwidth]{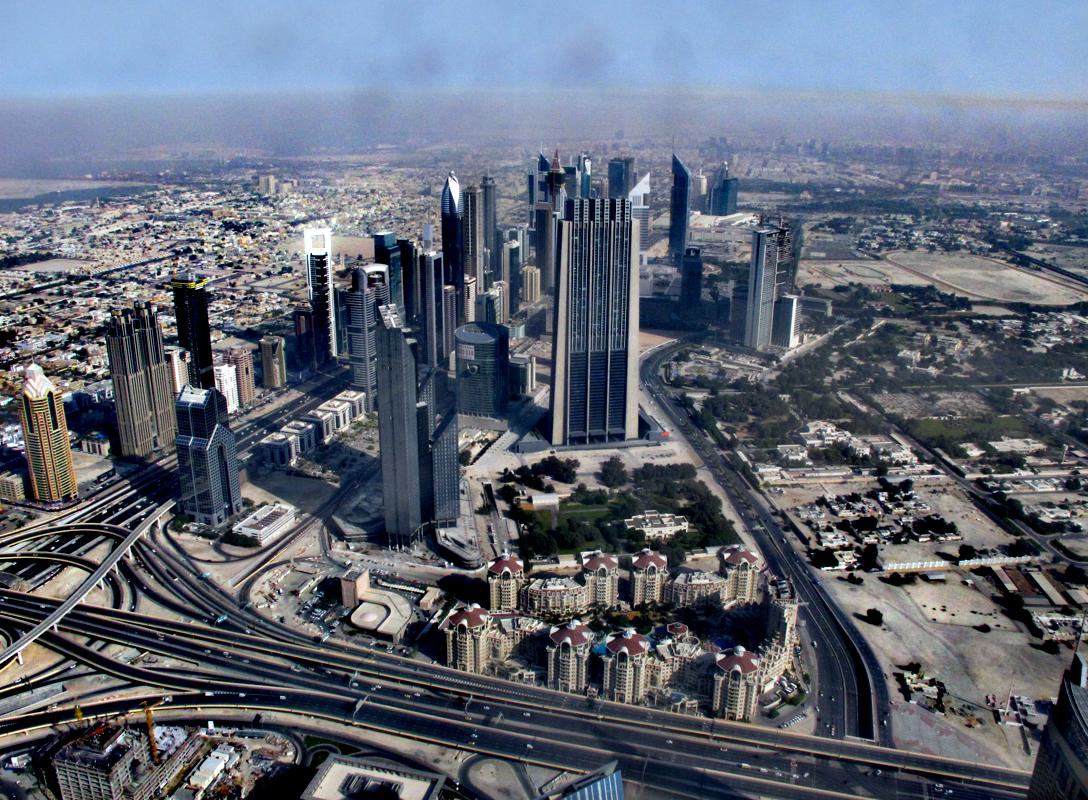} \\
    \includegraphics[width=0.25\textwidth]{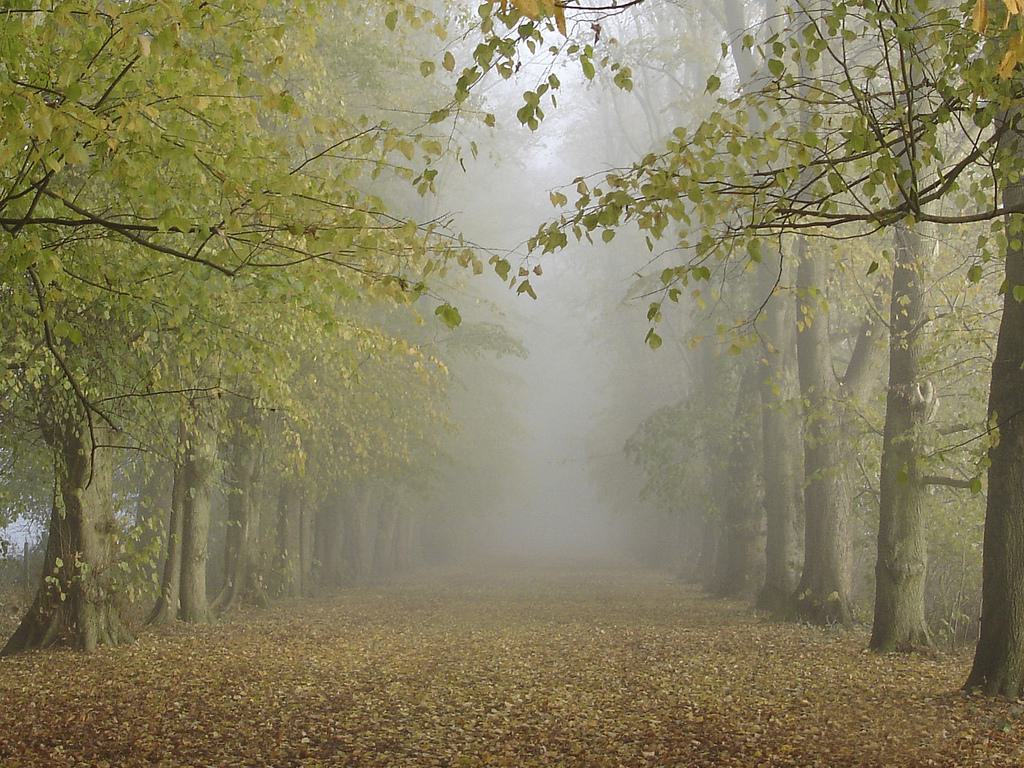} &
    \includegraphics[width=0.25\textwidth]{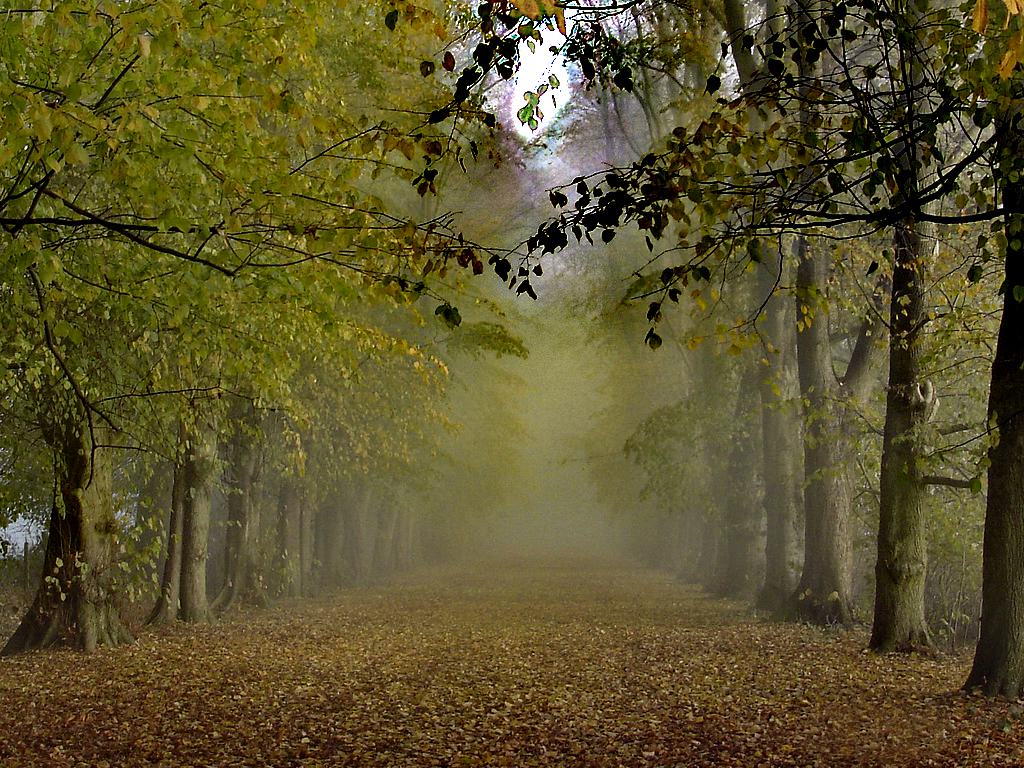} &
    \includegraphics[width=0.25\textwidth]{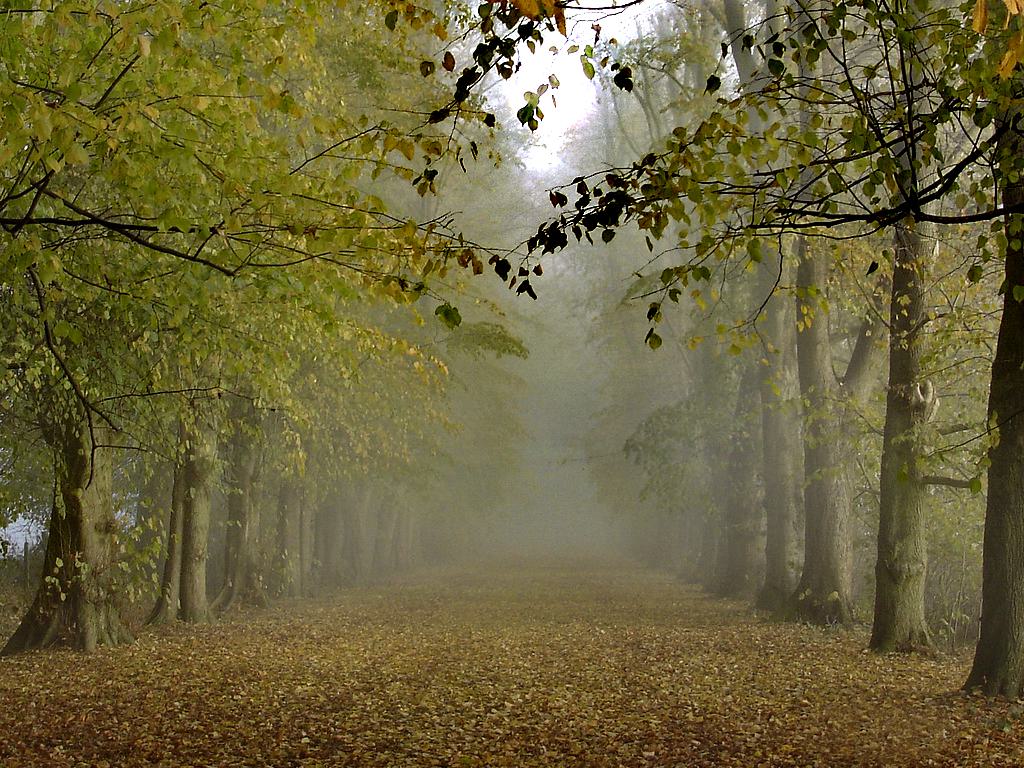} \\
    \includegraphics[width=0.25\textwidth]{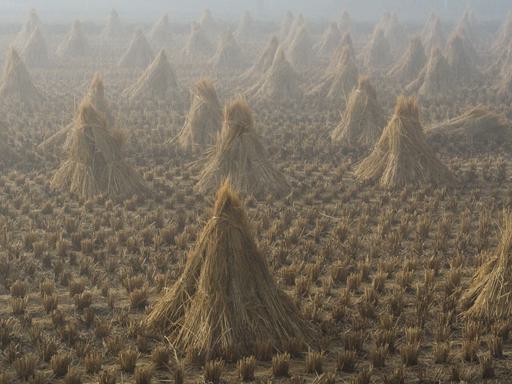} &
    \includegraphics[width=0.25\textwidth]{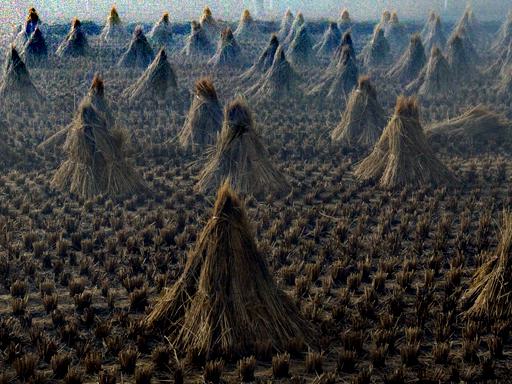} &
    \includegraphics[width=0.25\textwidth]{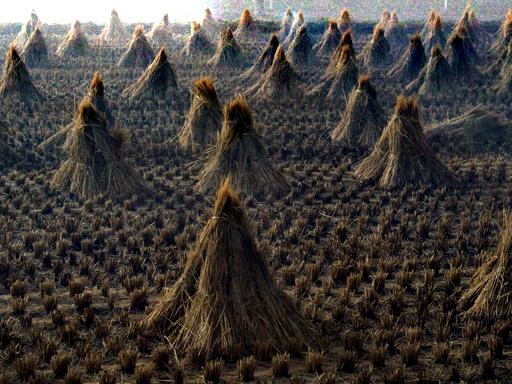} \\  
    \end{tabular}
    }
    \vspace{-5pt}
    \caption{\textbf{Blind image dehazing.} We compare ``Double-PIP" to ``Double-DIP", where both methods produce comparable visual results.}
    \label{fig:double_pip}
    \vspace{-5pt}
\end{figure}

\noindent {\bf CLIP inversion.}
Similarly to the original DIP inversion of AlexNet/VGG (generating images from class labels), recently DIP has been used for generating images from textual descriptions using the CLIP model \cite{CLIP_inversion}. DIP serves as an image generator conditioned on input text, such that the generated image embedding is close to the text embedding. Figs.~\ref{fig:clip}, \ref{fig:clip_0} and \ref{fig:clip_2} show examples of image generation by inverting CLIP using both DIP and PIP.

\begin{figure}[t]
    \centering
    \resizebox{\linewidth}{!}{
    \begin{tabular}{lccc}
         \multirow{5}{*}{DIP} &
         \multirow{5}{*}{\includegraphics[width=0.3\linewidth]{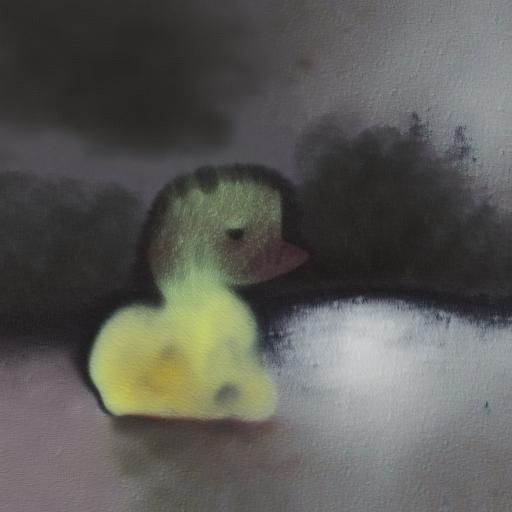}} & 
         \multirow{5}{*}{\includegraphics[width=0.3\linewidth]{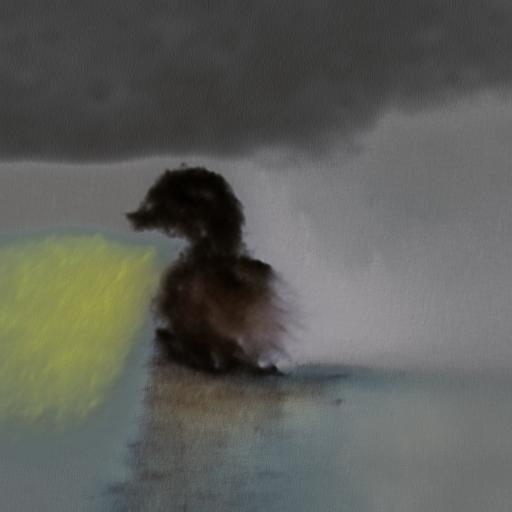}} & 
         \multirow{5}{*}{\includegraphics[width=0.3\linewidth]{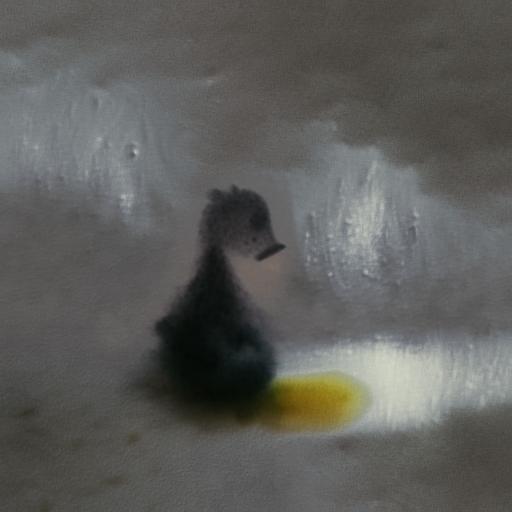}} \\ [1in]
         \multirow{5}{*}{PIP} &
         \multirow{5}{*}{\includegraphics[width=0.3\linewidth]{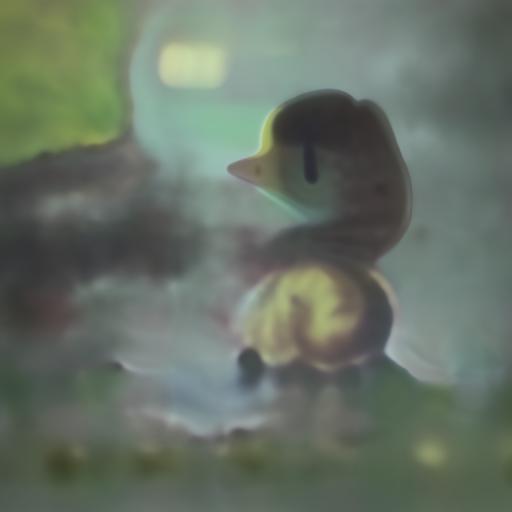}} & 
         \multirow{5}{*}{\includegraphics[width=0.3\linewidth]{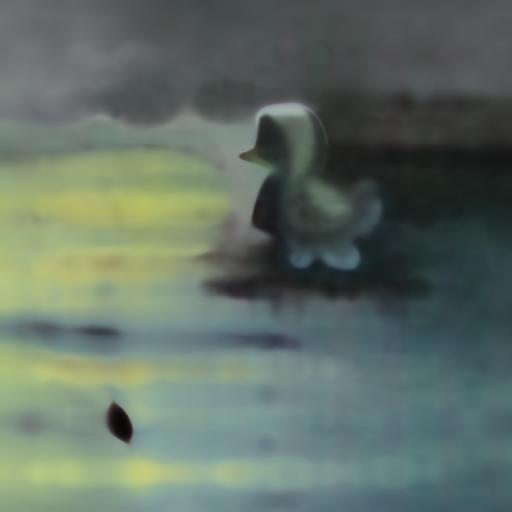}} & 
         \multirow{5}{*}{\includegraphics[width=0.3\linewidth]{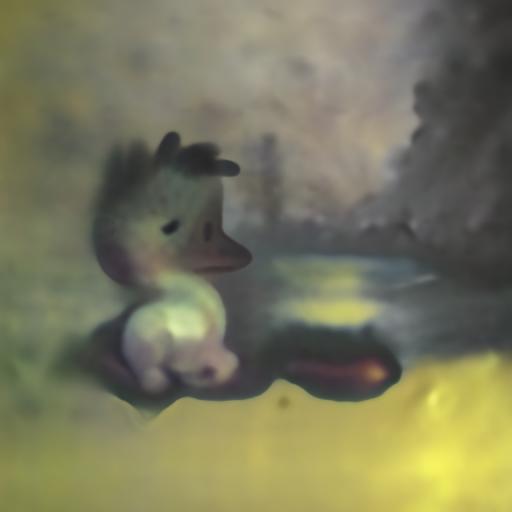}} \\[1in]
         
    \end{tabular}
    }
    \caption{\textbf{CLIP inversion.} Generated images for the description: `A moody painting of a lonely duckling'.}
    \label{fig:clip}    
\end{figure}

\begin{figure}[]
    \resizebox{\columnwidth}{!}{
    \begin{tabular}{lccc}
         \multirow{5}{*}{DIP} &
         \multirow{5}{*}{\includegraphics[width=0.25\linewidth]{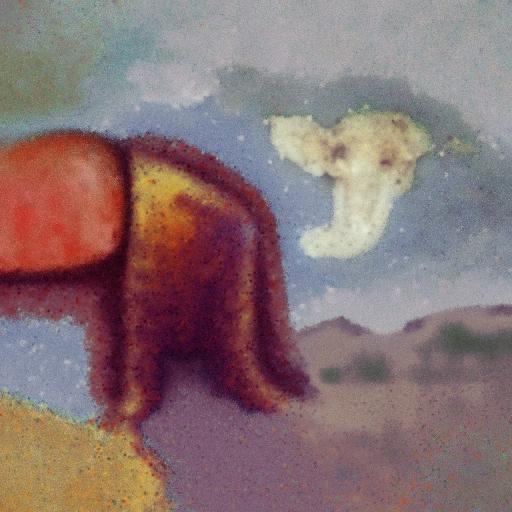}} & 
         \multirow{5}{*}{\includegraphics[width=0.25\linewidth]{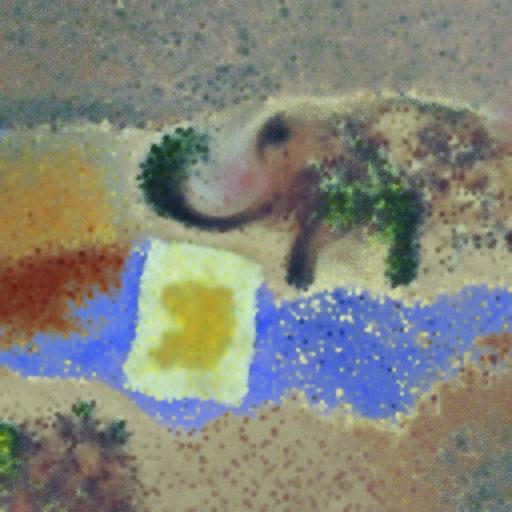}} & 
         \multirow{5}{*}{\includegraphics[width=0.25\linewidth]{sup_mat_figures/Image/CLIP/DIP/an_oil_painting_of_an_elephant_sleeping_in_the_desert_1.jpg}} \\[0.8in]
         \multirow{5}{*}{PIP} &
         \multirow{5}{*}{\includegraphics[width=0.25\linewidth]{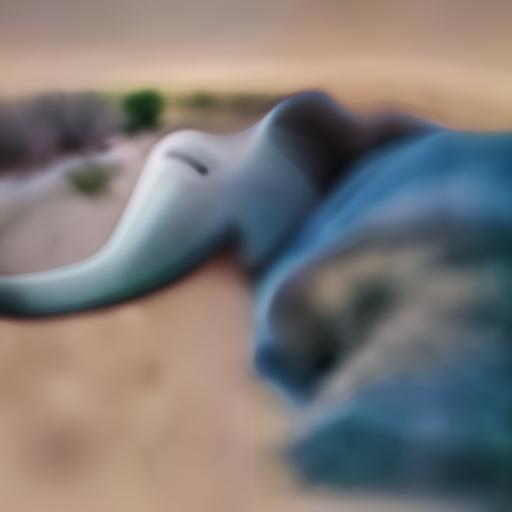}} & 
         \multirow{5}{*}{\includegraphics[width=0.25\linewidth]{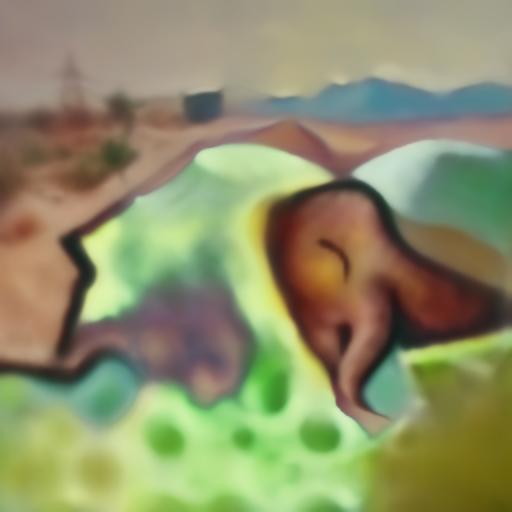}} & 
         \multirow{5}{*}{\includegraphics[width=0.25\linewidth]{sup_mat_figures/Image/CLIP/PIP/an_oil_painting_of_an_elephant_sleeping_in_the_desert_1.jpg}} \\[0.8in]
    \end{tabular}
    }    
    \caption{\textbf{CLIP inversion.} Generated images for the description: `An oil painting of an elephant sleeping in the desert'.}
    \label{fig:clip_0}    
\end{figure}

\subsection{Extension to Video}

\begin{figure}[t]
    \centering    
    \resizebox{1\linewidth}{!}{
    \begin{tabular}{p{1cm} cc}
     & frame 7 & frame 15 \\
    \multirow{5}{*}{GT} &
    \multirow{5}{*}{\includegraphics[width=0.25\textwidth]{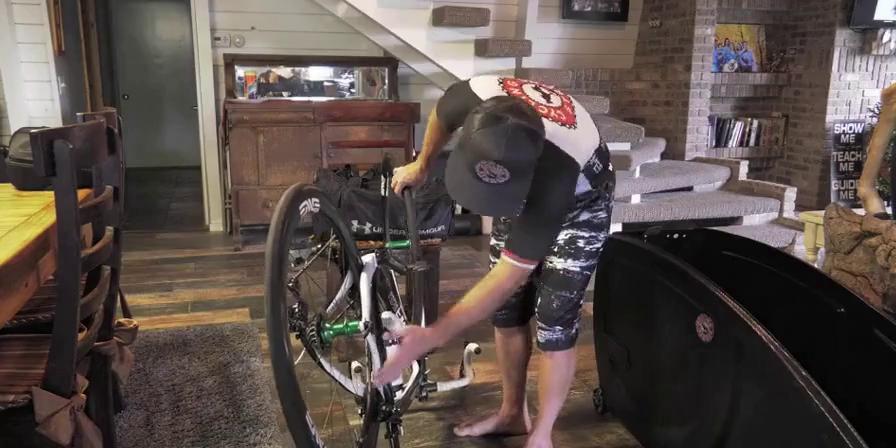}} &
    \multirow{5}{*}{\includegraphics[width=0.25\textwidth]{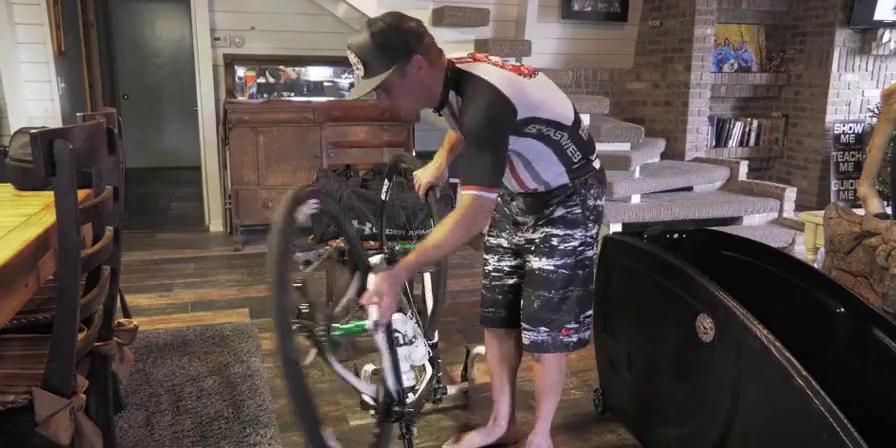}} \\[0.7in]
    \multirow{5}{*}{Noisy} &
    \multirow{5}{*}{\includegraphics[width=0.25\textwidth]{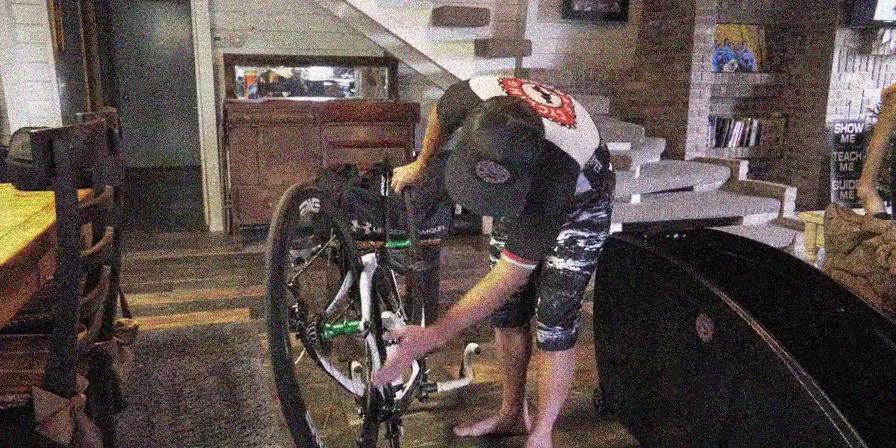}} &
    \multirow{5}{*}{\includegraphics[width=0.25\textwidth]{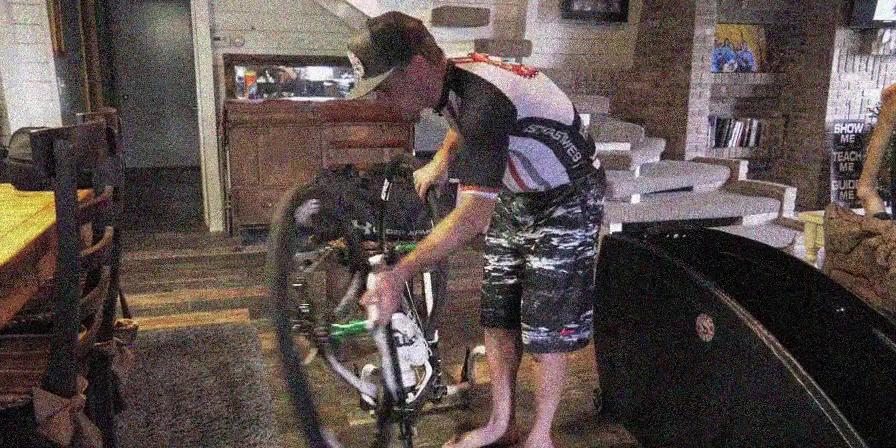}} \\[0.7in]    
    \multirow{5}{*}{3D-DIP} &
    \multirow{5}{*}{\includegraphics[width=0.25\textwidth]{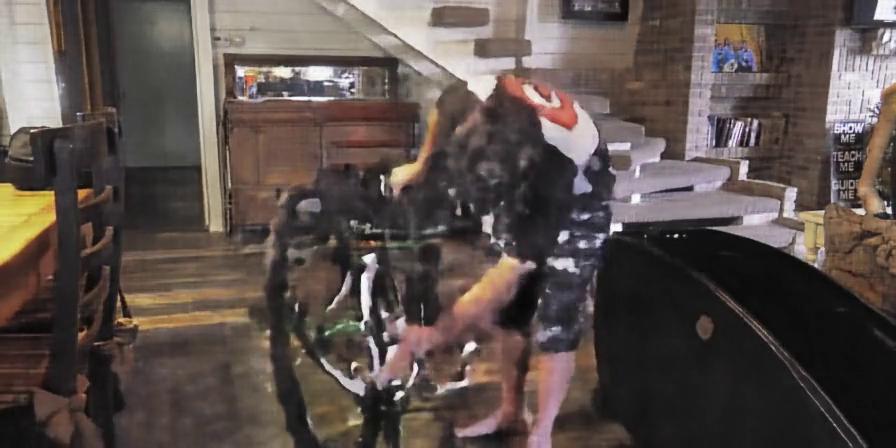}} &
    \multirow{5}{*}{\includegraphics[width=0.25\textwidth]{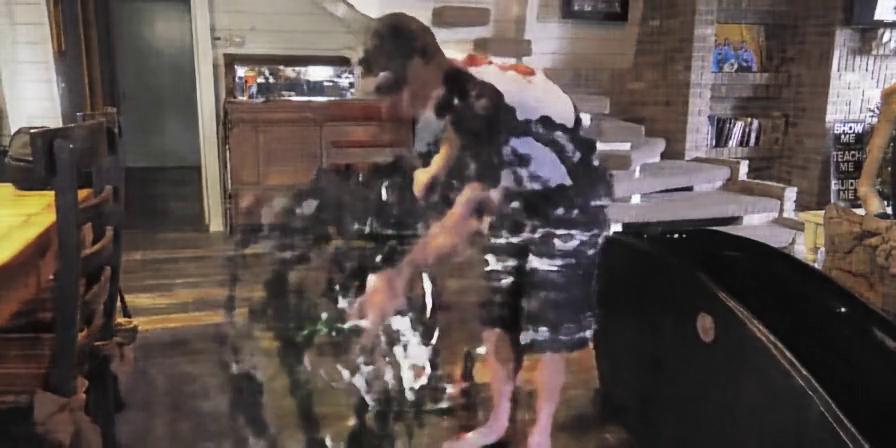}} \\[0.7in]
    \multirow{5}{*}{3D-PIP} &
    \multirow{5}{*}{\includegraphics[width=0.25\textwidth]{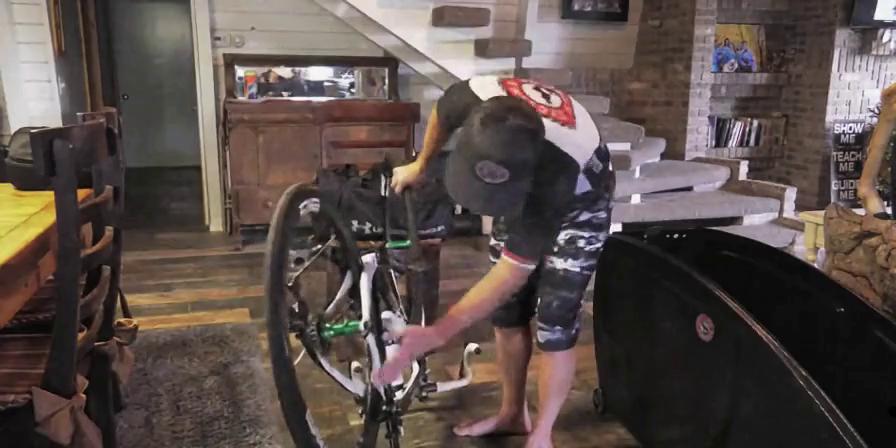}} &
    \multirow{5}{*}{\includegraphics[width=0.25\textwidth]{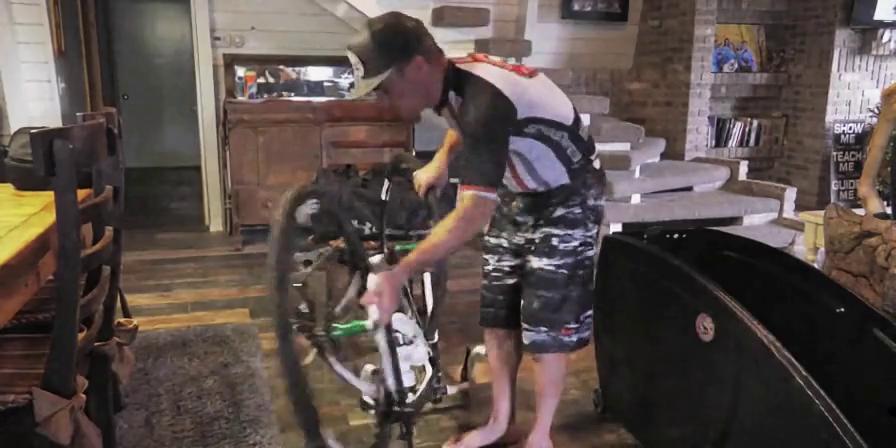}} \\[0.7in]
    \end{tabular}
    }    
    \caption{\textbf{Video denoising.} We compare 3D-PIP to 
    3D-DIP (3D conv.). 3D-PIP generates better images (e.g. the grass area) and achieves better temporal consistency (full videos in the project page). }
    \label{fig:video_denoising}    
\end{figure}

\begin{table}[t]
\caption{Image Prior Video-Denoising and SR: PSNR/3D-SSIM. 3D-PIP outperforms 2D-DIP (frame-by-frame) and 3D-DIP both in terms of pixel reconstruction (PSNR) and temporal consistency (3D-SSIM).}
    \label{tab:video_denoising_image_prior}
    \centering
    \resizebox{\linewidth}{!}{
    \begin{tabular}{l|ccc|c}
    \toprule
    & \multicolumn{3}{c|}{Denoising} &  SR \\
    Method & $\sigma=25$ & $\sigma=10$ & Poisson & x4 \\
     \midrule
    2D-DIP \cite{ulyanov2018deep}&26.35/0.80 & 26.63/0.81 & 26.72/0.81 & 25.54/0.81\\
    3D-DIP \cite{Zhang_2019_ICCV}&24.92/0.82 & 25.15/0.83 & 25.27/0.83 & 22.91/0.72\\
    DVP    \cite{lei2020blind}   &25.88/0.83 & 26.01/0.83 & 25.75/0.83 & 24.29/0.79\\
    2D-PIP &26.17/0.80 & 26.61/0.81 & 26.67/0.81 & 25.45/0.78\\
    3D-PIP &\textbf{29.29/0.9}&\textbf{30.06/0.91} & \textbf{29.97/0.91} & \textbf{26.04/0.83}\\
    \bottomrule
    \end{tabular}
    }   
\end{table}

\begin{table}[]
\caption{Implicit models Video-Denoising and SR: PSNR/3D-SSIM. 3D-PIP outperforms SIREN and Gaussian Activations both in terms of pixel reconstruction (PSNR $\uparrow$) and temporal consistency (3D-SSIM $\uparrow$).}
    \label{tab:video_denoising_implicit}
    \centering
    \resizebox{\linewidth}{!}{
    \begin{tabular}{l|ccc|c}
    \toprule
    & \multicolumn{3}{c|}{Denoising} &  SR \\
    Method & $\sigma=25$ & $\sigma=10$ & Poisson & x4 \\
     \midrule
    SIREN \cite{sitzmann2019siren}            & 26.55/0.72 & 28.43/0.79 & 28.35/0.79 & 23.33/0.57\\
    Gauss. Act. \cite{ramasinghe2022beyond}   & 23.85/0.74 & 24.10/0.74 & 24.04/0.74 & 22.86/0.71\\    
    3D-PIP                                    & \textbf{29.29/0.9}&\textbf{30.06/0.91} & \textbf{29.97/0.91} & \textbf{26.04/0.83}\\
    \bottomrule
    \end{tabular}
    }   
\end{table}

We tested an extension of PIP to video (3D-PIP) denoising and SR.
To that end, we extend our positional encoding from 2D to 3D and encode the temporal domain as well as the spatial domain. Instead of stacking two spatial PEs (one for each coordinate),  we stack three PEs (adding temporal encoding). 
Since in the case of a video, the model needs to implicitly represent a full video and not a single frame, we increase the model's capacity. We change the model to have 6 levels instead of 5 and double the representation depth.
An important observation is that other than increasing the model's capacity, PIP does not use any 3D building blocks (such as 3D convolutional layers or tri-linear interpolation), or any additional loss and regularization as done in \cite{chen2021learning, lu2021}. 

We compare 3D-PIP to 2D-PIP (frame-by-frame) to various image prior based methods such as 2D-DIP (frame-by-frame), 3D-DIP \cite{chen2021learning} and "Deep Video Prior" \cite{lei2020blind}, as well as implicit model based models such as SIREN \cite{sitzmann2019siren} and Gaussian activations \cite{ramasinghe2022beyond}.
We trained all video models for $5K$ iterations on all frames.
We tested them on 10 videos from the DAVIS dataset \cite{DAVIS}. Table~\ref{tab:video_denoising_image_prior} summarizes the results for all image prior based methods and Table~\ref{tab:video_denoising_implicit} summarizes the results for all implicit models.
In addition to PSNR, we also report SSIM-3D \cite{6466936} to measure temporal consistency. For visual results, please refer to the project \href{https://nimrodshabtay.github.io/PIP/}{website}.
2D-DIP and 2D-PIP have comparable results. 3D-DIP and DVP have better temporal consistency compared to 2D models but lack reconstruction quality. Our 3D-PIP outperforms all other models both in terms of temporal consistency and PSNR (+3dB for denoising, +0.5dB for SR).
3D-PIP achieves that with $\times6$ fewer parameters compared to 3D-DIP.
3D-PIP also outperforms all INR methods by at least 2-3dB in PSNR for all restoration tasks and achieves better temporal consistency.

PIP's high reconstruction performance can be attributed to the fact that it gets better temporal consistency from the FF temporal encoding in addition to the spatial encoding, which is represented in a frequency range that facilitates it to represent rapid movements as well as static backgrounds. The temporal consistency embedded in the input encoding allows PIP reconstructs much better than other image-prior methods which do not have any temporal consistency (frame-by-frame methods) or whose limited consistency stems from the 3D convolution filter size. In addition, the multi-scale architecture allows it to keep high details for each frame alongside the temporal consistency.

Moreover, when compared to other implicit models, PIP outperforms them as unlike PIP they rely on a single scale architecture, which degrades their performance. This is especially true in the case of spatial SR where all the implicit models fail since they are optimized on a single scale.

To conclude, the combination of adding FF temporal encoding and the multi-scale architecture allows PIP to output a high quality and consistent reconstruction output.

\vspace{-5pt}
\section{Conclusions}
\label{sec:conclusions}
\vspace{-5pt}
In this paper we have shed a new light on DIP, treating it as an implicit model mapping shifted noise-blocks to RGB values.
We have shown that Fourier features that are popular in implicit models can be used as a replacement for the noise inputs in DIP.
Furthermore, when using PE, the convolutional layers can be replaced with MLPs, as commonly done in implicit models.
We showed that the PE's image prior is very similar to the CNN's image prior, both in terms of quantitative performance (PSNR) and the visual appearance.
Like DIP, we showed that PIP first fits the low-frequencies, providing an explanation on how it operates.

One limitation of PIP is that it uses the same amount of frequencies for all images. An adaptation to the frequencies in a given target image may improve the reconstruction significantly in some cases \cite{hertz2021sape,lindell2021bacon}. 
In future work, we may explore selecting the input frequencies based on the input image content, e.g. based on BACON \cite{lindell2021bacon}.
The frequencies can also be spatially adapted, i.e., different sets of frequencies may be employed for different regions in the image as done in SAPE \cite{hertz2021sape}. For example, in the baby image in Fig.~\ref{fig:sr_x4_x8} it would be beneficial to use lower frequencies for the face and higher for the hat. Smartly choosing the frequencies may also help to mitigate another limitation, e.g., the need for early stopping, which is also required for DIP.

Finally, some insights from DIP and PIP might help improve implicit models, e.g., using multi-scale processing in a U-Net architecture.
The PE image prior that we observed might explain the ability of NeRF to interpolate between sparse sets of input views.
It can also explain NeRF's ability to learn from noisy inputs, e.g. as exhibited by RAW-NeRF \cite{mildenhall2022nerf}. Our results hint that this ability of NeRF is not just due to averaging samples from different views but also due to the impact of PE that generates well-behaved images.
Finally, we demonstrated that PIP can be used for temporal signals (video denoising), where DIP based methods and INR models struggles.

\section*{Acknowledgements}
This work was supported by the European research council under Grant ERC-StG 757497. This work was supported by Tel Aviv University Center for AI and Data Science (TAD).

{\small
\bibliographystyle{ieee_fullname}
\bibliography{main}
}

\clearpage

\appendix
\part*{Appendix} 

\section{Proof of Proposition~\ref{prop:PIP_FF_relation}}
\label{sec:prop1_proof}

\setcounter{proposition}{0}
\begin{proposition}
For $E$ being the $\ell_2$ loss and $f(z) = h*z$, where $h$ is a convolution kernel of size $r$, problem (2) is equivalent to an element-wise optimization with Fourier features. 
\end{proposition}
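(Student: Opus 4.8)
The plan is to carry the whole optimization into the Fourier domain, where the convolution becomes a pointwise multiplication and the reconstructed image becomes an explicit linear combination of Fourier modes evaluated at the pixel coordinates --- which is precisely the functional form produced by a single linear layer acting on the Fourier-feature vector $\gamma(\vvec)$ of \eqref{eq:ff}.

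First I would rewrite the linear DIP map $x = h * z$ via the discrete Fourier transform. By the convolution theorem, $\hat{x} = \hat{h} \odot \hat{z}$, where $\hat{\cdot}$ denotes the DFT and $\odot$ the pointwise (Hadamard) product. Since $E$ is the $\ell_2$ loss, Parseval's identity gives $\|x - x_0\|_2^2 = \frac{1}{N}\|\hat{x} - \hat{x}_0\|_2^2$, so the objective of \eqref{eq:reparametrization} becomes $\sum_f |\hat{h}(f)\,\hat{z}(f) - \hat{x}_0(f)|^2$, a sum that decouples completely across frequencies. Optimizing over the kernel $h$ is therefore a collection of independent, per-frequency (element-wise) scalar least-squares problems, each solved by $\hat{h}(f) = \hat{x}_0(f)/\hat{z}(f)$ whenever $\hat{z}(f)\neq 0$, which holds almost surely for a random $z$.

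Next I would reinterpret the result back in the spatial domain. Taking the inverse DFT, the value at a coordinate $\vvec$ is $x(\vvec) = \sum_f \hat{h}(f)\hat{z}(f)\, e^{\,i 2\pi f\cdot\vvec/N}$; grouping each frequency with its conjugate and using $e^{i\theta}=\cos\theta+i\sin\theta$ recasts this as a \emph{real} linear combination of the terms $\cos(f\cdot\vvec)$ and $\sin(f\cdot\vvec)$, i.e. exactly the entries of $\gamma(\vvec)$. On the other side, the element-wise model applies one learnable linear map $W$ identically at every pixel, $x(\vvec)=W\gamma(\vvec)$, and minimizing $\sum_\vvec \|W\gamma(\vvec) - x_0(\vvec)\|_2^2$ is --- because the Fourier modes are orthogonal over the coordinate grid --- again a per-frequency least-squares problem in the entries of $W$. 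To close the argument I would exhibit the bijection between the two parameterizations: the fixed nonzero factors $\hat{z}(f)$ let the DIP coefficient $\hat{h}(f)\hat{z}(f)$ range over all of $\mathbb{C}$ as $\hat{h}(f)$ does, matching the free coefficients that $W$ assigns to $\cos(f\cdot\vvec)$ and $\sin(f\cdot\vvec)$. Hence both optimizations range over the same family of output images and share the same minimizer.

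The main obstacle is bookkeeping the degrees of freedom so that the two problems are genuinely equivalent rather than merely analogous: a kernel of finite size $r$ has only $r$ free taps, so $\hat{h}$ is confined to an $r$-dimensional subspace of frequency responses, and this subspace must be aligned with the chosen frequency set $\{f_0,\dots,f_{m-1}\}$ of the Fourier features (the $r$ taps corresponding to the cosine/sine features under the appropriate count). I would also need to fix the circular-convolution convention implicit in the DFT, as opposed to linear convolution with explicit boundary handling, and to restrict attention to the frequencies where $\hat{z}(f)\neq 0$ so that the reparametrization $\hat{h}(f)\mapsto\hat{h}(f)\hat{z}(f)$ is invertible. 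Once the frequency sets are matched, the Parseval step and the orthogonality of the Fourier modes make the remaining algebra routine.
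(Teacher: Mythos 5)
Your proposal follows essentially the same route as the paper's proof: pass to the DFT so the convolution becomes a pointwise product, use the conjugate symmetry of real signals to rewrite the inverse transform as a real combination of $\cos(f_l\cdot k)$ terms, and identify the per-frequency products $\widetilde{h}_l\widetilde{n}_l$ with the independently optimized Fourier-feature coefficients. If anything you are more careful than the paper on the one delicate point --- that a kernel with only $r$ taps confines $\widetilde{h}$ to an $r$-dimensional subspace, so the two parameterizations only match exactly once the kernel support and the feature frequency set are aligned --- which the paper's proof passes over silently.
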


\begin{proof}
We are examining the case of 1D denoising using a single linear layer without any activation.
Input is as described in DIP, A uniform noise from $n \sim U[0, \frac{1}{10}]$. 

The following optimization is performed: 
\begin{equation}
\label{eq:conv_eq_optimization}
\| h * n - y\|_{2}^{2}, \end{equation}
where h is our spatial kernel, n is the input noise and y is the denoised image. Following \cite{Mathieu2013Fast}, we can formulate the spatial convolutions into an element-wise product in the frequency domain:
\begin{equation}
\|\mathcal{F}^{-1} \{\mathcal{F}(h) \cdot \mathcal{F}(n)\} - y\|_{2}^{2}.
\end{equation}
We will turn to focus on the left term. Denote by $F_N$ the $N\times N$ Discrete Fourier Transform (DFT) matrix:
\begin{equation}   \label{eq:FN-def}
  F_N
  = \left( \begin{array}{ccccc}
        1 &  1 & 1 & \ldots & 1 \\
        1 & \omega_N^{1} & \omega_N^{2} & \ldots & \omega_N^{N-1} \\
        1 & \omega_N^{2} & \omega_N^{4} & \ldots & \omega_N^{2(N-1)} \\
        \vdots&\vdots&\vdots& &\vdots\\
        1 & \omega_N^{N-1} & \omega_N^{2(N-1)} &\ldots&\omega_N^{(N-1)^2} 
    \end{array} \right),
\end{equation}
where, 
\[
    \omega_N^{-l} = e^{-i l 2\pi/N} 
                  = \overline{ e^{i l 2\pi/N} }
                  = \overline{\omega_N}^l.
\]
Thus, we may write the Fourier transformation in a matrix form as follows, $ \widetilde{n} = \mathcal{F}(n) = $
\begin{equation}
   \left( \begin{array}{ccccc}
        1 &  1 & 1 & \ldots & 1 \\
        1 & \omega_N^{1} & \omega_N^{2} & \ldots & \omega_N^{N-1} \\
        1 & \omega_N^{2} & \omega_N^{4} & \ldots & \omega_N^{2(N-1)} \\
        \vdots&\vdots&\vdots& &\vdots\\
        1 & \omega_N^{N-1} & \omega_N^{2(N-1)} &\ldots&\omega_N^{(N-1)^2} 
    \end{array} \right)     
    \left( \begin{array}{c}
        n_0 \\ n_1 \\ n_2 \\ \vdots \\ n_{N-1}
    \end{array} \right).
\end{equation}
and similarly, $\widetilde{h}  = \mathcal{F}(h) =$
\begin{equation}
     \left( \begin{array}{ccccc}
        1 &  1 & 1 & \ldots & 1 \\
        1 & \omega_N^{1} & \omega_N^{2} & \ldots & \omega_N^{N-1} \\
        1 & \omega_N^{2} & \omega_N^{4} & \ldots & \omega_N^{2(N-1)} \\
        \vdots&\vdots&\vdots& &\vdots\\
        1 & \omega_N^{N-1} & \omega_N^{2(N-1)} &\ldots&\omega_N^{(N-1)^2} 
    \end{array} \right)     
    \left( \begin{array}{c}
        h_0 \\ \vdots \\ h_{r-1} \\ 0 \\ \vdots \\ 0
    \end{array} \right).
\end{equation}
Notice that since $F_N$ is unitary and $n$ is initialized as i.i.d. random Gaussian with zero mean, then $\tilde{n}$ is also i.i.d. random Gaussian with zero mean. Note also that $\tilde{h}$ is basically a (random) linear combination of Fourier features. Now, note that $ \hat{y}  \triangleq \mathcal{F}^{-1}(\widetilde{n} \cdot \widetilde{h}) =$
\begin{equation} \tiny
    \frac{1}{N} \small \left( \begin{array}{ccccc}
        1 &  1 & 1 & \ldots & 1 \\
        1 & \overline{\omega_N^{1}} & \overline{\omega_N^{2}} & \ldots & \overline{\omega_N^{N-1}} \\
        1 & \overline{\omega_N^{2}} & \overline{\omega_N^{4}} & \ldots & \overline{\omega_N^{2(N-1)}} \\
        \vdots&\vdots&\vdots& &\vdots\\
        1 & \overline{\omega_N^{N-1}} & \overline{\omega_N^{2(N-1)}} &\ldots&\overline{\omega_N^{(N-1)^2}}
    \end{array} \right)     
    \left( \begin{array}{c}
        \widetilde{h_0} \cdot \widetilde{n_0} \\ \widetilde{h_1} \cdot \widetilde{n_1} \\ \widetilde{h_2} \cdot \widetilde{n_2} \\ \vdots \\ \tiny \widetilde{h_{N-1}} \cdot \widetilde{n_{N-1}} \small
    \end{array} \right)    
\end{equation}

Focusing at the $k^{th}$ row, we have that 
\begin{equation}
    \hat{y}(k) = \frac{1}{N}\sum_{l=0}^{N-1} \overline{\omega_N^{kl}} (\widetilde{h_l} \cdot \widetilde{n_l}).
\end{equation}

Since n and h are real signals, both are symmetric in the frequency domain. (namely $\widetilde{n}[k]=\widetilde{n}[N-k]$). and $\omega_N^{kl}$ is periodic N, we can write: (*up to N odd/even)
\begin{equation}
    \hat{y}(k) = \frac{2}{N}(\widetilde{h_0} \cdot \widetilde{n_0}) + \frac{1}{N}\sum_{l=1}^{N/2-1} (\overline{\omega_N^{kl}} + \overline{\omega_N^{-kl}}) (\widetilde{h_l} \cdot \widetilde{n_l})
\end{equation}
Which is equivalent to taking only the real part in the spatial domain (of a signal with a length of $\frac{N}{2}$

Using the relations between complex exponents and cosine we get the following:
$\frac{2\pi}{N} \cdot m = f_m$ we get that we can re-write the $k^{th}$ row as follows:
\begin{equation}
    \hat{y}(k) = \frac{2}{N}\sum_{l=1}^{N/2-1} cos(f_l \cdot k) \cdot (\widetilde{h_l} \cdot \widetilde{n_l})
\end{equation}

From the equation, one can see that the left term in Equation \eqref{eq:conv_eq_optimization} is equivalent to a sum of cosines that element-wise multiplied by $\hat{h_l} = \frac{1}{N} \widetilde{h_l} \cdot \widetilde{n_l}$. In DIP, $\widetilde{h_l}$ is the term that is being optimized (through the entries of $h$). In PIP, we directly optimize $\hat{h_l}$ and treats its entries independently. 

\end{proof}

\begin{figure}[]
    \centering
    \resizebox{\columnwidth}{!}{
    \includegraphics[]{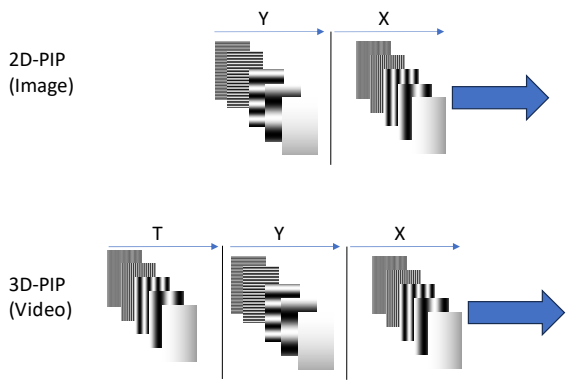}    
    }
    \caption{\textbf{PIP input encoding.} For image restoration we used a 2D encoding, concatenating vertical and horizontal FF-PE. when PIP is extended to video, we concatinate a temporal FF encoding based on all the frames (normalized from 0 (first frame) to 1 (last frame).}
    \label{fig:pip_encoding}    
\end{figure}

\begin{figure}
    \centering
    \includegraphics[width=0.6\columnwidth]{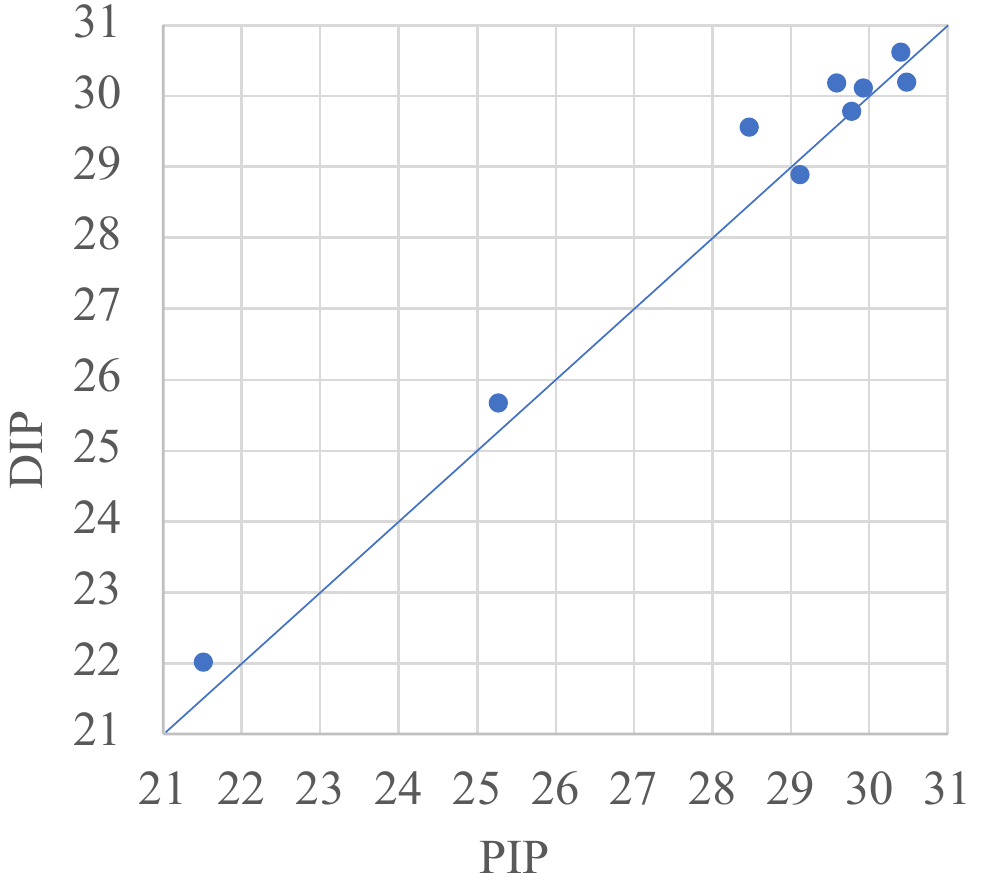}
    \caption{DIP and PIP correlation on Image denoising ($\sigma=25$), the measured "the coefficient of determination" ($R^2$) is 0.99 suggesting PIP and DIP has similar image prior.}
    \label{fig:dip_pip_corr}
\end{figure}

\begin{figure}[]
    \vspace{-0.3in}
    \resizebox{\columnwidth}{!}{
    \begin{tabular}{lccc}
         \multirow{5}{*}{DIP} &
         \multirow{5}{*}{\includegraphics[width=0.25\linewidth]{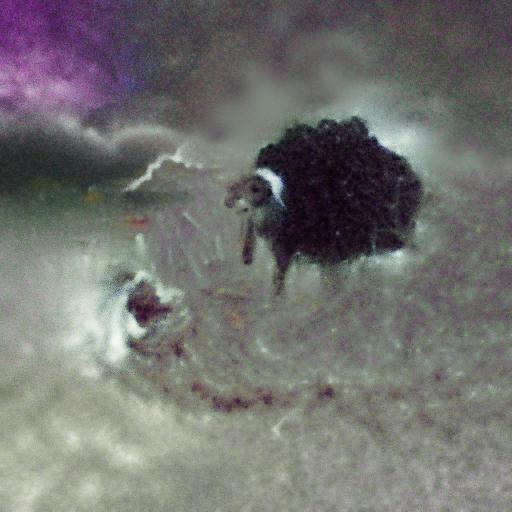}} & 
         \multirow{5}{*}{\includegraphics[width=0.25\linewidth]{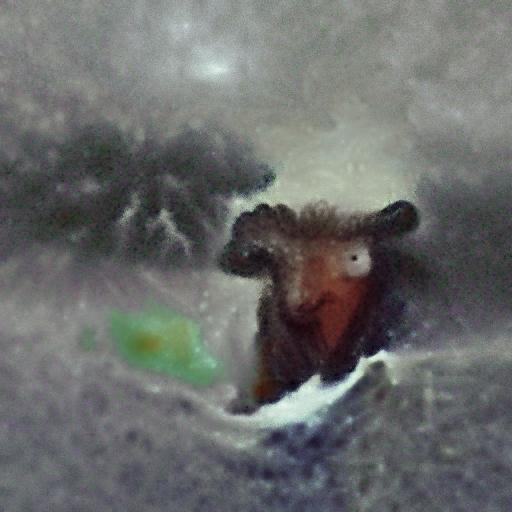}} & 
         \multirow{5}{*}{\includegraphics[width=0.25\linewidth]{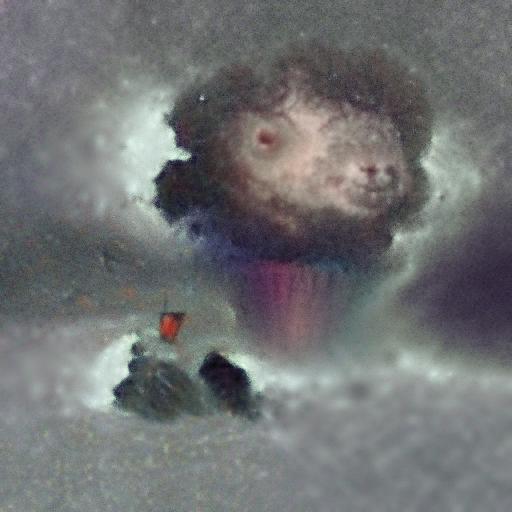}} \\[0.8in]
         \multirow{5}{*}{PIP} &
         \multirow{5}{*}{\includegraphics[width=0.25\linewidth]{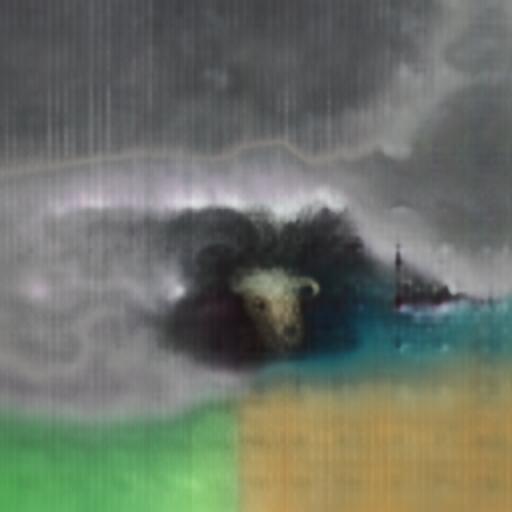}} & 
         \multirow{5}{*}{\includegraphics[width=0.25\linewidth]{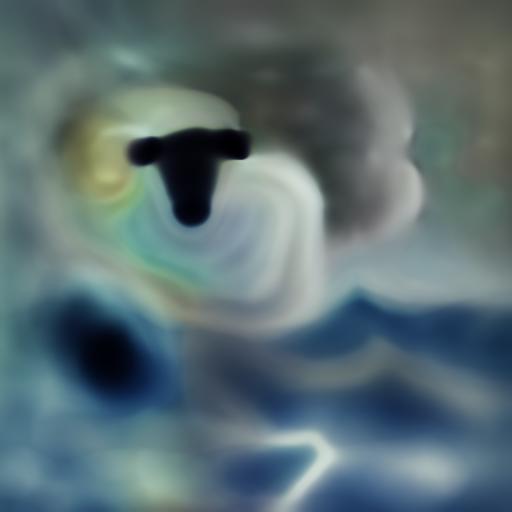}} & 
         \multirow{5}{*}{\includegraphics[width=0.25\linewidth]{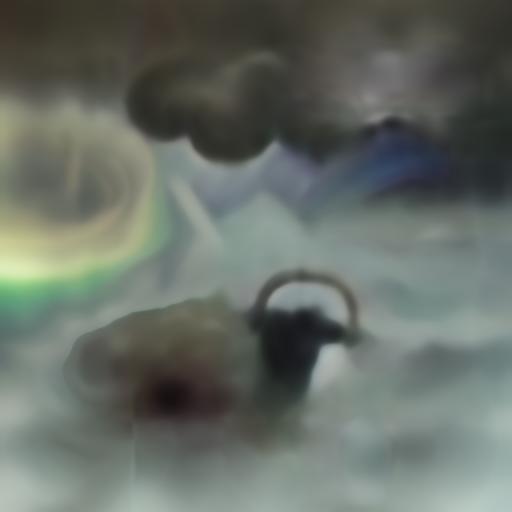}} \\[0.8in]
    \end{tabular}
    }
    \caption{\textbf{CLIP inversion.} Generated images for the description: `A dark oil painting of a sheep in the middle of the sea'.}
    \label{fig:clip_2}    
\end{figure}

\begin{figure}[]
    \resizebox{\columnwidth}{!}{
    \begin{tabular}{cccc}
    Org & noisy & DIP\cite{ulyanov2018deep} (meshgrid) & PIP\\
    \includegraphics[width=0.25\textwidth]{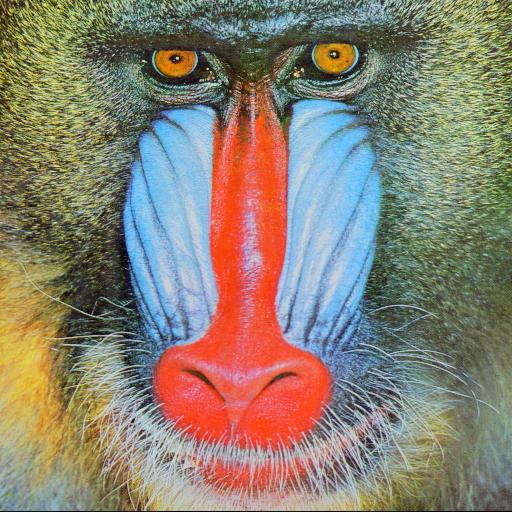} &
    \includegraphics[width=0.25\textwidth]{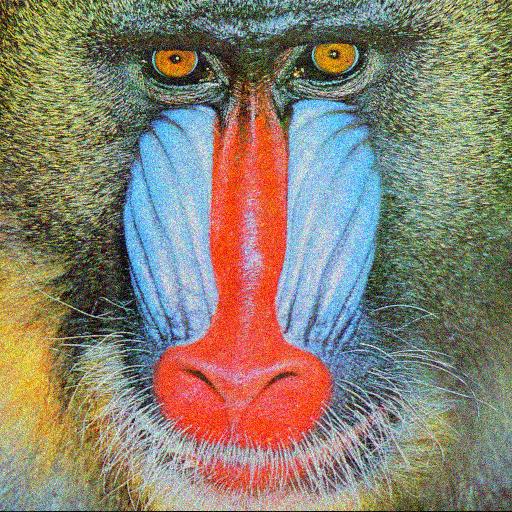} &
    \includegraphics[width=0.25\textwidth]{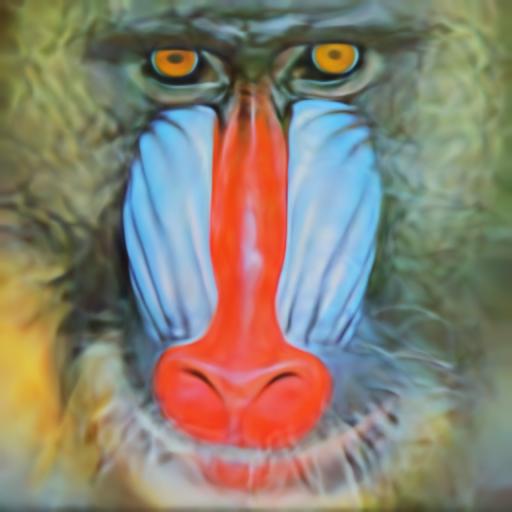} &
    \includegraphics[width=0.25\textwidth]{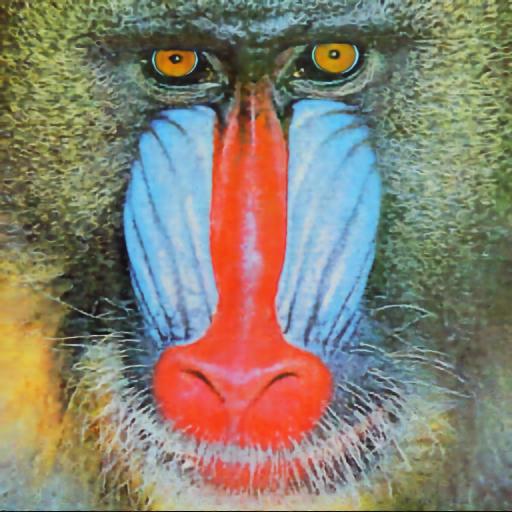}\\
    \includegraphics[width=0.25\textwidth]{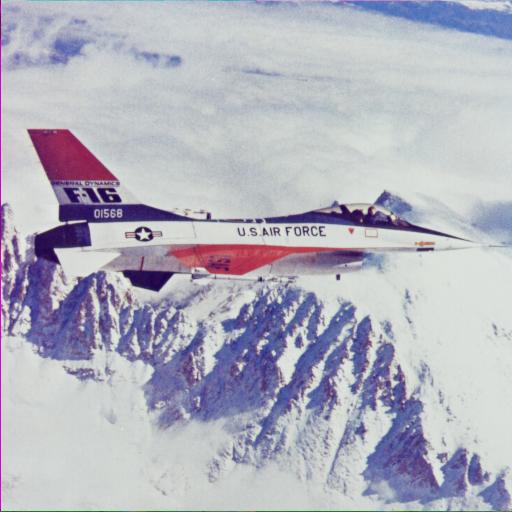} &
    \includegraphics[width=0.25\textwidth]{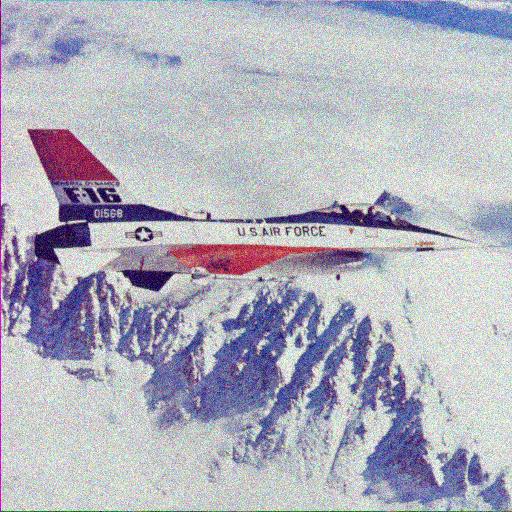} &
    \includegraphics[width=0.25\textwidth]{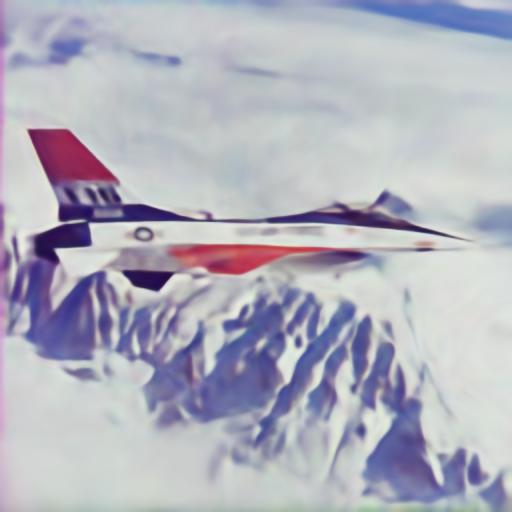} &
    \includegraphics[width=0.25\textwidth]{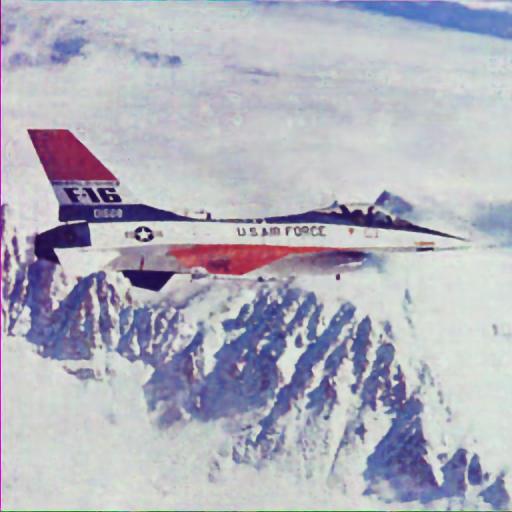}\\
    \includegraphics[width=0.25\textwidth]{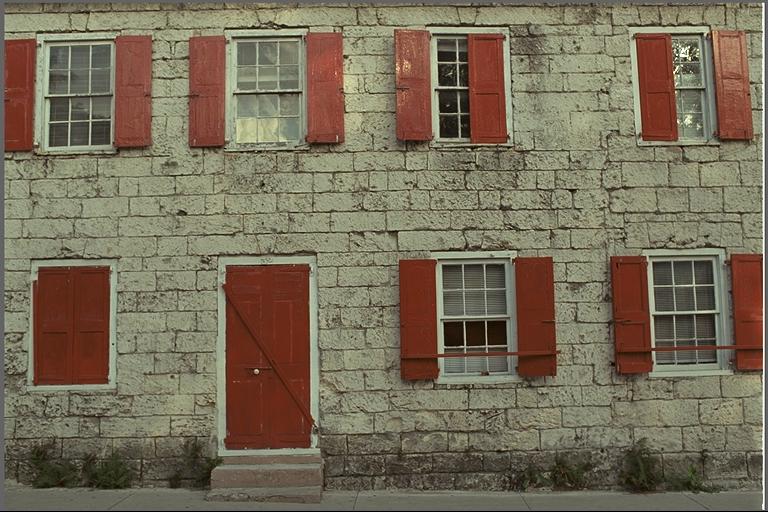} &
    \includegraphics[width=0.25\textwidth]{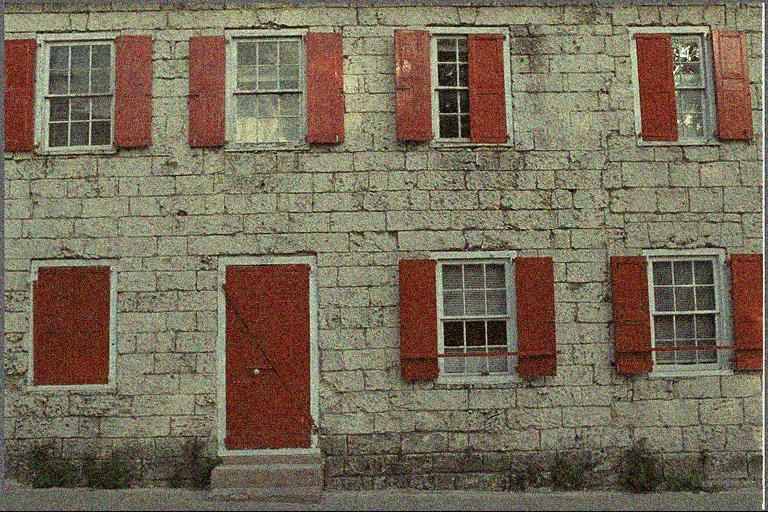} &
    \includegraphics[width=0.25\textwidth]{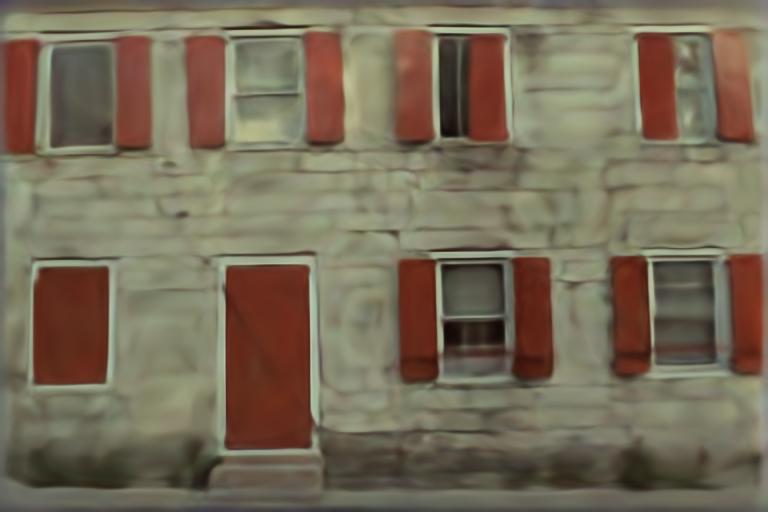} &
    \includegraphics[width=0.25\textwidth]{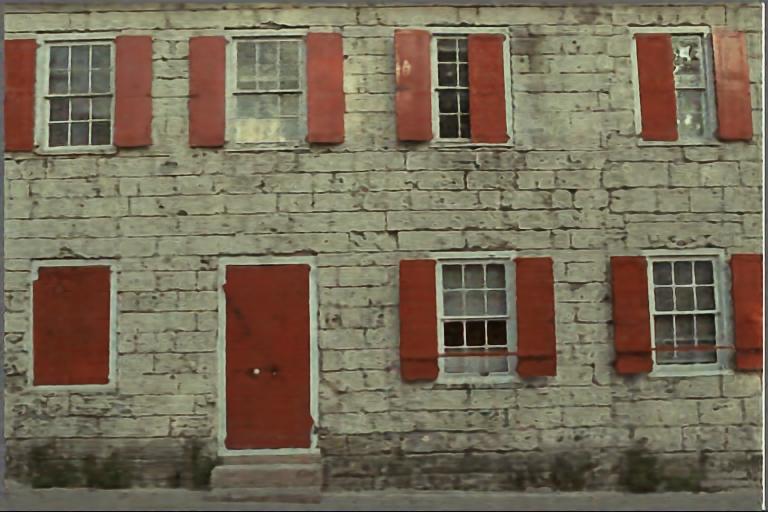}\\
    \includegraphics[width=0.25\textwidth]{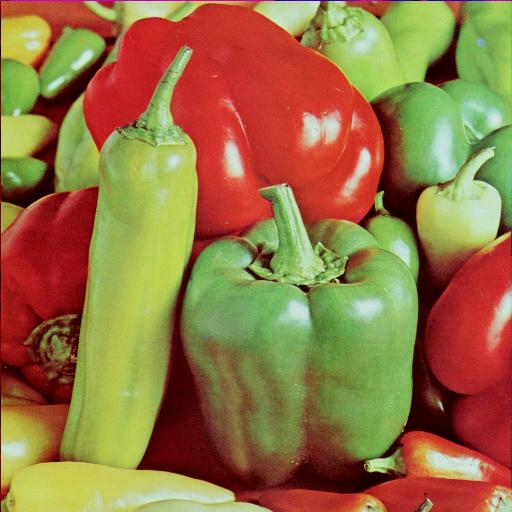} &
    \includegraphics[width=0.25\textwidth]{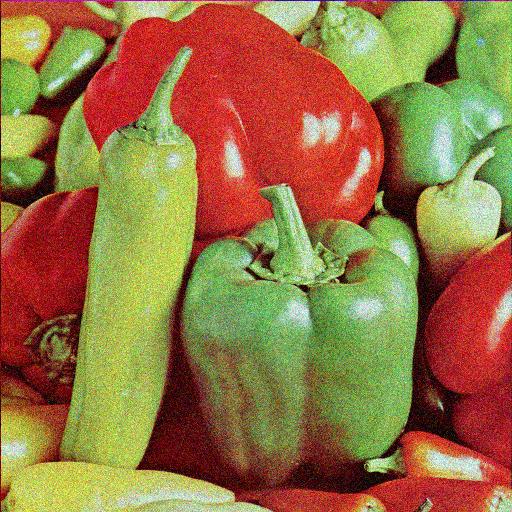} &
    \includegraphics[width=0.25\textwidth]{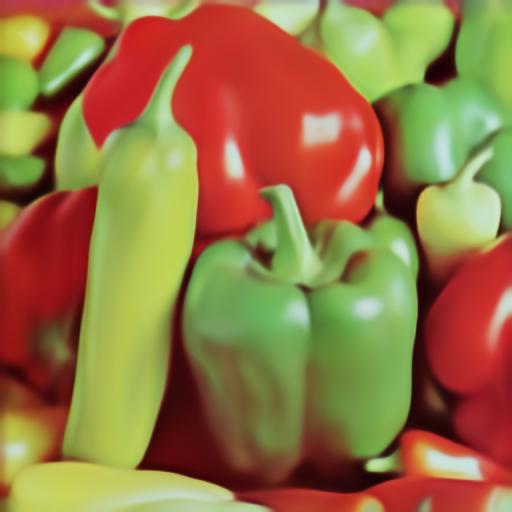} &
    \includegraphics[width=0.25\textwidth]{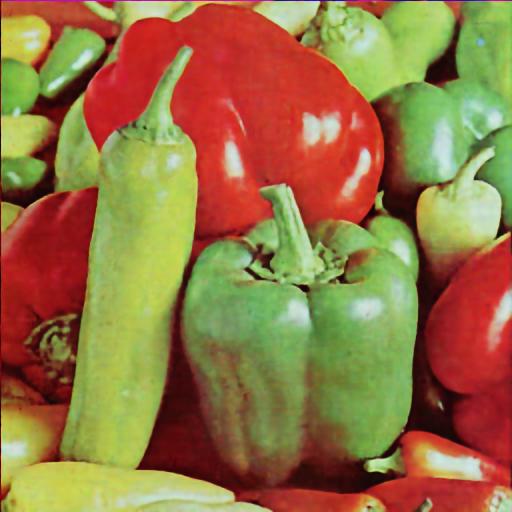}\\
    \end{tabular}
    }
    \caption{\textbf{Image denoising ($\sigma=25$) with meshgrid.} Comparison of PIP to DIP with meshgrid encoding. Note that for DIP with meshgrid as input the results are over-smoothed and the model is unable to recover details.}
    \label{fig:meshgrid_sup_mat}
\end{figure}

\begin{figure}[ht]
    \resizebox{\columnwidth}{!}{
    \begin{tabular}{cccc}
    GT & Masked image & DIP & PIP (ours)\\
    \includegraphics[width=0.25\textwidth]{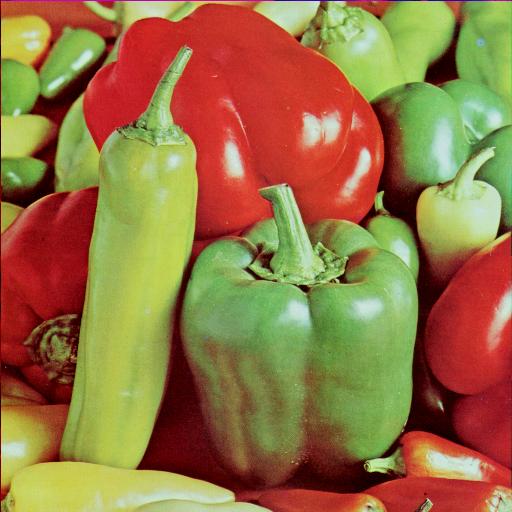} &
    \includegraphics[width=0.25\textwidth]{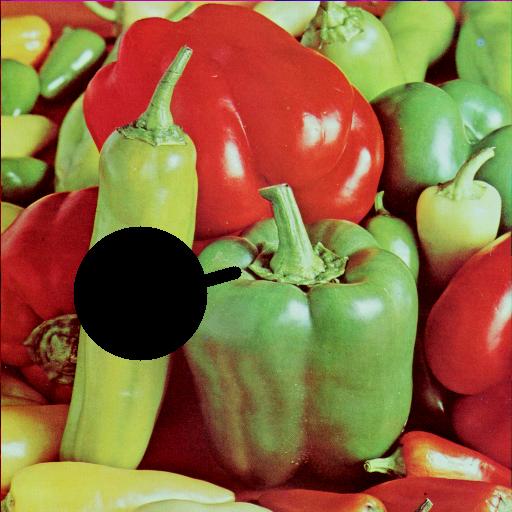} &
    \includegraphics[width=0.25\textwidth]{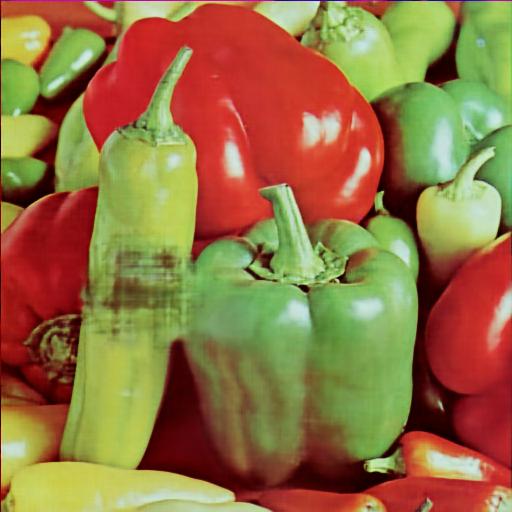} &
    \includegraphics[width=0.25\textwidth]{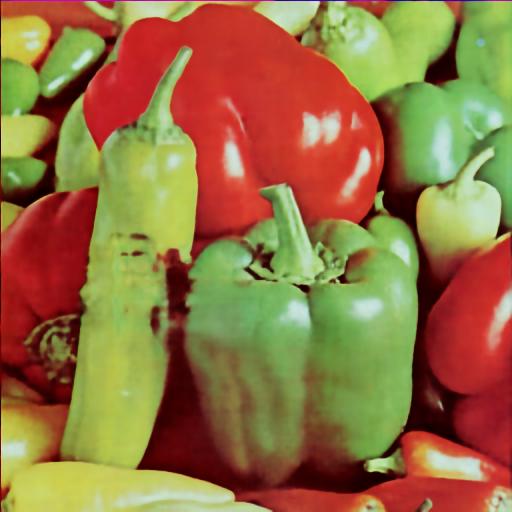} \\
    \includegraphics[width=0.25\textwidth]{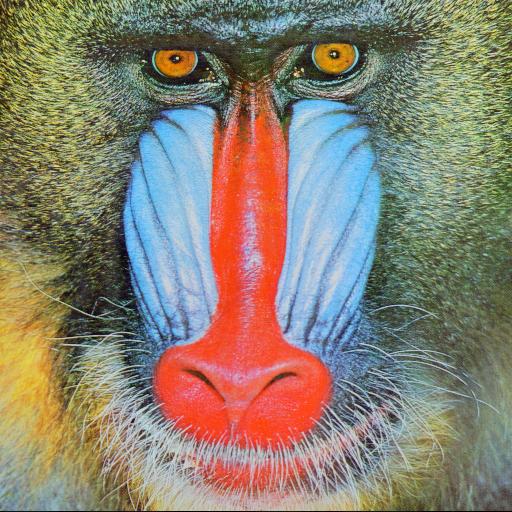}&
    \includegraphics[width=0.25\textwidth]{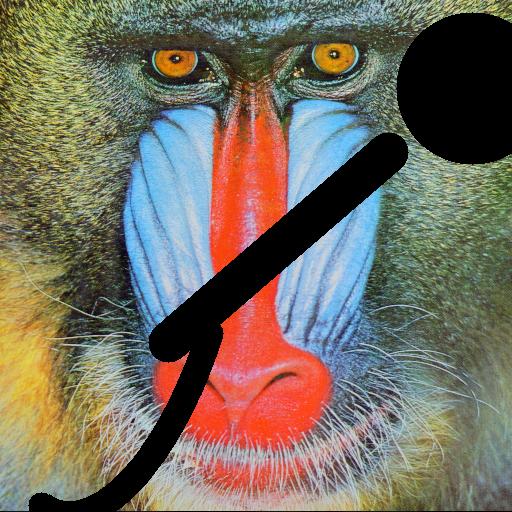} &
    \includegraphics[width=0.25\textwidth]{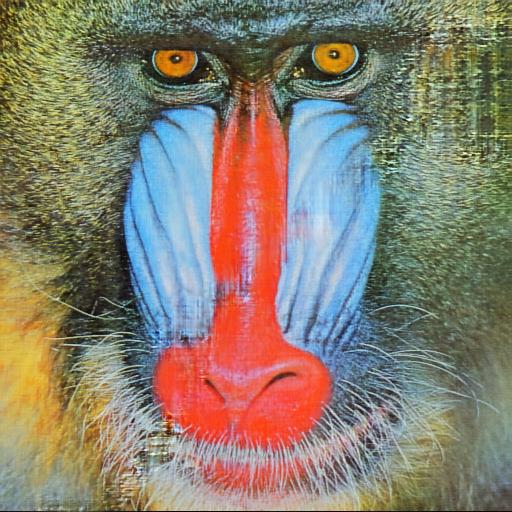} &
    \includegraphics[width=0.25\textwidth]{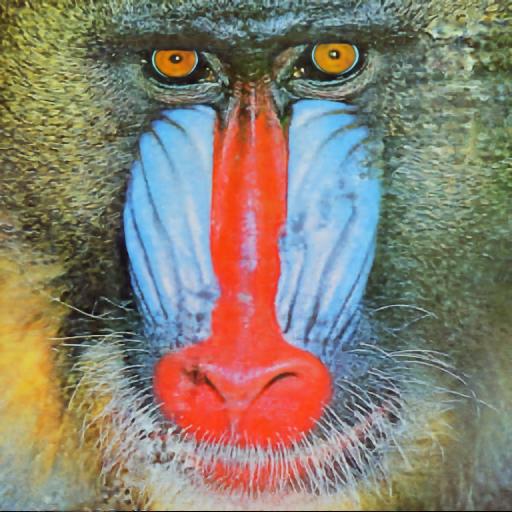} \\
    \includegraphics[width=0.25\textwidth]{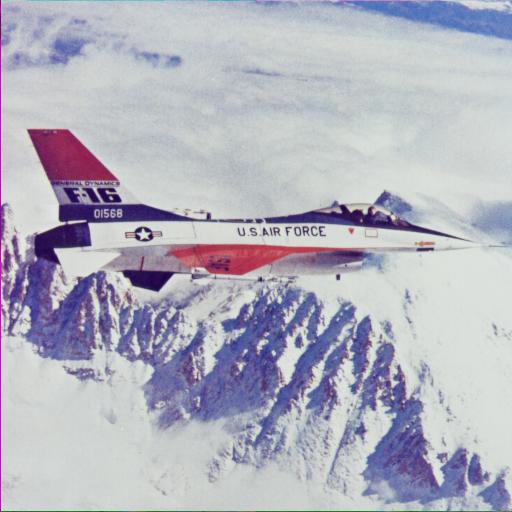} &
    \includegraphics[width=0.25\textwidth]{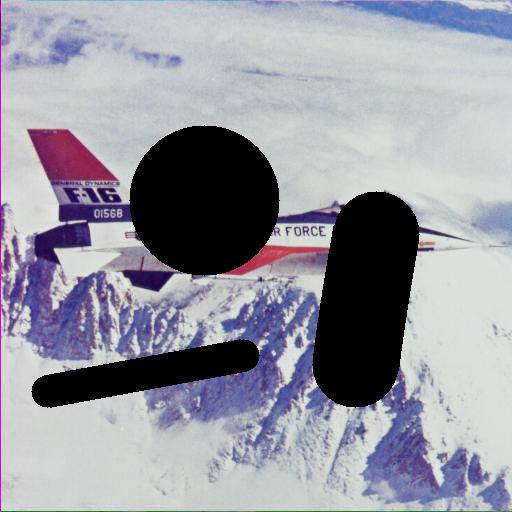} &
    \includegraphics[width=0.25\textwidth]{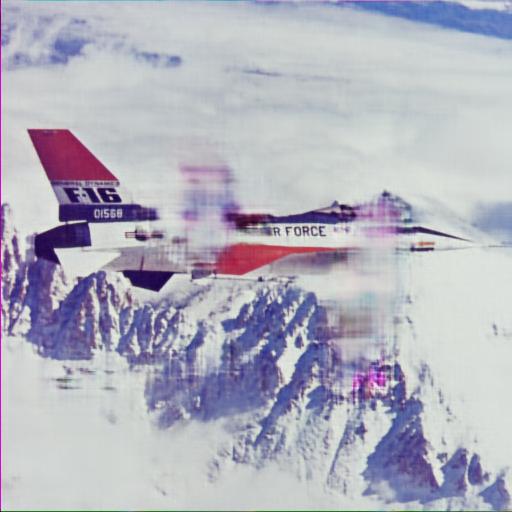} &
    \includegraphics[width=0.25\textwidth]{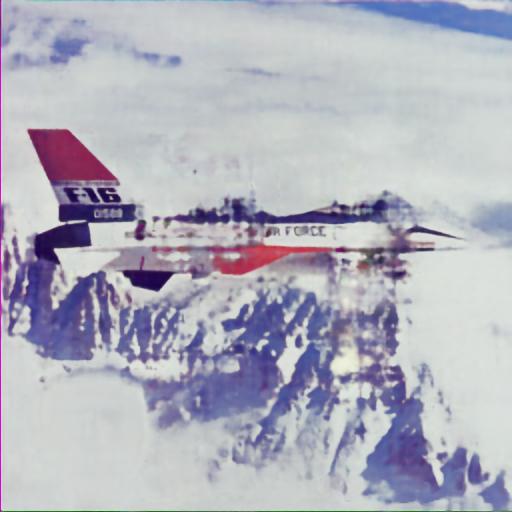} \\
    
    \end{tabular}
    }
    \caption{\textbf{Image inpainting.} DIP vs. PIP comparison }
    \label{fig:inpainting_sup}
    \vspace{-10pt}
\end{figure}

\begin{figure}[]
    \resizebox{\columnwidth}{!}{
    \begin{tabular}{ccc}
    Hazy & Double-DIP & Double-PIP (ours)\\
    \includegraphics[width=0.25\textwidth]{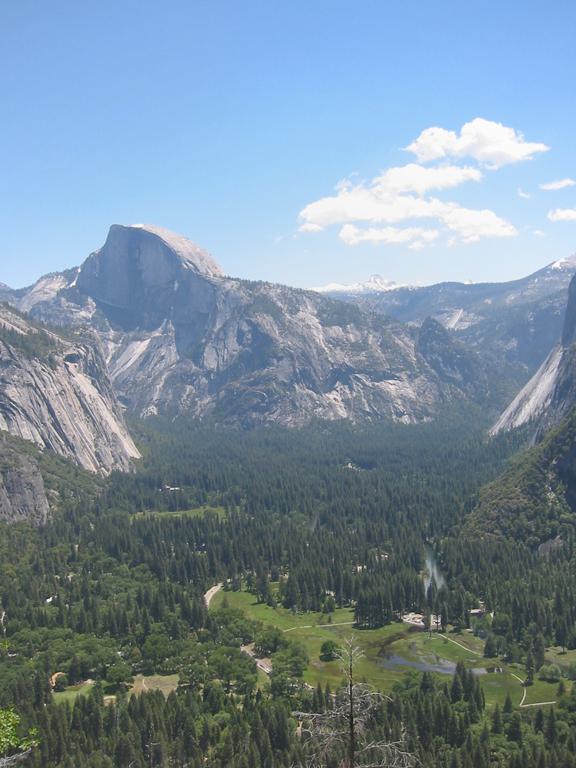} &
    \includegraphics[width=0.25\textwidth]{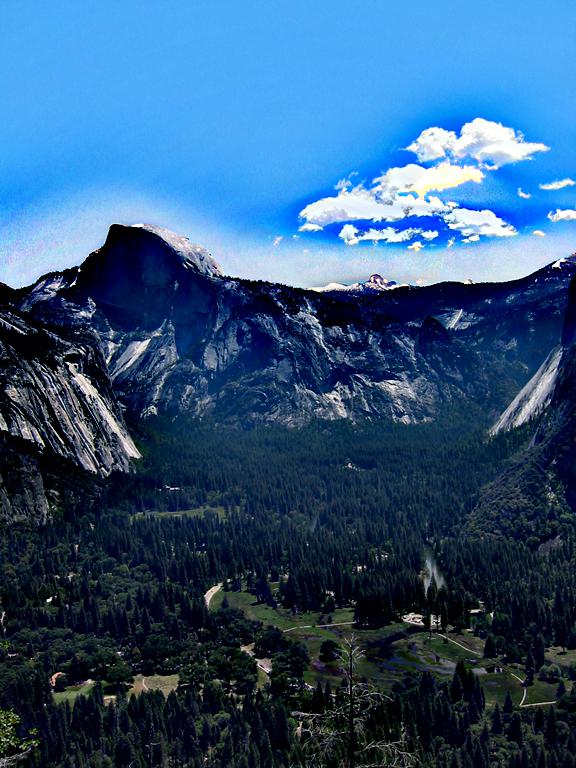} &
    \includegraphics[width=0.25\textwidth]{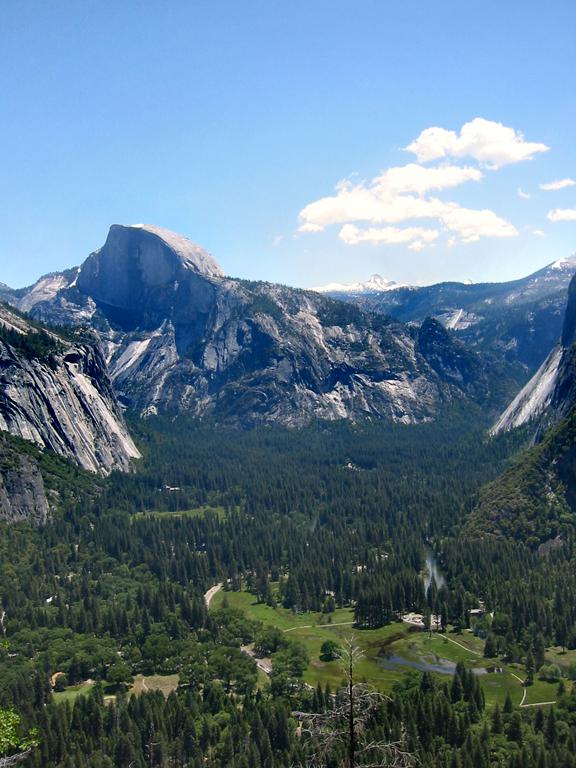} \\
    \includegraphics[width=0.25\textwidth]{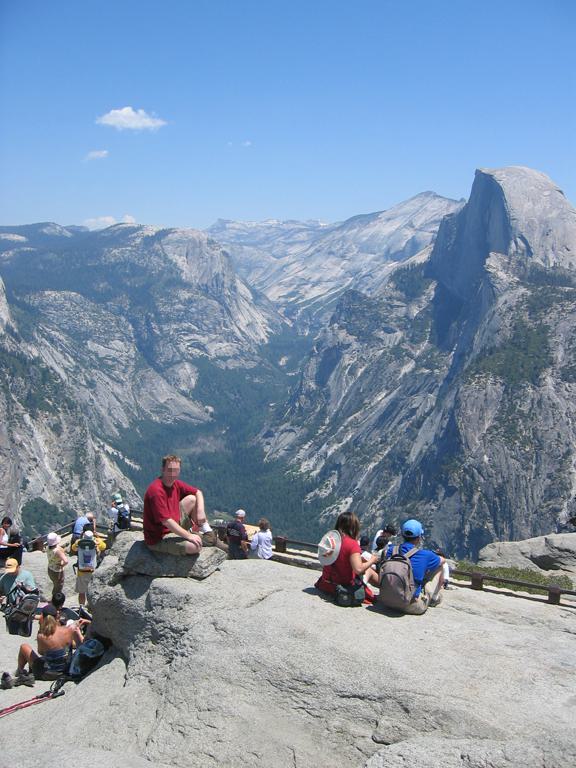} &
    \includegraphics[width=0.25\textwidth]{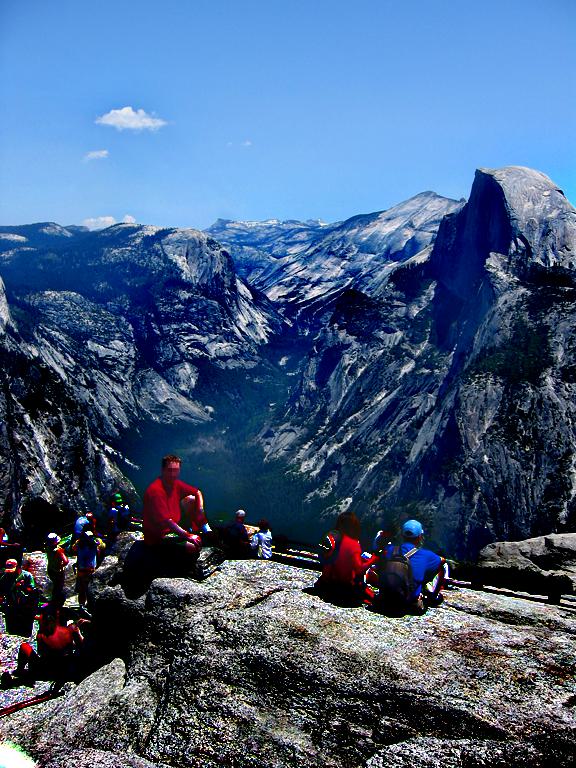} &
    \includegraphics[width=0.25\textwidth]{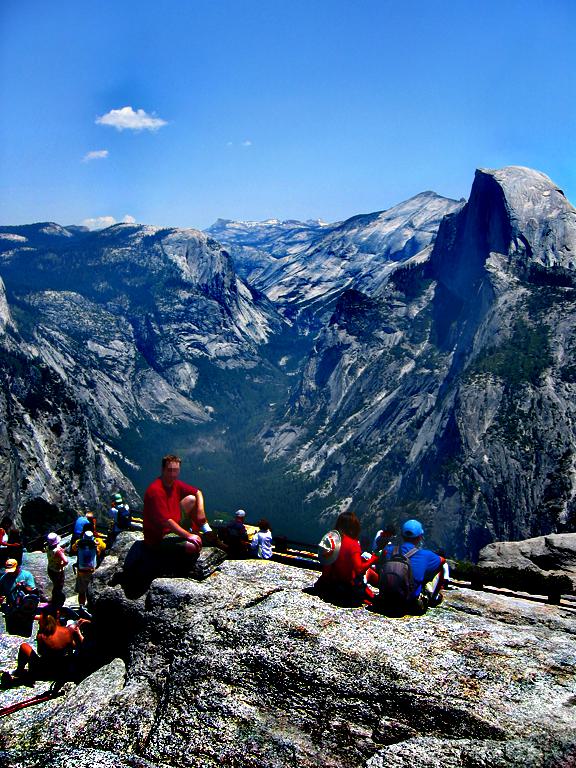} \\
    \includegraphics[width=0.25\textwidth]{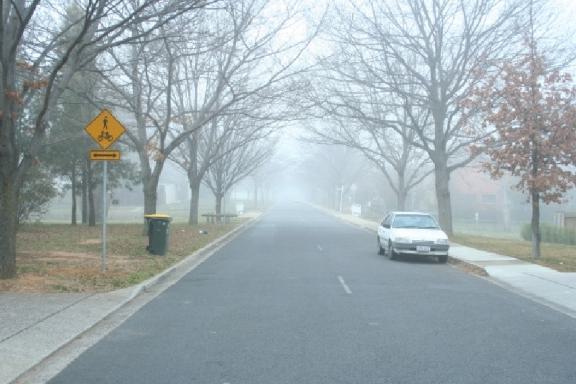} &
    \includegraphics[width=0.25\textwidth]{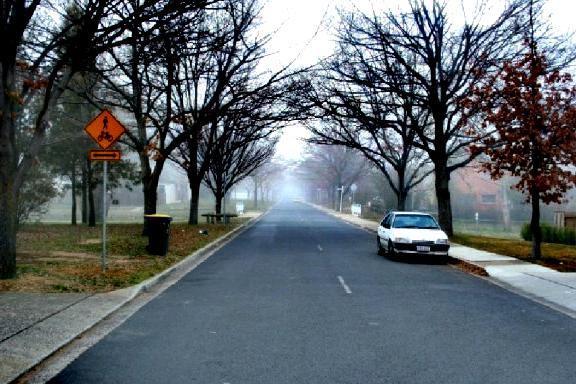} &
    \includegraphics[width=0.25\textwidth]{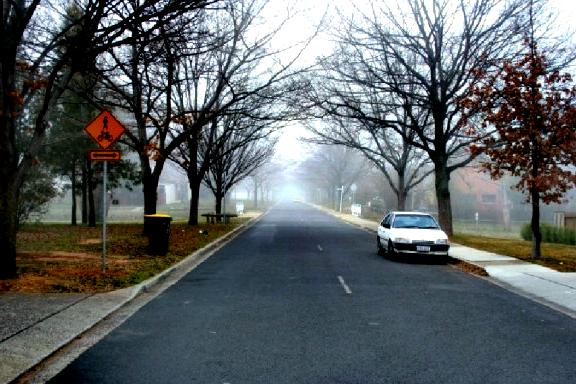} \\
    \includegraphics[width=0.25\textwidth]{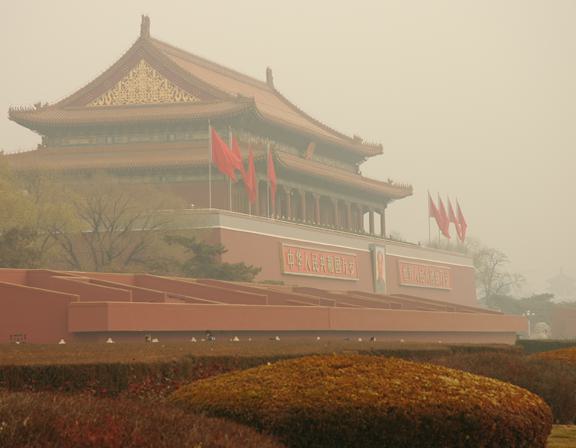} &
    \includegraphics[width=0.25\textwidth]{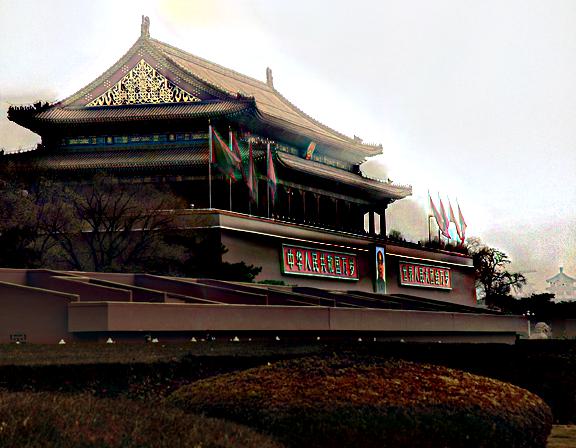} &
    \includegraphics[width=0.25\textwidth]{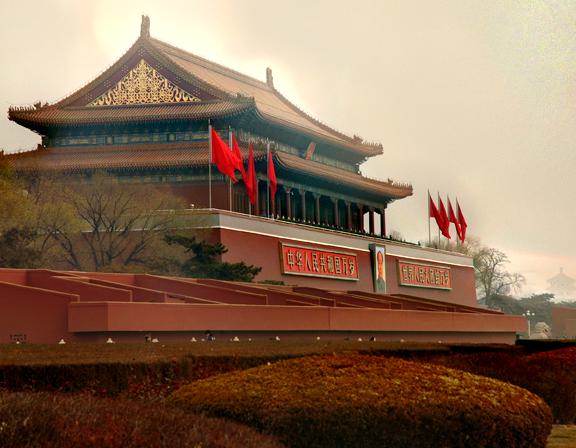} \\
     \end{tabular}
    }
    \caption{\textbf{Blind image dehazing.} Double-DIP and Double-PIP qualitative comparison}
    \label{fig:dehazing2}
\end{figure}

\end{document}